\documentclass{article}

\usepackage{microtype}
\usepackage{graphicx}
\usepackage{subcaption}
\usepackage{booktabs} 

\usepackage{hyperref}

\usepackage[accepted]{icml2026}

\usepackage{amsmath}

\usepackage{amssymb}
\usepackage{mathtools}
\usepackage{amsthm}

\usepackage{booktabs}
\usepackage{multirow}
\usepackage{colortbl}
\usepackage{xcolor}
\usepackage{graphicx}
\usepackage{subcaption}
\usepackage{enumitem}

\usepackage[capitalize,noabbrev]{cleveref}

\theoremstyle{plain}

\newtheorem{proposition}{Proposition}

\theoremstyle{definition}
\newtheorem{definition}{Definition}

\theoremstyle{remark}

\newtheorem{appendixPro}{Proposition}

\usepackage[textsize=tiny]{todonotes}

\icmltitlerunning{Olivia: Harmonizing Time Series Foundation Models with Power Spectral Density}

\begin{document}


\twocolumn[
  \icmltitle{Olivia: Harmonizing Time Series Foundation Models \\ with Power Spectral Density}



  \icmlsetsymbol{equal}{*}

  \begin{icmlauthorlist}
    \icmlauthor{Jingru Fei}{bbb}
    \icmlauthor{Kun Yi}{North,sss}
    \icmlauthor{Alex Xing Wang}{vvv}
    \icmlauthor{Qingsong Wen}{squ}
    \icmlauthor{Xiangxiang Zhu}{nnn}
    \icmlauthor{Wei Fan}{uuu}
  \end{icmlauthorlist}

  \icmlaffiliation{bbb}{Beijing Institute of Technology}
  \icmlaffiliation{North}{North China Institute of Computing Technology}
  \icmlaffiliation{sss}{State Information Center}
  \icmlaffiliation{uuu}{University of Auckland}
  \icmlaffiliation{squ}{Squirrel Ai Learning}
  \icmlaffiliation{nnn}{Northwest Polytechnical University}
  \icmlaffiliation{vvv}{Victoria University of Wellington}

  \icmlcorrespondingauthor{Kun Yi}{kunyi.cn@gmail.com}
  \icmlcorrespondingauthor{Wei Fan}{wei.fan@auckland.ac.nz}

  \icmlkeywords{Machine Learning, ICML}

  \vskip 0.3in
]



\printAffiliationsAndNotice{}  

\begin{abstract}

Time series foundation models rely on large-scale pretraining over diverse datasets across domains, yet their heterogeneity in temporal patterns could hinder the effectiveness of training and learning transferable time series representations. Inspired a fundamental concept, normalized power spectral density (PSD) in signal processing, we assume harmonizing datasets via PSDs in the spectral domain could reduce mismatches and enhance pretraining. We then go beyond the direct intractable minimization optimization and innovatively reformulate it as a principled \textit{harmonization} approach. Specifically, we propose \textit{Harmonizer}, a module that reshapes spectral structures and implicitly harmonizing PSDs across datasets, which theoretically corresponds to a shared reparameterization of second-order temporal correlations. Our theoretical analysis further reveals token interactions with Harmonizer can be efficiently mediated by a compact set of resonators, motivating a \textit{HarmonicAttention} design that performs self-attention in a low-dimensional interaction space. Then, we propose \textit{Olivia}, a novel time series foundation model built upon these harmonization mechanisms. Extensive experiments on two large-scale benchmarks (TSLib and GIFT-Eval) and extra 6 datasets from GluonTS, demonstrate Olivia consistently achieves state-of-the-art performance under zero-shot, few-shot, and full-shot forecasting scenarios. Our code is available at \href{https://github.com/TSTS13/Olivia}{Olivia}. 
\vspace{-6.5mm}

\end{abstract}

\section{Introduction}
\vspace{-0.5mm}

\begin{figure*}[!t]
    \centering
    \includegraphics[width=0.93\linewidth]{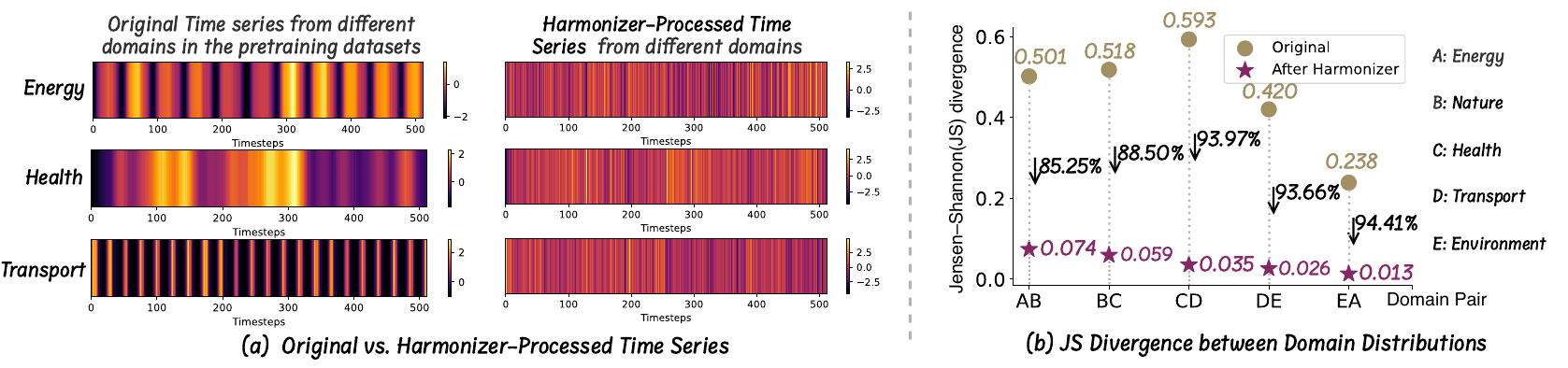}
    \vspace{-2.5mm}
    \caption{
    \textbf{(a):}
    Original and Harmonizer-processed time series from different domains.
    Raw time series from multiple domains (left) show distinct temporal patterns. After processing (right), patterns show more consistent across domains.
    \textbf{(b):} 
    Jensen–Shannon (JS) divergence between dataset-level normalized power spectral density distributions across different domain pairs.
    JS divergence is bounded in $[0,\ln 2]$ (approximately $[0,0.693]$), with larger values indicating greater spectral discrepancy.
    More details can be found in the Appendix~\ref{appendix_empirical_analysis}.}
    \label{fig:motivation}
    \vspace{-5.5mm}
\end{figure*}

Time series has been a fundamental modality in many real-world applications, including energy management, traffic prediction, climate modeling, finance, and healthcare~\cite{qiu2024tfb,yi2025survey,qiu2025duet}.
Modern end-to-end deep learning models for time series forecasting, which could be built upon recurrent networks, convolution structures, Transformer architectures, etc~\cite{luo2024moderntcn,liu2023itransformer,fei2025amplifier}, have achieved strong performance on individual datasets, but they are typically trained in task- or dataset-specific settings, limiting their ability to generalize across domains.
Inspired by large-scale pretraining in natural language processing~\cite{wang2023pre} and computer vision~\cite{awais2025foundation}, recent work has been exploring time series foundation models~\cite{liu2024timer,ansari2024chronos}, which aim to learn an unified, general-purpose model from diverse time series data and transfer to downstream forecasting tasks  effectively.

\vspace{-0.5mm}
Generally, time series foundation models are pretrained on large collections of datasets spanning multiple application domains~\cite{wangtowards}. 
The diversity from different data is a significant characteristic in large-scale pretraining, which aims for exposing the model to a wide range of temporal behaviors~\cite{woo2024unified}.
As illustrated in the left part of Figure~\ref{fig:motivation}(a), time series originating from different domains exhibit markedly different temporal patterns, reflecting variations in periodic structures and long-term temporal dependencies that arise from domain-specific data generation processes, such as distinct physical dynamics, operational cycles, environmental influences, and measurement settings~\cite{liang2024foundation}.
While such diversity is a prerequisite for learning broadly applicable temporal knowledge, it also introduces substantial challenges to pretraining.
From an optimization perspective, jointly training on datasets with disparate temporal characteristics often leads to slower convergence and suboptimal optimization behavior.
From a representation learning perspective, the model might need to simultaneously accommodate incompatible temporal structures, which could make it difficult to form unified and transferable representations that could generalize well across datasets.
Then, these challenges naturally motivate us to ask an important research question: \textit{how can time series foundation models learn more effectively on diverse pretraining datasets?}

\vspace{-0.5mm}
To answer this question, the core idea is to make time series foundation models characterize and address such diversity across data and domains in a principled manner.
Inspired by a fundamental concept in the domain of  signal processing~\cite{proakis2007digital,hayes1996statistical}, we have found the normalized power spectral density (PSD) could serve as an informative dataset-level descriptor by capturing the frequency-wise distribution of temporal variations that could reflect the underlying second-order temporal structure and provide a stable and discriminative summary across datasets~\cite{chatfield2019analysis,fulcher2017hctsa}, 
which directly suggests a potential principled remedy: rather than treating dataset diversity as an unstructured domain gap, we assume that harmonizing datasets via PSDs in the spectral domain, could reduce mismatches between their PSDs and thus improve the efficacy of large-scale time series pretraining.

\vspace{-0.5mm}
Alone this line, a following straightforward solution is to measure PSDs and minimize their discrepancies among datasets.
As an instance, the distributional discrepancies between PSDs of datasets can be quantified using standard measures such as the Jensen–Shannon (JS) divergence~\cite{lin2002divergence,endres2003new}.
Experimentally, as shown in Figure~\ref{fig:motivation}(b), different domain pairs exhibit substantial JS divergence in their PSDs quantitatively indicating pronounced spectral discrepancies, and minimizing such divergences seems to be a reasonable objective for promoting PSD consistency. 
However, directly pursuing this objective is non-trivial in large-scale time series pretraining.
Since this objective is defined over aggregated, dataset-level spectral statistics, whereas time series foundation models are usually trained via instance (sequence)-level optimization, this mismatch makes direct optimization hard to implement. 

\vspace{-0.5mm}
To address above challenges, we go beyond direct divergence minimization for PSDs and innovatively reformulate it as a structural and principled \textit{harmonization} approach.
We demonstrate that cross-domain spectral discrepancies can be effectively addressed by projecting time series onto a shared representation space, based on which, we propose an according module termed \textit{Harmonizer}.
Specifically, Harmonizer introduces learnable orthogonal temporal transformations that reorganize temporal dynamics, reshaping spectral structures and implicitly harmonizing PSDs across datasets. 
From a theoretical perspective, it corresponds to a shared reparameterization of second-order temporal correlations, which induces a common structural form across datasets.
Moreover, Harmonizer further reshapes temporal relationships in attention mechanisms in Transformer-based time series foundation models~\cite{he2025sempo}.
Our theoretical analysis reveals that token interactions can be efficiently mediated by a compact set of resonators, motivating a \textit{HarmonicAttention} design that performs self-attention~\cite{vaswani2017attention} in a low-dimensional interaction space instead of dense all-to-all interactions.
Derived from harmonized temporal representations, these resonators act as a condensed intermediary for modeling temporal relationships.

\vspace{-0.5mm}
Building upon these insights, we introduce a novel time series foundation model, \textit{Olivia}, for forecasting. Olivia starts with incorporating \textit{Harmonizer}, which comprises two bidirectional submodules to perform temporal reorganization before encoding and representation recovery after decoding. 
For the core architecture, Olivia follows a Transformer-style encoder–decoder design, but departs from standard formulations by employing HarmonicAttention as its primary interaction mechanism which results in a \textit{HarmonicFormer} architecture that models temporal relationships through low-dimensional harmonic interactions rather than dense all-to-all attention.
In summary, our contributions are:
\vspace{-6mm}
\begin{itemize}[leftmargin=*]
    \item We introduce a principled \textit{harmonization} approach for addressing diversity across pretraining time series datasets by implicitly harmonizing PSDs across data through the reorganization of second-order temporal correlations.
    \item We propose \textit{HarmonicAttention}, a novel attention mechanism enabling low-dimensional token interactions mediated by a compact set of resonators, and further introduce \textit{HarmonicFormer} for efficient foundational modeling. 
    \item We propose \textit{Olivia}, a novel time series foundation model built upon the harmonization and conduct  
    extensive experiments on two large-scale benchmarks (TSLib and GIFT-Eval) and extra 6 datasets provided by GluonTS, which demonstrate Olivia achieves superior performance under zero-shot, few-shot, and full-shot forecasting scenarios, consistently outperforming state-of-the-art methods.
\end{itemize}

\section{Related Work}


\vspace{-1mm}
\paragraph{Time Series Foundation Models.}
Time series foundation models have recently emerged as a promising paradigm through large-scale pretraining on multi-source time series data~\cite{liang2024foundation, qiu2024tfb, awais2025foundation}.
Most existing time series foundation models can be categorized into encoder-only, decoder-only, or encoder–decoder architectures.
Among encoder-only models, 
MOMENT \cite{goswami2024moment} utilizes T5-style masked modeling for self-supervised representation learning, while MOIRAI \cite{woo2024unified} employs masked Transformers for universal probabilistic forecasting.
Decoder-only models such as 
Sundial \cite{liu2025sundial} and Time-MoE \cite{shi2024time} utilize autoregressive decoders. Time-MoE scales deterministic forecasting via sparse Mixture-of-Experts, whereas Sundial integrates decoders within a generative flow-matching framework.
Encoder–decoder models constitute another architectural paradigm for time series forecasting.
Chronos \cite{ansari2024chronos} treats forecasting as discrete token prediction through quantization. Recently, SEMPO \cite{he2025sempo} introduced energy-aware spectral modeling and prompt-based adaptation in this framework.
\vspace{-1mm}
\paragraph{Domain Generalization in Foundation Models.}

Foundation models strive for universal generalization by mitigating cross-domain discrepancies \cite{liang2024foundation}. MOIRAI \cite{woo2024unified} uses frequency-aware patching and mixture-based heads to absorb input-output variations, while ROSE \cite{wangtowards} employs spectral masking and adaptive registers to isolate domain-specific features. Time-MoE \cite{shi2024time} utilizes sparse routing for implicit pattern separation, and SEMPO \cite{he2025sempo} leverages a mixture-of-prompts to capture cross-domain commonalities.
However, these frameworks primarily achieve domain generalization via architectural modularization or capacity-based specialization, rather than 
explicitly addressing the fundamental discrepancies in temporal distributions.
In this paper, 
we introduce Olivia that moves beyond structural modularization by enforcing a PSD-consistent latent space. Through our proposed Harmonizer, Olivia explicitly aligns disparate spectral signatures into a unified aligned subspace, ensuring that the model learns universal temporal primitives that are invariant to domain-specific shifts.

\vspace{-2mm}
\section{Preliminaries}

\paragraph{Power Spectrum Density.}
Power Spectral Density (PSD) is a mathematical measure used to characterize how the energy of a time series signal is distributed across frequency components~\cite{oppenheim1999discrete}. 
Specifically, given a dataset $\mathcal{D}$ consisting of time series samples $X \in \mathbb{R}^{T}$, the dataset-level normalized PSD $\mathcal{P}_{\mathcal{D}}(\omega)$ is defined as:
\begin{equation}
  \mathcal{P}_{\mathcal{D}}(\omega) \triangleq \frac{1}{|\mathcal{D}|} \sum_{X \in \mathcal{D}} S_X(\omega_k)/\sum_{k}(\frac{1}{|\mathcal{D}|} \sum_{X \in \mathcal{D}} S_X(\omega_k))
\end{equation}
Here, $S_X(\omega_k)$ denotes the PSD of an individual sequence $X$, estimated via the periodogram at discrete frequencies $\omega_k = \frac{2\pi k}{T}$:
$S_X(\omega_k) \triangleq \frac{1}{T} \left| \sum_{t=0}^{T-1} X_t \, e^{-j \omega_k t} \right|^2$.



As implied by the preceding definitions, the power spectral density depends solely on the magnitude of Fourier coefficients and is therefore invariant to global temporal shifts (i.e., phase shifts in the frequency domain), while exhibiting relative robustness to minor local temporal misalignments~\cite{percival1993spectral,brillinger2001time}.
This property makes PSD a particularly suitable representation for comparing signals collected under heterogeneous conditions.
Moreover, many sources of domain shift in time series, such as variations in sampling rates, sensor characteristics, and dominant periodicities, are naturally reflected as changes in the allocation of spectral energy across frequency bands~\cite{fulcher2017hctsa,chatfield2019analysis}.

\vspace{-2mm}
\section{Methodology}
\begin{figure*}[!t]
    \centering
    \includegraphics[width=0.85\linewidth]{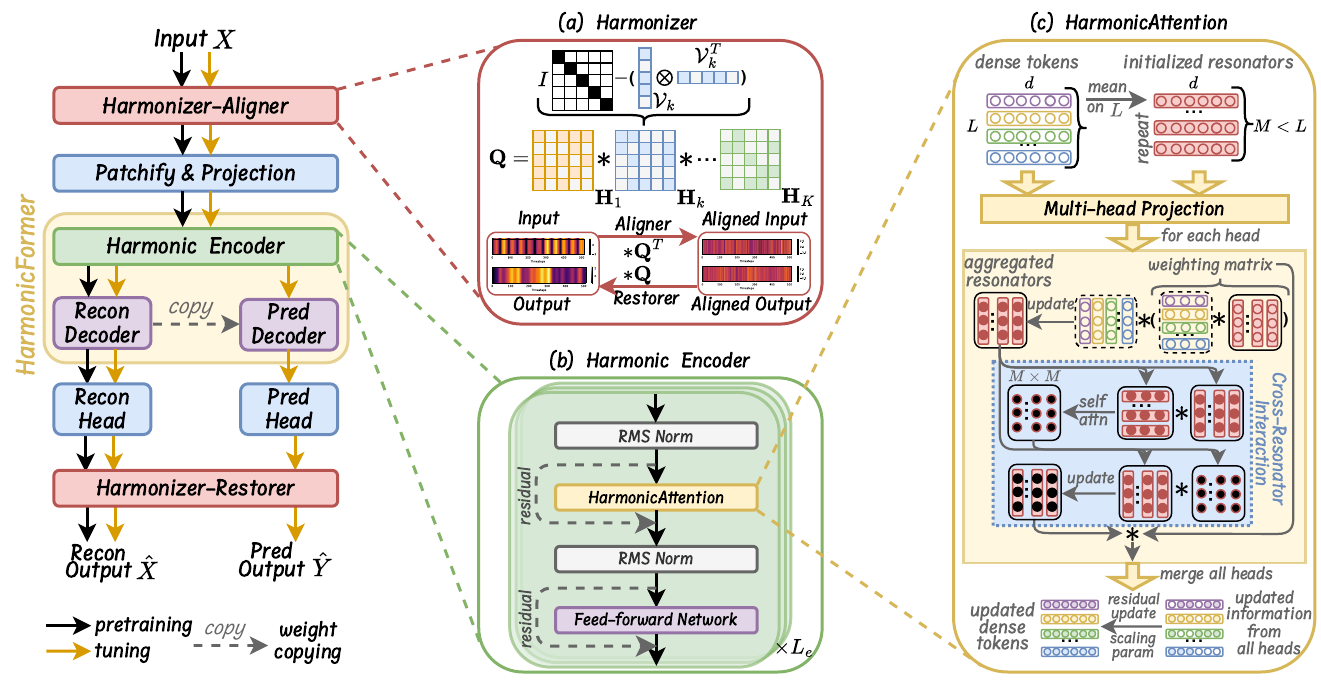}
    \vspace{-1mm}
    \caption{The overall architecture of Olivia. \textit{Olivia} adopts an encoder–decoder architecture centered on (i) the {Harmonizer}, which consists of an {Aligner} and a {Restorer} to perform bidirectional temporal reorganization across heterogeneous datasets, and (ii) the {HarmonicFormer}, a scalable backbone that utilizes {HarmonicAttention} for efficient temporal dependency modeling within a compressed subspace.}
    \label{fig:framework}
    \vspace{-4mm}
\end{figure*}


As mentioned above, the power spectral density (PSD) possesses several desirable properties that make it a principled basis for characterizing and comparing heterogeneous time series datasets.
Leveraging these properties, we propose \textit{Olivia}, a time series foundation model that learns domain-agnostic temporal representations via a PSD-guided modeling strategy, 
thereby promoting consistent representation learning across diverse data sources.


\vspace{-2mm}
\subsection{Overview}\label{sec:overview}
Following the architectural paradigm of recent time series foundation models~\cite{ansari2024chronos,he2025sempo,wangtowards}, \textit{Olivia} adopts an encoder--decoder framework, as illustrated in Figure~\ref{fig:framework}.
The architecuture comprises two core components.
(1) The \textit{Harmonizer} is a PSD-guided transformation module that implicitly harmonizes spectral characteristics across heterogeneous datasets by aligning temporal structures prior to encoding and restoring the decoder outputs to the original domain-specific temporal space.
(2) The \textit{HarmonicFormer} is a Transformer-derived backbone that operates on the PSD-harmonized representations.
It extends the standard Transformer architecture with the proposed \textit{HarmonicAttention}, which replaces dense token-wise dependencies with structured interactions in a low-dimensional subspace, thereby enabling efficient and scalable modeling of global temporal dependencies.

\vspace{-1mm}
\subsection{Harmonizer: Bidirectional Power Spectral Density Reparameterization}





Discrepancies between heterogeneous time series datasets often manifest as shifts in their spectral energy distributions.
These differences in data distribution can be rigorously quantified in the frequency domain by comparing PSDs using various distributional discrepancy measures, such as the Kullback–Leibler (KL) divergence~\cite{kullback1951information}, the Jensen–Shannon (JS) divergence~\cite{cover1999elements,mackay2003information}, or the Wasserstein distance~\cite{villani2021topics}.
Among these metrics, we instantiate this distributional comparison using the JS divergence as a representative choice, leveraging its symmetric and bounded properties to ensure stable optimization.
A seemingly straightforward approach to cross-domain pretraining is to enforce a spectral consistency constraint, encouraging the foundation model to map diverse inputs into a shared canonical spectral space. This could be formalized as minimizing discrepancies of PSD as:

\begin{definition}[\textbf{Power Spectral Density Harmonization}]
Let $\mathcal{T}_{\text{pretrain}}$ be a collection of pretraining datasets. For any pair $\mathcal{D}_i, \mathcal{D}_j \in \mathcal{T}_{\text{pretrain}}$, let $f_\theta$ be a model that reparameterize these datasets to a latent space with corresponding spectral energy distributions $\mathcal{P}_{\mathcal{D}_i}^{(\theta)}(\omega)$ and $\mathcal{P}_{\mathcal{D}_j}^{(\theta)}(\omega)$. The harmonization objective seeks to find the optimal parameters $\theta^*$ as:
\begin{equation}
\theta^* = \arg\min_{\theta} \sum_{\mathcal{D}_i, \mathcal{D}_j \in \mathcal{T}_{\text{pretrain}}} \mathrm{JSD} \left( \mathcal{P}_{\mathcal{D}_i}^{(\theta)}(\omega) \, \middle\| \, \mathcal{P}_{\mathcal{D}_j}^{(\theta)}(\omega) \right)
\vspace{-2mm}
\end{equation}
where $\mathrm{JSD}(\cdot \| \cdot)$ denotes the Jensen-Shannon Divergence, measuring the functional similarity between spectral energy distributions across the pretraining collection.
\vspace{-2mm}
\end{definition}

However, directly optimizing this objective is impractical in large-scale pretraining, as training is performed via mini-batch stochastic optimization and the dataset-level objective can only be approximated from individual batches, leading to highly noisy and often misaligned gradient estimates, unstable training dynamics, and poor convergence~\cite{bottou2018optimization,qian2020impact}.
To overcome these limitations, we pivot from direct divergence minimization to a structural alignment approach that operates on second-order temporal statistics underlying power spectral densities.
In particular, we consider a structural reparameterization in which cross-domain spectral discrepancies can be mitigated by projecting time series data onto a shared latent subspace.
\begin{proposition}
\label{proposition_1}
Let $\mathcal{U} \subset \mathbb{R}^{T}$ be an $r$-dimensional subspace that is invariant under the second-order moment matrices of all datasets $\mathcal{D}$ in the pretraining collection $\mathcal{T}_{\mathrm{pretrain}}$, that is, $\boldsymbol{\Sigma}_{\mathcal{D}}^{X} \mathcal{U} \subseteq \mathcal{U}$ where $\boldsymbol{\Sigma}_{\mathcal{D}}^{X} \triangleq \mathbb{E}[(X_{\mathcal{D}})^{\top} X_{\mathcal{D}}]$. Then, there exists a shared orthogonal matrix $\mathbf{Q} \in \mathbb{R}^{T \times T}$ whose first $r$ columns span $\mathcal{U}$, such that the transformed signal $\mathcal{X}_{\mathcal{D}} \triangleq X_{\mathcal{D}} \mathbf{Q}^{\top}$ yields a block-diagonal moment matrix:
\begin{equation}
\boldsymbol{\Sigma}_{\mathcal{D}}^{\mathcal{X}} 
= \mathbf{Q} \boldsymbol{\Sigma}_{\mathcal{D}}^{X} \mathbf{Q}^{\top} = \mathrm{diag}\left( \boldsymbol{\Lambda}_{\mathcal{D}}, \boldsymbol{\Phi}_{\mathcal{D}} \right),
\end{equation}
where $\boldsymbol{\Lambda}_{\mathcal{D}} \in \mathbb{R}^{r \times r}$ characterizes the shared subspace dynamics and $\boldsymbol{\Phi}_{\mathcal{D}} \in \mathbb{R}^{(T-r) \times (T-r)}$ captures dataset-specific variations in the orthogonal complement $\mathcal{U}^{\perp}$.
\vspace{-2mm}
\end{proposition}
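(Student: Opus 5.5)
The plan is to construct $\mathbf{Q}$ explicitly from the invariant subspace and then read off the block structure from invariance together with the symmetry of the moment matrices.

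First, fix an orthonormal basis $\mathbf{u}_1,\dots,\mathbf{u}_r$ of $\mathcal{U}$, which exists by Gram--Schmidt applied to any basis. Extend it to an orthonormal basis $\mathbf{u}_{r+1},\dots,\mathbf{u}_T$ of $\mathbb{R}^T$; the added vectors then form an orthonormal basis of $\mathcal{U}^{\perp}$. Set $\mathbf{U}=[\mathbf{u}_1,\dots,\mathbf{u}_r]$ and $\mathbf{U}_\perp=[\mathbf{u}_{r+1},\dots,\mathbf{u}_T]$.

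One point of notation needs care. The statement asks for the first $r$ columns of $\mathbf{Q}$ to span $\mathcal{U}$, yet it writes the transformed moment matrix as $\mathbf{Q}\boldsymbol{\Sigma}\mathbf{Q}^\top$. For that product to be block-diagonal, the basis vectors must appear as the rows of $\mathbf{Q}$. So I take $\mathbf{Q}=[\mathbf{U},\mathbf{U}_\perp]^\top$, which is orthogonal. I read ``columns'' as referring to the basis matrix $\mathbf{Q}^\top$, that is, the columns of $\mathbf{Q}^\top$ span $\mathcal{U}$. Equivalently, one can keep $\mathbf{Q}=[\mathbf{U},\mathbf{U}_\perp]$ and transform via $X\mathbf{Q}$; both conventions give the same result. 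The crucial observation is that $\mathbf{Q}$ depends only on $\mathcal{U}$ and not on $\mathcal{D}$, so the same matrix is shared across the whole collection.

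Second, I compute the moment matrix of the transformed signal. For a row signal $X$, linearity of expectation gives
\[
\mathbb{E}[(X\mathbf{Q}^\top)^\top (X\mathbf{Q}^\top)] = \mathbf{Q}\,\mathbb{E}[X^\top X]\,\mathbf{Q}^\top = \mathbf{Q}\boldsymbol{\Sigma}_{\mathcal{D}}^{X}\mathbf{Q}^\top,
\]
which is the first equality in the statement.

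Third, I identify the blocks. Writing $\boldsymbol{\Sigma}=\boldsymbol{\Sigma}_{\mathcal{D}}^{X}$, the transformed matrix has the block form
\[
\begin{pmatrix}\mathbf{U}^\top\boldsymbol{\Sigma}\mathbf{U} & \mathbf{U}^\top\boldsymbol{\Sigma}\mathbf{U}_\perp\\ \mathbf{U}_\perp^\top\boldsymbol{\Sigma}\mathbf{U} & \mathbf{U}_\perp^\top\boldsymbol{\Sigma}\mathbf{U}_\perp\end{pmatrix}.
\]
By invariance, each column of $\boldsymbol{\Sigma}\mathbf{U}$ lies in $\mathcal{U}$. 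Hence $\boldsymbol{\Sigma}\mathbf{U}$ is orthogonal to $\mathcal{U}^\perp$, so $\mathbf{U}_\perp^\top\boldsymbol{\Sigma}\mathbf{U}=\mathbf{0}$. Since $\boldsymbol{\Sigma}$ is symmetric (it is $\mathbb{E}[X^\top X]$), the other off-diagonal block is the transpose of this one and also vanishes. This symmetry step is the key point: invariance alone would only give block upper-triangularity. Symmetry also shows that $\mathcal{U}^\perp$ is itself invariant.

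Finally, I define $\boldsymbol{\Lambda}_{\mathcal{D}}=\mathbf{U}^\top\boldsymbol{\Sigma}_{\mathcal{D}}^X\mathbf{U}$, which is $r\times r$ and represents the restriction of $\boldsymbol{\Sigma}_{\mathcal{D}}^X$ to $\mathcal{U}$. Likewise, $\boldsymbol{\Phi}_{\mathcal{D}}=\mathbf{U}_\perp^\top\boldsymbol{\Sigma}_{\mathcal{D}}^X\mathbf{U}_\perp$ is $(T-r)\times(T-r)$ and represents the restriction to $\mathcal{U}^\perp$. Both are symmetric positive semidefinite.

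There is no real obstacle beyond this. The only delicate points are the row/column convention for $\mathbf{Q}$ and remembering to invoke symmetry so that both off-diagonal blocks vanish.
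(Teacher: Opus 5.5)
Your proposal is correct and is the paper's own argument (orthonormal basis adapted to $\mathcal{U}\oplus\mathcal{U}^{\perp}$, invariance to kill $\mathbf{U}_\perp^{\top}\boldsymbol{\Sigma}\mathbf{U}$, symmetry to kill its transpose), and your row/column bookkeeping is in fact more careful than the paper's: Step~2 of the paper puts the basis in the columns of $\mathbf{Q}$, but Step~3 then displays the blocks of $\mathbf{Q}^{\top}\boldsymbol{\Sigma}_{\mathcal{D}}^{X}\mathbf{Q}$ while labelling them $\mathbf{Q}\boldsymbol{\Sigma}_{\mathcal{D}}^{X}\mathbf{Q}^{\top}$. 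You need not reinterpret ``columns'', though: if you take $\mathbf{Q}$ to be a symmetric orthogonal matrix exchanging $\mathrm{span}(e_1,\dots,e_r)$ and $\mathcal{U}$ (e.g.\ $\mathbf{I}-2\sum_i v_iv_i^{\top}$ with $v_i\propto a_i-b_i$, taken over the principal-vector pairs $(a_i,b_i)$ of the two subspaces with $a_i\neq b_i$, where these $v_i$ are mutually orthogonal), then its first $r$ rows and its first $r$ columns both span $\mathcal{U}$, so the statement holds exactly as written.
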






Proposition~\ref{proposition_1} establishes that an appropriate orthogonal transformation $\mathbf{Q}$ can decouple shared second-order temporal correlations from dataset-specific variations. By projecting time series data into this common coordinate system, the resulting representations expose a shared latent subspace that remains invariant across the entire collection of datasets. A complete derivation is provided in Appendix~\ref{proof_proposition_1}.


Motivated by this result, we introduce the \textit{Harmonizer}, a structural module designed to implement this shared reparameterization. By reorganizing second-order temporal correlations, the Harmonizer implicitly enforces PSD consistency across diverse datasets without the instability of direct divergence optimization. To maintain the integrity of the signal, this reparameterization is designed to be fully invertible, inducing a bidirectional mapping between the original and transformed temporal domains. This is operationalized through two complementary submodules: the \textit{Aligner}, which projects time series data into the shared subspace, and the \textit{Restorer}, which maps them back to the original domain.



\vspace{-2mm}
\subsubsection{Aligner}


Given a raw univariate time series $X \in \mathbb{R}^{1 \times T}$, the Aligner performs a structural reparameterization by projecting the time series into a reorganized coordinate system as:
\begin{equation} 
\mathcal{X} = X \mathbf{Q}^{\top}, \quad \text{s.t. } \mathbf{Q}^{\top} \mathbf{Q} = \mathbf{I},
\vspace{-1mm}
\end{equation} 
where $\mathbf{Q} \in \mathbb{R}^{T \times T}$ is a learnable orthogonal matrix shared across the entire pretraining collection. This transformation preserves the $\ell_2$ norm and is strictly invertible, ensuring that temporal information is reorganized without loss of variance or structural integrity~\cite{strang2012linear, horn2012matrix}.

The primary challenge lies in optimizing $\mathbf{Q}$ while maintaining strict orthogonality. Rather than relying on soft constraints or iterative orthogonalization (e.g., Stiefel manifold optimization), we adopt an explicit parameterization via Householder reflections~\cite{golub2013matrix}. Specifically, $\mathbf{Q}$ is decomposed into a sequence of $K$ reflections:
\begin{equation}
\mathbf{Q} = \prod_{k=1}^{K} \mathbf{H}_{k}, \quad \mathbf{H}_{k} = \mathbf{I}- 2 \mathcal{V}_{k} \mathcal{V}_{k}^{\top},
\vspace{-1mm}
\end{equation}
where each $\mathcal{V}_{k} \in \mathbb{R}^{T}$ is a learnable unit vector. Because each $\mathbf{H}_{k}$ is an elementary orthogonal matrix, their product is orthogonal by construction~\cite{horn2012matrix}. This approach allows $\mathbf{Q}$ to be optimized via standard gradient-based methods while guaranteeing that it remains on the orthogonal group. The hyperparameter $K$ regulates the transformation's degrees of freedom, enabling the model to capture complex shared temporal structures even across datasets with highly divergent power spectral profiles.


\vspace{-1mm}
\subsubsection{Restorer}


After deep temporal modeling within the HarmonicFormer backbone, the resulting processed representation $\mathcal{Y} \in \mathbb{R}^{1 \times T}$ resides within the canonical aligned subspace. To facilitate domain-specific inference, the Restorer executes an inverse isometric mapping to project these latent representations back onto the original domain as follows:
\vspace{-1mm}
\begin{equation}
Y = \mathcal{Y} \mathbf{Q},
\vspace{-1mm}
\end{equation}
where $\mathbf{Q} \in \mathbb{R}^{T \times T}$ is the same universal orthogonal operator utilized by the Aligner. 
This bidirectional reparameterization ensures that the overall pipeline preserves the original temporal structure at the output, while fully leveraging the spectral harmonization achieved within the shared subspace.


\vspace{-2mm}
\subsection{HarmonicFormer: Efficient Attention in the PSD-Harmonized Subspace}
According to Proposition~\ref{proposition_1},
projecting time series data into a common coordinate system reveals that a substantial portion of the spectral energy is concentrated within a shared $r$-dimensional subspace. 
This structural reorganization naturally motivates the design of efficient attention that operates on these harmonized and compressed representations.

\vspace{-1mm}
\subsubsection{HarmonicAttention}

Let $Z = \{z_\ell\}_{\ell=1}^{L}$, with $z_\ell \in \mathbb{R}^{d}$, denote token representations derived from the output  $\mathcal{X}$ of the Harmonizer's Aligner. Here, $L$ and $d$ represent the number of patches and the embedding dimensions, respectively. These tokens are generated through standard patchification and linear embedding~\cite{liu2025sundial,he2025sempo}.

\begin{proposition}
\label{prop:token_low_rank}
Let $\mathcal{X} \in \mathbb{R}^{1 \times T}$ denote an aligned temporal signal with finite second moments, and let
$\boldsymbol{\Sigma}_{\mathcal{X}} \triangleq \mathbb{E}\!\left[\mathcal{X}^{\top} \mathcal{X}\right] \in \mathbb{R}^{T \times T}$ denote its second-order moment matrix.
Assume that $\boldsymbol{\Sigma}_{\mathcal{X}}$ admits a block-diagonal decomposition
$\boldsymbol{\Sigma}_{\mathcal{X}}
=
\operatorname{diag}\!\left( \boldsymbol{\Lambda}, \boldsymbol{\Phi} \right),
\boldsymbol{\Lambda} \in \mathbb{R}^{r \times r},\;
\boldsymbol{\Phi} \in \mathbb{R}^{(T-r)\times (T-r)},$
with $r \ll T$.
Let token representations be generated by linear temporal operators
$z_\ell = M_\ell \mathcal{X}^{\top},
M_\ell \in \mathbb{R}^{d \times T},
\ell = 1,\dots,L,$
and stack them as
$Z = [z_1^{\top}; \dots; z_L^{\top}] \in \mathbb{R}^{L \times d}.$
Define the token Gram matrix
$
\mathbf{C}_Z \triangleq \mathbb{E}[Z Z^{\top}] \in \mathbb{R}^{L \times L},
[\mathbf{C}_Z]_{\ell,\ell'} = \mathbb{E}\!\left[z_\ell^{\top} z_{\ell'}\right].
$
Partition each temporal operator $M_\ell$ conformably with the block structure of $\boldsymbol{\Sigma}_{\mathcal{X}}$ as
$
M_\ell = \begin{bmatrix} A_\ell & B_\ell \end{bmatrix},
A_\ell \in \mathbb{R}^{d \times r},\;
B_\ell \in \mathbb{R}^{d \times (T-r)}.
$
Then, for all $\ell,\ell'$, the token Gram entries admit the decomposition
$
[\mathbf{C}_Z]_{\ell,\ell'}
=
\operatorname{tr}\!\left(A_\ell \boldsymbol{\Lambda} A_{\ell'}^{\top}\right)
+
\operatorname{tr}\!\left(B_\ell \boldsymbol{\Phi} B_{\ell'}^{\top}\right).
$
Moreover, the truncated Gram matrix defined by
$
[\mathbf{C}_Z^{(r)}]_{\ell,\ell'}
\triangleq
\operatorname{tr}\!\left(A_\ell \boldsymbol{\Lambda} A_{\ell'}^{\top}\right)
$
is positive semidefinite and satisfies
$\operatorname{rank}(\mathbf{C}_Z^{(r)}) \;\le\; dr$.
In addition, the approximation error is bounded as
\begin{equation}
\big| [\mathbf{C}_Z]_{\ell,\ell'} - [\mathbf{C}_Z^{(r)}]_{\ell,\ell'} \big|
\;\le\;
\|\boldsymbol{\Phi}\|_2 \, \|B_\ell\|_F \, \|B_{\ell'}\|_F.
\vspace{-1mm}
\end{equation}
\end{proposition}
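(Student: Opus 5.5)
The plan is to treat $\mathcal{X}$ as a random row vector and write each token as a linear function of it. First, the Gram decomposition. Since $z_\ell = M_\ell \mathcal{X}^{\top}$, we have $z_\ell^{\top} z_{\ell'} = \mathcal{X} M_\ell^{\top} M_{\ell'} \mathcal{X}^{\top}$. This is a scalar, so it equals its trace. By cyclicity and linearity of expectation,
$[\mathbf{C}_Z]_{\ell,\ell'} = \operatorname{tr}\!\big(M_\ell^{\top} M_{\ell'}\,\mathbb{E}[\mathcal{X}^{\top}\mathcal{X}]\big) = \operatorname{tr}\!\big(M_{\ell'} \boldsymbol{\Sigma}_{\mathcal{X}} M_\ell^{\top}\big)$.
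Finite second moments justify exchanging expectation and trace. Next, substitute $M_\ell = [A_\ell\; B_\ell]$ and $\boldsymbol{\Sigma}_{\mathcal{X}} = \operatorname{diag}(\boldsymbol{\Lambda},\boldsymbol{\Phi})$. The block product gives $M_{\ell'}\boldsymbol{\Sigma}_{\mathcal{X}}M_\ell^{\top} = A_{\ell'}\boldsymbol{\Lambda}A_\ell^{\top} + B_{\ell'}\boldsymbol{\Phi}B_\ell^{\top}$, and the cross terms vanish because the off-diagonal blocks are zero. The traces are symmetric in $\ell,\ell'$, since $\boldsymbol{\Lambda},\boldsymbol{\Phi}$ are symmetric and $\operatorname{tr}(P)=\operatorname{tr}(P^{\top})$. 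This yields the stated formula.

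For positive semidefiniteness and rank, I would factor the truncated Gram matrix. $\boldsymbol{\Lambda}$ is a principal block of a second-moment matrix, so it is PSD and we can write $\boldsymbol{\Lambda} = \boldsymbol{\Lambda}^{1/2}\boldsymbol{\Lambda}^{1/2}$. Define $g_\ell \triangleq \operatorname{vec}(A_\ell \boldsymbol{\Lambda}^{1/2}) \in \mathbb{R}^{dr}$. Then $\operatorname{tr}(A_\ell\boldsymbol{\Lambda}A_{\ell'}^{\top}) = \langle g_\ell, g_{\ell'}\rangle$, so $\mathbf{C}_Z^{(r)} = G G^{\top}$ with $G = [g_1^{\top};\dots;g_L^{\top}] \in \mathbb{R}^{L\times dr}$. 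This immediately gives PSD-ness and $\operatorname{rank}(\mathbf{C}_Z^{(r)}) \le \operatorname{rank}(G) \le dr$.

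For the error bound, the difference is exactly $\operatorname{tr}(B_\ell\boldsymbol{\Phi}B_{\ell'}^{\top}) = \langle B_\ell, B_{\ell'}\boldsymbol{\Phi}\rangle_F$, using the symmetry of $\boldsymbol{\Phi}$. Cauchy–Schwarz in the Frobenius inner product gives $|\cdot| \le \|B_\ell\|_F\,\|B_{\ell'}\boldsymbol{\Phi}\|_F$. The standard inequality $\|PR\|_F \le \|P\|_F\|R\|_2$ then bounds the right-hand side by $\|\boldsymbol{\Phi}\|_2\|B_\ell\|_F\|B_{\ell'}\|_F$.

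Nothing here is deep. The step needing the most care is bookkeeping rather than analysis: keeping the index order of the traces consistent ($\ell$ versus $\ell'$) and correctly justifying that $\boldsymbol{\Lambda}$ is PSD, so that the square-root factorization exists. I would make that explicit by noting $v^{\top}\boldsymbol{\Lambda}v = \mathbb{E}[(\mathcal{X}_{1:r}v)^2] \ge 0$.
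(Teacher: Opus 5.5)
Your proof is correct and follows essentially the same route as the paper: the trace identity $[\mathbf{C}_Z]_{\ell,\ell'} = \operatorname{tr}(M_\ell^{\top} M_{\ell'} \boldsymbol{\Sigma}_{\mathcal{X}})$, the block product, and the factorization $\mathbf{C}_Z^{(r)} = GG^{\top}$ with $g_\ell = \operatorname{vec}(A_\ell \boldsymbol{\Lambda}^{1/2})$ all match the paper's argument. The one small difference is in the error bound. The paper splits $\boldsymbol{\Phi} = \boldsymbol{\Phi}^{1/2}\boldsymbol{\Phi}^{1/2}$ and applies Cauchy--Schwarz symmetrically, whereas you pair $B_\ell$ with $B_{\ell'}\boldsymbol{\Phi}$ directly; this gives the same bound without needing $\boldsymbol{\Phi}$ to be positive semidefinite.
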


The proof of Proposition~\ref{prop:token_low_rank}, detailed in Appendix~\ref{proof_proposition_2}, establishes that PSD-aligned representations allow token correlations to be decomposed into a dominant low-rank principal term and a bounded residual. This concentration of temporal energy provides a rigorous justification for approximating dense dependencies with a compact set of spectral modes. We operationalize this theoretical result through HarmonicAttention, a mechanism that mediates global interactions via $M$ latent resonators across three functional stages.

Specifically, following the standard attention convention~\cite{vaswani2017attention}, HarmonicAttention projects $Z \in \mathbb{R}^{L \times d}$ into head-specific subspaces.
For each attention head, let $W_h \in \mathbb{R}^{d \times P}$ be a learnable projection
matrix, and define the projected tokens as $\tilde{Z}^{(h)} = Z W_h \in \mathbb{R}^{L \times P}$.

\paragraph{Token-to-Resonator Aggregation.}
We first synthesize a set of $M$ latent resonators ($M < L$) that aggregate global
information from the tokens $Z$.
Specifically, we initialize a global resonator as
$\mathcal{G} = \frac{1}{L}\mathbf{1}^\top Z \in \mathbb{R}^{1 \times d}$,
i.e., the mean token representation.
Then we project it to the head-specific subspace as
$\tilde{r}^{(h)} = \mathcal{G} W_h \in \mathbb{R}^{1 \times P}$.
Using the projected global resonator as a shared query template, we compute a token-to-resonator
aggregation matrix $A^{(h)}$ as
$A^{(h)} = \mathrm{Softmax}_{\text{token}}
\!\left(
{\tilde{Z}^{(h)} (\mathbf{1}_M \otimes \tilde{r}^{(h)})^\top}/{\sqrt{P}}
\right) \in \mathbb{R}^{L \times M}$,
where $\mathrm{Softmax}_{\text{token}}$ normalizes each column of $A^{(h)}$ over the $L$ tokens.
Then the aggregated resonator representations are:
\begin{equation}
R^{(h)} = (A^{(h)})^\top \tilde{Z}^{(h)} \in \mathbb{R}^{M \times P}.
\vspace{-1mm}
\end{equation}





\vspace{-1mm}
\paragraph{Cross-Resonator Interaction.}
Unlike standard self-attention~\cite{vaswani2017attention}, whose computational cost scales quadratically with the number of tokens $L$, we perform interaction within the compact $M \times M$ resonator space.
Specifically, for each head, we treat the aggregated resonator representations
$R^{(h)}$ as resonator-wise queries, keys, and values.
The resonator interaction is formulated as:
\begin{equation}
\begin{aligned}
\operatorname{ResAct}(R^{(h)})
&=
\mathrm{Softmax}_{\text{res}}
\Big(
R^{(h)} (R^{(h)})^{\top} / \sqrt{P}
\Big)
R^{(h)}, 
\end{aligned}
\end{equation}
where $\mathrm{Softmax}_{\text{res}}$ normalizes across the $M$ resonators.





\begin{table*}[t!]
    \centering
    \caption{Zero-shot results on the TSLib benchmark~\cite{wu2022timesnet}, where the lookback window length is fixed to $T=512$ following prior works~\cite{goswami2024moment,he2025sempo}, and the reported results are averaged over multiple prediction horizons $\tau \in \left\{96,192,336,720\right\}$; the best and second-best performances are highlighted in \textcolor{red}{\textbf{red}} and \textcolor{blue}{\textbf{blue}}, respectively.  
    A lower MSE/MAE indicates a better result.
    '-' denotes dataset used in pre-training and excluded in the evaluation.
    '\textit{S}': Small, '\textit{B}': Base, '\textit{L}': Large.
    }
    \scalebox{0.55}{
    \begin{tabular}{c c|c c|c c|c c| c c|c c |c c |c c |c c |c c |c c |c c |c c}
    \toprule[2pt]
       \multicolumn{2}{c|}{Models}  &\multicolumn{2}{c|}{\textbf{Olivia}} &\multicolumn{2}{c|}{SEMPO} &\multicolumn{2}{c|}{$\text{Time-MoE}_{B}$} &\multicolumn{2}{c|}{$\text{Time-MoE}_{L}$} &\multicolumn{2}{c|}{Timer}  &\multicolumn{2}{c|}{$\text{Moirai}_{S}$} &\multicolumn{2}{c|}{$\text{Moirai}_{B}$} &\multicolumn{2}{c|}{$\text{Moirai}_{L}$} &\multicolumn{2}{c|}{$\text{Chronos}_{S}$}   &\multicolumn{2}{c|}{$\text{Chronos}_{B}$} &\multicolumn{2}{c|}{$\text{Chronos}_{L}$} &\multicolumn{2}{c}{TimesFM} \\

       \multicolumn{2}{c|}{}  &\multicolumn{2}{c|}{\textbf{Ours}} &\multicolumn{2}{c|}{[NeurIPS'25]} &\multicolumn{2}{c|}{[ICLR'25]} &\multicolumn{2}{c|}{[ICLR'25]} &\multicolumn{2}{c|}{[ICML'24]}  &\multicolumn{2}{c|}{[ICML'24]} &\multicolumn{2}{c|}{[ICML'24]} &\multicolumn{2}{c|}{[ICML'24]} &\multicolumn{2}{c|}{[TMLR'24]}   &\multicolumn{2}{c|}{[TMLR'24]} &\multicolumn{2}{c|}{[TMLR'24]} &\multicolumn{2}{c}{[ICML'24]}  \\
              
       \cmidrule(r){1-2}\cmidrule(lr){3-4}\cmidrule(lr){5-6}\cmidrule(lr){7-8}\cmidrule(lr){9-10}\cmidrule(lr){11-12}\cmidrule(lr){13-14}\cmidrule(lr){15-16}\cmidrule(lr){17-18}\cmidrule(lr){19-20} \cmidrule(lr){21-22} \cmidrule(lr){23-24} \cmidrule(lr){25-26} 
       
       \multicolumn{2}{c|}{Metrics} &MSE &MAE &MSE &MAE &MSE &MAE &MSE &MAE &MSE &MAE &MSE &MAE &MSE &MAE &MSE &MAE &MSE &MAE  &MSE &MAE &MSE &MAE &MSE &MAE   \\
       \midrule[1pt]
         \multicolumn{2}{c|}{ETTh1} &\textcolor{red}{\textbf{0.399}}  &\textcolor{red}{\textbf{0.421}}   &\textcolor{blue}{\textbf{0.410}}  &\textcolor{blue}{\textbf{0.430}}  &0.445  &0.449  &0.435  &0.449  &0.451  &0.463  &0.448  &0.432  &0.433  &0.431  &0.466  &0.443  & 0.551 &0.463  & 0.524 &0.439  &0.541  &0.443  &0.489  &0.444  \\
         \midrule[1pt]

         \multicolumn{2}{c|}{ETTh2}  &\textcolor{red}{\textbf{0.339}}  &\textcolor{red}{\textbf{0.388}}   &\textcolor{blue}{\textbf{0.341}}  &\textcolor{blue}{\textbf{0.391}}  &0.566  &0.479  &0.477  &0.452  &0.366  &0.408  &0.355  &0.401  &0.360  &0.399  &0.382  &0.397  &0.394  &0.409  &0.392  &0.401  &0.385  &0.400  &0.396  &0.405  \\
         \midrule[1pt]

         \multicolumn{2}{c|}{ETTm2}  &\textcolor{blue}{\textbf{0.291}}  &\textcolor{blue}{\textbf{0.344}}   &\textcolor{red}{\textbf{0.289}}  &\textcolor{red}{\textbf{0.343}}  &0.538  &0.463  &0.509  &0.452  &0.298  &0.346  &0.323  &0.351  &0.339  &0.356  &0.334  &0.352  &0.320  &0.355  &0.308  &\textcolor{blue}{\textbf{0.344}}  &0.315  &0.350  &0.320  &0.353  \\
         \midrule[1pt]

         \multicolumn{2}{c|}{Weather}  &\textcolor{red}{\textbf{0.247}}  &\textcolor{red}{\textbf{0.287}}   &\textcolor{blue}{\textbf{0.248}}  &\textcolor{red}{\textbf{0.287}}  &0.279  &0.309  &0.318  &0.334  &0.292  &0.312  &0.267  &0.306  &0.312  &0.295  &0.477  &\textcolor{blue}{\textbf{0.289}} &0.298  &0.302  &0.283  &0.295  &0.292  &0.297  &-  &-  \\
         \midrule[1pt]

         \multicolumn{2}{c|}{Electricity}  &\textcolor{red}{\textbf{0.188}}  &\textcolor{red}{\textbf{0.285}}   &\textcolor{blue}{\textbf{0.196}}  &\textcolor{blue}{\textbf{0.295}}  &-  &-  &-  &-  &0.297  &0.375  &0.243  &0.329  &0.207  &0.296  &0.224  &0.309  &0.246  &0.312  &0.336  &0.329  &0.326  &0.328  &-  &-  \\
         \midrule[1pt]

         \multicolumn{2}{c|}{Traffic}  &\textcolor{red}{\textbf{0.458}}  &\textcolor{red}{\textbf{0.330}}   &\textcolor{blue}{\textbf{0.466}}  &\textcolor{blue}{\textbf{0.344}}  &-  &-  &-  &-  &0.613  &0.407  &-  &-  &-  &-  &-  &-  &0.614  &0.420  &0.603  &0.413  &0.600  &0.411  &-  &-  \\
         \midrule[1pt]

        \multicolumn{2}{c|}{$1^{st}$ Count} &\multicolumn{2}{c|}{\textcolor{red}{\textbf{10}}}  &\multicolumn{2}{c|}{\textcolor{blue}{\textbf{3}}}  &\multicolumn{2}{c|}{0}  &\multicolumn{2}{c|}{0}  &\multicolumn{2}{c|}{0}  &\multicolumn{2}{c|}{0} &\multicolumn{2}{c|}{0} &\multicolumn{2}{c|}{0} &\multicolumn{2}{c|}{0} &\multicolumn{2}{c|}{0} &\multicolumn{2}{c|}{0} &\multicolumn{2}{c}{0} \\
         \bottomrule[2pt]
    \end{tabular}
    }
    \label{tab:zero_shot}
    \vspace{-5mm}
\end{table*}

\begin{table}[t]
\centering
\caption{Zero-shot performance evaluation on datasets provided by GluonTS~\cite{gluonts_jmlr}.
A lower NRMSE/SMAPE indicates a better result.
The best results are in \textcolor{red}{\textbf{red}} and the second best are \textcolor{blue}{\textbf{blue}}.
'\textit{B}': Base, '\textit{L}': Large.}
\label{tab:glounTS}
\scalebox{0.7}{
\begin{tabular}{c|c |c c c c}
    \toprule[2pt]
Datasets & Metrics & \textbf{Olivia} & SEMPO & Time-MoE$_B$ &Time-MoE$_L$ \\
\midrule
\multirow{2}{*}{solar\_nips}
& NRMSE &\textcolor{red}{\textbf{1.443}}   &\textcolor{blue}{\textbf{1.633}}   &2.127   &2.093 \\
& SMAPE &\textcolor{red}{\textbf{1.447}}   &\textcolor{blue}{\textbf{1.460}}   &1.930   &1.821 \\
\midrule

\multirow{2}{*}{elecdemand}
& NRMSE &\textcolor{red}{\textbf{0.066}}   &\textcolor{blue}{\textbf{0.097}}   &0.250   &0.284 \\
& SMAPE &\textcolor{red}{\textbf{0.052}}   &\textcolor{blue}{\textbf{0.075}}  & 0.235  &0.281 \\
\midrule

\multirow{2}{*}{fred\_md}
& NRMSE &\textcolor{blue}{\textbf{2.575}}   &\textcolor{red}{\textbf{2.495}}   &4.330   &4.330 \\
& SMAPE &\textcolor{red}{\textbf{0.296}}   &\textcolor{blue}{\textbf{0.350}}   &1.727   &1.711 \\
\midrule

\multirow{2}{*}{wiki2000\_nips}
& NRMSE &\textcolor{red}{\textbf{4.760}}   &\textcolor{blue}{\textbf{4.886}}   &4.937   &4.937 \\
& SMAPE &\textcolor{red}{\textbf{0.392}}   &\textcolor{blue}{\textbf{0.470}}   &1.999   &1.998 \\
\midrule

\multirow{2}{*}{m4\_hourly}
& NRMSE &\textcolor{red}{\textbf{0.179}}   &0.250   &5.819   &5.819 \\
& SMAPE &\textcolor{red}{\textbf{0.136}}   &0.171   &1.862   &1.767 \\
\midrule

\multirow{2}{*}{oikolab\_weather}
& NRMSE &\textcolor{blue}{\textbf{0.023}}   &\textcolor{red}{\textbf{0.018}}   &2.816   &2.816 \\
& SMAPE &\textcolor{red}{\textbf{0.340}}   &\textcolor{blue}{\textbf{0.360}}   &1.117   &1.294 \\

\bottomrule[2pt]
\end{tabular}
}
\vspace{-4mm}
\end{table}

\vspace{-1mm}
\paragraph{Global Resonator Projection.}
The interacted resonator information is redistributed back to the head-specific token $\text{Head}^{(h)} \in \mathbb{R}^{L \times P}$ using the same weights $A^{(h)}$ employed during token-to-resonator aggregation, ensuring the final output consistent with the spectral
alignment:
\begin{equation}
\text{Head}^{(h)} = A^{(h)} \operatorname{ResAct}(R^{(h)}).
\end{equation}
Then it is mapped from the head-specific subspace to the original token space via the learnable output projection $W_h^\top$.
The final output is obtained by aggregating all heads with a residual connection,
scaled by a learnable parameter $\gamma$:
\begin{equation}
\operatorname{HarmonicAttn}(Z) = Z + \gamma \sum_{h=1}^H \text{Head}^{(h)} W_h^\top,
\end{equation}
where $H$ denotes the number of head-specific subspaces in HarmonicAttention (abbreviated as $\operatorname{HarmonicAttn}$).
\paragraph{Complexity and Scalability.} By mediating interactions through a resonator bottleneck, HarmonicAttention reduces complexity from $\mathcal{O}(L^2 P)$ to $\mathcal{O}(LMP + M^2 P)$. For long sequences where $M \ll L$, this architecture provides linear scalability while maintaining the capacity to model global dependencies through the harmonized spectral subspace.

\vspace{-1mm}
\subsubsection{HarmonicFormer}


Building upon HarmonicAttention, we develop the \textit{HarmonicFormer} architecture as the central modeling backbone of \textit{Olivia}. HarmonicFormer consists of a stacked encoder–decoder configuration where HarmonicAttention serves as the primary mechanism for interaction. This design enables efficient global dependency modeling directly within the PSD-aligned temporal subspace.

By operating on representations spectrally aligned by the Harmonizer, HarmonicFormer bridges structured low-dimensional spectral interactions with the representative depth of Transformers. The result is a highly scalable and expressive backbone optimized for large-scale time-series pretraining across heterogeneous datasets.

\vspace{-2mm}
\subsection{Model Training}
\vspace{-1mm}
To accommodate varying downstream tasks, we define distinct output formulations and optimization objectives for the pretraining and tuning phases. A comprehensive description of these stages, including specific loss functions and architectural headers, is provided in Appendix~\ref{appendix_two_stages}.

\vspace{-2mm}
\section{Experiments}

\vspace{-1mm}
\subsection{Experimental Setup}
\paragraph{Baselines.}
To ensure a fair and thorough evaluation, we compare our method with a diverse set of baselines, including (i) pretrained time series foundation models: SEMPO~\cite{he2025sempo}, Time-MoE~\cite{shi2024time}, Timer~\cite{liu2024timer}, Moirai~\cite{woo2024unified}, Chronos~\cite{ansari2024chronos}, TimesFM~\cite{das2024decoder}, Moment~\cite{goswami2024moment}, Sundial~\cite{liu2025sundial}, ROSE~\cite{wangtowards}, TTM~\cite{ekambaram2024tiny}; (ii) LLM-based models: Time-LLM~\cite{jin2023time}, GPT4TS~\cite{zhou2023one}, {$\text{S}^{2}$IP-LLM}~\cite{pan2024s}; and (iii) task-specific forecasting models: iTransformer~\cite{liu2023itransformer}, DLinear~\cite{zeng2023transformers}, PatchTST~\cite{nie2022time}, TimesNet~\cite{wu2022timesnet}, Stationary~\cite{liu2022non}, FEDformer~\cite{zhou2022fedformer}.

\vspace{-4mm}
\paragraph{Datasets.}
For pretraining, we select a subset from the large-scale publicly available time series collection UTSD~\cite{liu2024timer}, which contains approximately 67 million time points and covers multiple domains, including Transport, Environment, Health, Energy, and Nature.
Following Timer~\cite{liu2024timer} and SEMPO~\cite{he2025sempo}, we split the pretraining data into training and validation sets with a 9:1 ratio.
For model evaluation, we conduct experiments on the Time-Series-Library (TSLib) benchmark~\cite{wu2022timesnet}, the GIFT-Eval benchmark~\cite{aksu2024gift}, and datasets provided by GluonTS~\cite{gluonts_jmlr}.

Additional details of the experimental setup including the implementation details are provided in Appendix~\ref{appendix_experimental_set}.

\begin{figure}[t]
    \centering
    \includegraphics[width=0.95\linewidth]{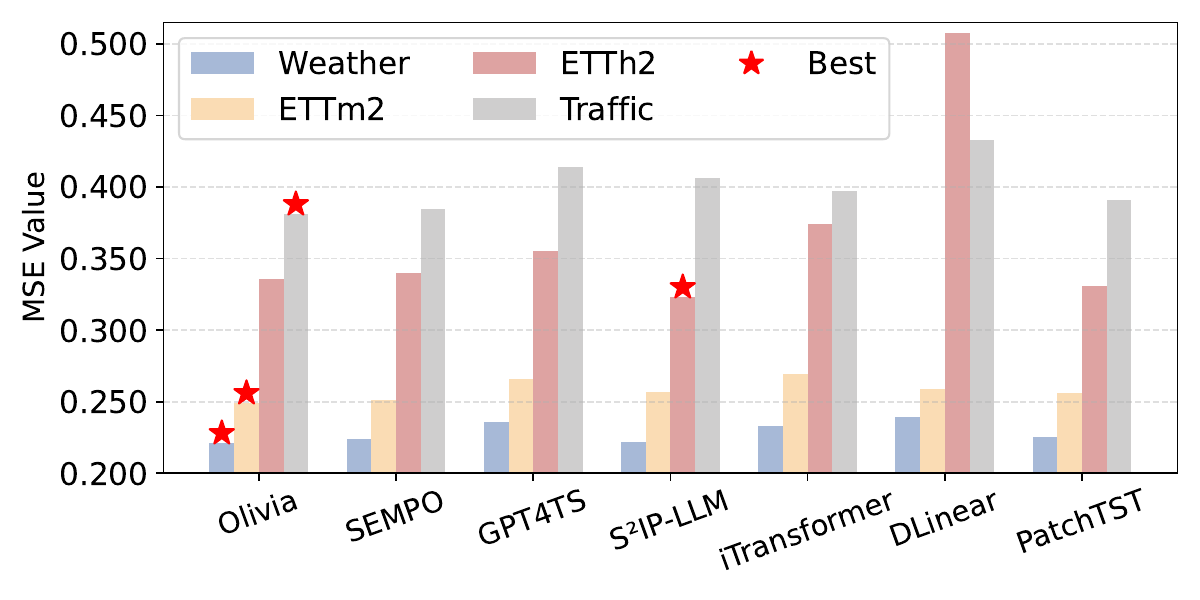}
    \vspace{-4mm}
    \caption{Full-shot results on the ETTh2, ETTm2, Weather and Traffic datasets. There ported
results are averaged across all prediction lengths. Full details in Appendix~\ref{appendix_few_full}.}
    \label{fig:full_shot}
    \vspace{-6mm}
\end{figure}

\vspace{-2mm}
\subsection{Main Results}

As detailed in Tables~\ref{tab:zero_shot} and \ref{tab:glounTS}, Olivia consistently demonstrates strong zero-shot generalization across multiple benchmarks, with particularly pronounced improvements over representative foundation model baselines.
On the TSLib benchmark (see Table~\ref{tab:zero_shot}), Olivia achieves clear gains over the lightweight model SEMPO on large-scale datasets such as Electricity and Traffic, and substantially outperforms the large-scale Time-MoE family, reducing MSE by an average of 26.3\% and MAE by 14.6\% on the reported datasets.
Similar trends are observed on datasets provided by GluonTS (see Table~\ref{tab:glounTS}).
Compared with SEMPO, Olivia achieves an NRMSE reduction of 11.6\% on solar\_nips and more pronounced improvements on elecdemand, with NRMSE and SMAPE reductions of 32.0\% and 30.7\%, respectively. 
When compared with the Time-MoE family, Olivia achieves an average reduction of over 86\% in both NRMSE and SMAPE on the elecdemand and m4\_hourly datasets.
More results are provided in the Appendix~\ref{appendix_exper_zero_shot}.

Beyond the zero-shot setting, Olivia also demonstrates strong performance under the full-shot setting on ETTh2, ETTm2, Weather, and Traffic (see Figure~\ref{fig:full_shot}), achieving lower average MSE and MAE than both pretrained foundation models and task-specific baselines in most cases.
Overall, these results indicate that Olivia delivers robust performance across both zero-shot and full-shot settings, consistently outperforming lightweight and large-scale foundation models on challenging real-world time series datasets.

\begin{figure*}[t]
    \centering
    \includegraphics[width=1.0\linewidth]{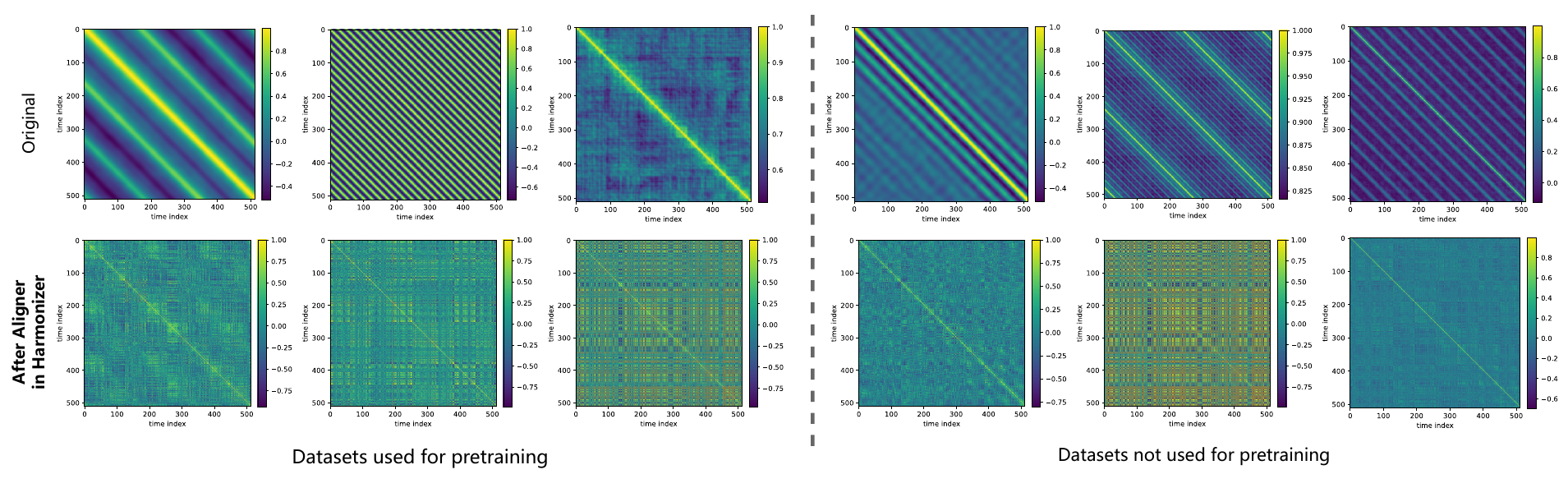}
    \vspace{-8mm}
    \caption{
    Visualization of the second-order temporal correlation comparison between original data and those processed by Aligner.
This figure illustrates the temporal correlation of raw time series from diverse pretraining datasets spanning multiple domains (\textbf{Left}), together with the corresponding correlation patterns after processing with the Aligner in the Harmonizer. Comparable structures are further observed on datasets not used during pretraining (\textbf{Right}), suggesting that the learned alignment generalizes beyond the training data.
}
    \label{fig:temporal_correlation}
    \vspace{-6mm}
\end{figure*}

\begin{figure}[t]
    \centering
    \begin{subfigure}[b]{0.241\textwidth}
        \centering
        \includegraphics[width=\textwidth]{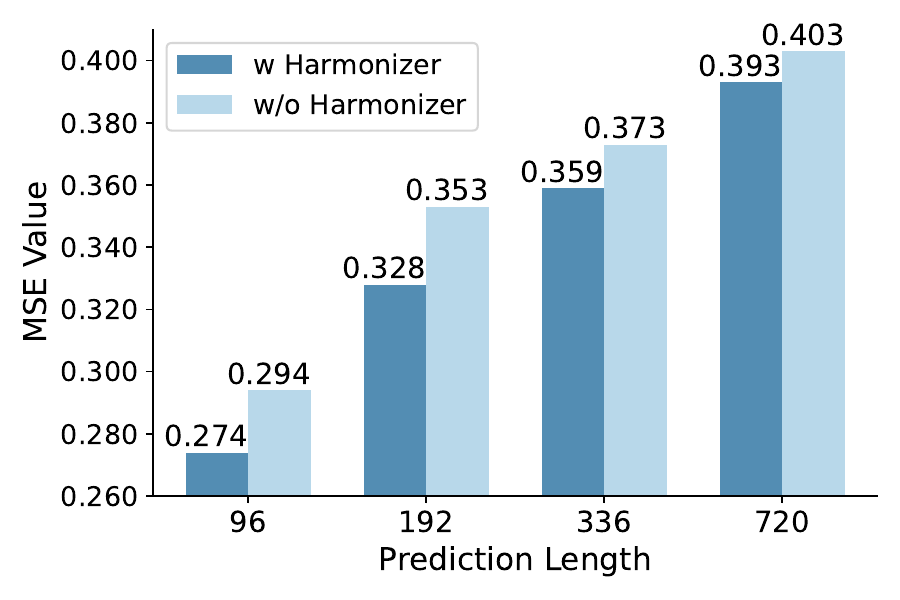}
        \vspace{-6mm}
        \caption{ETTh2}
    \end{subfigure}
    \hspace{-3mm}
    \begin{subfigure}[b]{0.241\textwidth}
        \centering
        \includegraphics[width=\textwidth]{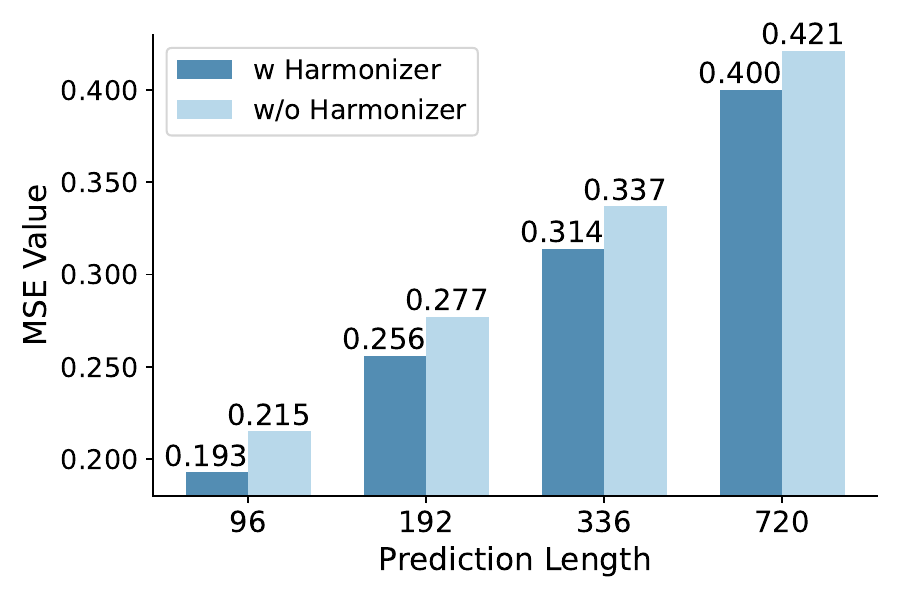}
        \vspace{-6mm}
        \caption{ETTm2}
    \end{subfigure}
    \vspace{-2mm}
    \caption{Ablations of Harmonizer under the zero-shot scenario.}
    \label{fig:abla_ortho}
    \vspace{-4mm}
\end{figure}

\begin{figure}[h]
    \centering
    \begin{subfigure}[b]{0.241\textwidth}
        \centering
        \includegraphics[width=\textwidth]{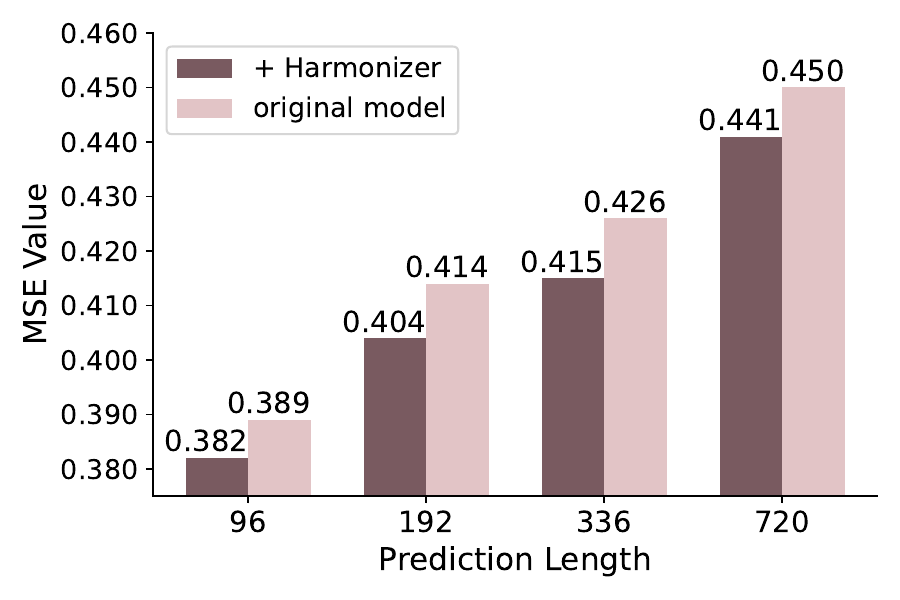}
        \vspace{-6mm}
        \caption{ETTh1}
    \end{subfigure}
    \hspace{-3mm}
    \begin{subfigure}[b]{0.241\textwidth}
        \centering
        \includegraphics[width=\textwidth]{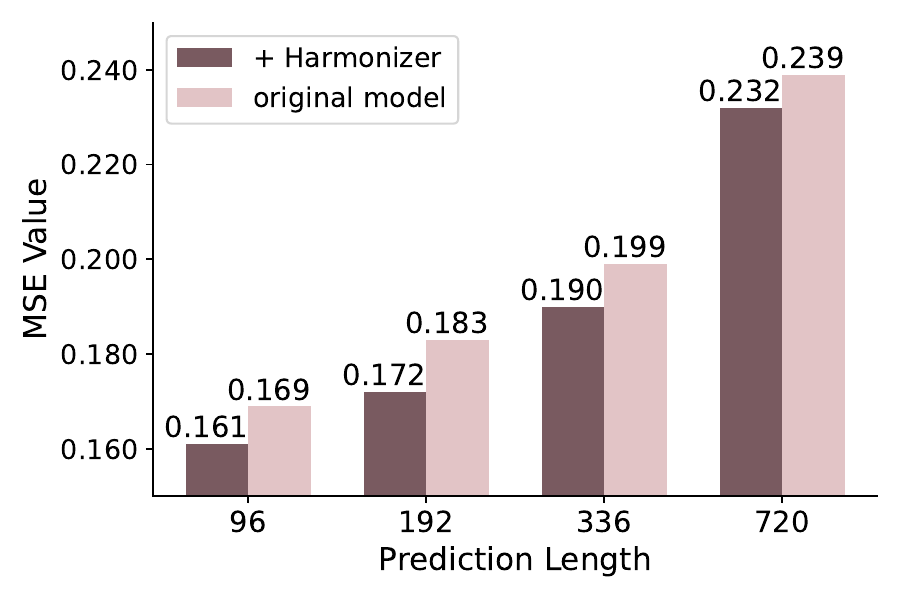}
        \vspace{-6mm}
        \caption{Electricity }
    \end{subfigure}
    \vspace{-2mm}
    \caption{Effectiveness of the proposed Harmonizer on SEMPO. Both the original SEMPO and the Harmonizer-enhanced SEMPO are trained from scratch and evaluated under the zero-shot setting.}
    \label{fig:abla_ortho_sempo}
    \vspace{-5mm}
\end{figure}

\vspace{-2mm}
\subsection{Model Analysis}
\vspace{-1mm}
\subsubsection{Studies of Harmonizer}
\vspace{-1mm}

We evaluate the effectiveness of the proposed Harmonizer by training two model variants from scratch, with and without the Harmonizer, and assessing their zero-shot forecasting performance.
As shown in Figure~\ref{fig:abla_ortho}, the model with the Harmonizer consistently achieves lower MSE across all prediction horizons on both ETTh2 and ETTm2, demonstrating robust gains under long-horizon zero-shot forecasting.
We additionally integrate Harmonizer into the existing foundation model SEMPO, as shown in Figure~\ref{fig:abla_ortho_sempo}, where the consistent MSE reductions highlight its effectiveness beyond the original architecture.


To investigate the mechanism behind these improvements, we analyze the impact of the Harmonizer in Figure~\ref{fig:temporal_correlation}. Although the raw diverse pretraining datasets exhibit highly heterogeneous and domain-specific patterns, the Aligner reorganizes these dependencies into consistent structured representations across datasets.
Moreover, these aligned structures persist in unseen datasets (the right part in Figure~\ref{fig:temporal_correlation}), indicating that the Harmonizer captures transferable temporal regularities. By enforcing this spectral consistency, the model promotes domain-invariant representations that mitigate the accumulation of correlation mismatches.

\begin{table}[h]
    \centering
    \caption{Ablations on HarmonicAttention, where we replace HarmonicAttention with other attention mechanisms. The reported results are averaged across all prediction lengths.}
    \scalebox{0.77}{
    \begin{tabular}{l  c |c c| c c | c c   }
    \toprule[1.5pt]
       \multicolumn{2}{c|}{Datasets}  &\multicolumn{2}{c|}{ETTh1}  &\multicolumn{2}{c|}{Electricity} &\multicolumn{2}{c}{Weather}\\
       \cmidrule(r){1-2}\cmidrule(r){3-4}\cmidrule(r){5-6}\cmidrule(r){7-8}
       \multicolumn{2}{c|}{Metrics} &MSE &MAE  &MSE &MAE &MSE &MAE \\
       \cmidrule(r){1-2}\cmidrule(r){3-4}\cmidrule(r){5-6}\cmidrule(r){7-8}

    \multicolumn{2}{c|}{\textbf{HarmonicAttention}}  &\textbf{0.399}    &\textbf{0.421}    &\textbf{ 0.188}   &\textbf{0.285}    &\textbf{0.247}    &\textbf{0.287}   \\
    \multicolumn{2}{c|}{Full Attention} &0.472    &0.470    & 0.204   &0.300    &0.268    &0.305   \\
    \multicolumn{2}{c|}{Linear Attention} &0.412    &0.428    &0.194    &0.292    &0.257    &0.257   \\
    \multicolumn{2}{c|}{Nyström Attention} &0.488    &0.482    & 0.216   &0.308    &0.266    & 0.304  \\
    
    \bottomrule[1.5pt]
    \end{tabular}
    }
    \label{tab:abla_attn}
\end{table}
\vspace{-3mm}
\subsubsection{Studies of HarmonicAttention}
\vspace{-1mm}
We further study the effectiveness of the proposed HarmonicAttention by replacing it with alternative attention mechanisms, including Full Attention~\cite{vaswani2017attention}, Linear Attention~\cite{katharopoulos2020transformers}, and Nyström Attention~\cite{xiong2021nystromformer}, while keeping all other components unchanged. 
As shown in Table~\ref{tab:abla_attn}, 
replacing HarmonicAttention with Full Attention or its approximations consistently leads to degraded performance across  ETTh1, Electricity, and Weather.
This suggests that the observed gains are not merely due to attention expressiveness, but rather stem from the structured low-dimensional interaction pattern induced by HarmonicAttention, which is better matched to the PSD-consistent temporal representations produced by the Harmonizer.
By operating on compact harmonic tokens instead of dense token-wise interactions, HarmonicAttention more effectively captures global temporal dependencies while suppressing spurious correlations, leading to improved robustness across diverse datasets.

\begin{table}[h]
\centering
\vspace{1mm}
\caption{Efficiency comparison on ETTh1 dataset. The reported MSE and inference time values are averaged over all prediction horizons.
More results are provided in the Appendix~\ref{appendix_additional_result}.}
\vspace{-1mm}
\label{tab:efficiency_ETTh1}
\scalebox{0.8}{
\begin{tabular}{lccc}
\toprule[1.5pt]
Model &MSE &Inference Time (s) &Model Size (M) \\
\midrule
Olivia           & \textbf{0.399} & 43.051  & \textbf{5.1}   \\
SEMPO            & 0.410 & \textbf{8.162}   & 6.5   \\
$\text{Time-MoE}_{B}$    & 0.445 & 4050.209 & 113   \\
Timer            & 0.451 & 103.822 & 67.4  \\
$\text{Moirai}_{B}$    & 0.433 & 60.455  & 91    \\
Sundial          & 0.464 & 67.901  & 128   \\
\bottomrule[1.5pt]
\end{tabular}
}
\vspace{-6mm}
\end{table}

\subsubsection{Efficiency Analysis}
\vspace{-1mm}

Table~\ref{tab:efficiency_ETTh1} evaluates the efficiency of Olivia on the ETTh1 dataset in a zero-shot setting. It highlights that Olivia surpasses other foundation models, regardless of model scale, while maintaining the most compact parameter footprint (5.1M). While our approach exhibits slightly higher latency than the lightweight SEMPO, our effective results outperform its performance.
Nevertheless, Olivia achieves lower inference time than the Moirai family and remains orders of magnitude faster than Time-MoE. These results demonstrate that Olivia strikes a favorable balance between architectural rigor and practical speed, delivering state-of-the-art accuracy with significantly fewer resources.
\vspace{-2mm}
\section{Conclusion and Limitation}
\vspace{-1mm}
We propose Olivia, a time series foundation model that addresses cross-domain heterogeneity through PSD-consistent temporal representations. By combining the Harmonizer for reorganizing second-order temporal dependencies with HarmonicAttention for efficient global modeling, Olivia captures transferable temporal structures across diverse domains. Extensive experiments demonstrate that Olivia achieves strong performance across multiple forecasting benchmarks, particularly on challenging real-world datasets, while maintaining a favorable efficiency–accuracy trade-off. These results underscore the value of structurally harmonizing temporal dynamics for building scalable and generalizable time series foundation models.

Nevertheless, a limitation of Olivia is the additional inference overhead compared with lightweight architectures. This overhead mainly arises from the Harmonizer, where the orthogonal transformation matrix $\mathbf{Q}$ is constructed through sequential Householder reflections. Developing more efficient orthogonal parameterization strategies remains an important direction for future work.

\section{Acknowledgements}
This work was partially supported by the National Natural Science Foundation of China under Grant No. 62402531, the Shaanxi Fundamental Science Research Project for Mathematics and Physics under Grant No. 25JSQ053, and the China Postdoctoral Science Foundation under Grant No. 2024M753010.






\section{Impact Statement}
This work aims to advance time series foundation models by improving representation learning and generalization across heterogeneous datasets. The proposed methods are general-purpose and are not specifically developed for high-risk or sensitive applications.

As with many advances in machine learning, the techniques presented in this paper may be incorporated into downstream systems across a wide range of domains. We do not identify any direct or immediate negative societal impacts arising from this work. Considerations related to responsible use and deployment depend on the specific application context and are the responsibility of practitioners.

\bibliography{example_paper}
\bibliographystyle{icml2026}

\newpage
\appendix
\onecolumn
\section{Theoretical Supplement}~\label{appendix_theoretical}
\subsection{Proof of Proposition~\ref{proposition_1}}~\label{proof_proposition_1}
\begin{appendixPro}
Let $\mathcal{U} \subset \mathbb{R}^{T}$ be an $r$-dimensional subspace that is invariant under the second-order moment matrices of all datasets $\mathcal{D}$ in the pretraining collection $\mathcal{T}_{\mathrm{pretrain}}$, that is, $\boldsymbol{\Sigma}_{\mathcal{D}}^{X} \mathcal{U} \subseteq \mathcal{U}$ where $\boldsymbol{\Sigma}_{\mathcal{D}}^{X} \triangleq \mathbb{E}[(X_{\mathcal{D}})^{\top} X_{\mathcal{D}}]$. Then, there exists a shared orthogonal matrix $\mathbf{Q} \in \mathbb{R}^{T \times T}$ whose first $r$ columns span $\mathcal{U}$, such that the transformed signal $\mathcal{X}_{\mathcal{D}} \triangleq X_{\mathcal{D}} \mathbf{Q}^{\top}$ yields a block-diagonal moment matrix:
\begin{equation}
\boldsymbol{\Sigma}_{\mathcal{D}}^{\mathcal{X}} \triangleq \mathbb{E}\left[\left(\mathcal{X}_{\mathcal{D}}\right)^{\top} \mathcal{X}_{\mathcal{D}}\right]= \mathbf{Q} \boldsymbol{\Sigma}_{\mathcal{D}}^{X} \mathbf{Q}^{\top} = \mathrm{diag}\left( \boldsymbol{\Lambda}_{\mathcal{D}}, \boldsymbol{\Phi}_{\mathcal{D}} \right),
\end{equation}
where $\boldsymbol{\Lambda}_{\mathcal{D}} \in \mathbb{R}^{r \times r}$ characterizes the shared subspace dynamics and $\boldsymbol{\Phi}_{\mathcal{D}} \in \mathbb{R}^{(T-r) \times (T-r)}$ captures dataset-specific variations in the orthogonal complement $\mathcal{U}^{\perp}$.
\end{appendixPro}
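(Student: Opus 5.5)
The plan is to build $\mathbf{Q}$ directly from an orthonormal basis adapted to the decomposition $\mathbb{R}^T=\mathcal{U}\oplus\mathcal{U}^{\perp}$. Then I would show that the off-diagonal blocks of the transformed moment matrix vanish for every dataset. Since $\mathcal{U}$ is the same for all $\mathcal{D}\in\mathcal{T}_{\mathrm{pretrain}}$, this $\mathbf{Q}$ is automatically shared.

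\emph{Step 1 (construction).} Choose an orthonormal basis $u_1,\dots,u_r$ of $\mathcal{U}$ and extend it by an orthonormal basis $u_{r+1},\dots,u_T$ of $\mathcal{U}^{\perp}$. Set $\mathbf{U}_1=[u_1,\dots,u_r]\in\mathbb{R}^{T\times r}$, $\mathbf{U}_2=[u_{r+1},\dots,u_T]$ and $\mathbf{V}=[\mathbf{U}_1\ \mathbf{U}_2]$, which is orthogonal.

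There is a convention point here, and it determines what $\mathbf{Q}$ must be. Because $X_{\mathcal{D}}$ is a row vector and $\mathcal{X}_{\mathcal{D}}=X_{\mathcal{D}}\mathbf{Q}^{\top}$, linearity of expectation gives $\mathbb{E}[\mathcal{X}_{\mathcal{D}}^{\top}\mathcal{X}_{\mathcal{D}}]=\mathbf{Q}\,\mathbb{E}[X_{\mathcal{D}}^{\top}X_{\mathcal{D}}]\,\mathbf{Q}^{\top}=\mathbf{Q}\boldsymbol{\Sigma}^{X}_{\mathcal{D}}\mathbf{Q}^{\top}$. This is the first equality of the claim. Its entries are $q_i^{\top}\boldsymbol{\Sigma}^X_{\mathcal{D}}q_j$, where $q_i$ are the \emph{rows} of $\mathbf{Q}$.

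I would therefore take $\mathbf{Q}=\mathbf{V}^{\top}$. The adapted basis then appears as the columns of $\mathbf{Q}^{\top}$, which is the actual transformation acting on the row signal. I read ``the first $r$ columns span $\mathcal{U}$'' in this sense: the first $r$ basis directions of the new coordinate system span $\mathcal{U}$. I would state this reading explicitly in the write-up. With this choice,
\[
\mathbf{Q}\boldsymbol{\Sigma}^X_{\mathcal{D}}\mathbf{Q}^{\top}=\begin{bmatrix}\mathbf{U}_1^{\top}\boldsymbol{\Sigma}^X_{\mathcal{D}}\mathbf{U}_1 & \mathbf{U}_1^{\top}\boldsymbol{\Sigma}^X_{\mathcal{D}}\mathbf{U}_2\\ \mathbf{U}_2^{\top}\boldsymbol{\Sigma}^X_{\mathcal{D}}\mathbf{U}_1 & \mathbf{U}_2^{\top}\boldsymbol{\Sigma}^X_{\mathcal{D}}\mathbf{U}_2\end{bmatrix}.
\]

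\emph{Step 2 (vanishing of the cross blocks).} This is the key step, and it relies on the symmetry of $\boldsymbol{\Sigma}^X_{\mathcal{D}}$, which holds since $\boldsymbol{\Sigma}^X_{\mathcal{D}}=\mathbb{E}[X_{\mathcal{D}}^{\top}X_{\mathcal{D}}]$ is a second-moment matrix.

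By hypothesis $\boldsymbol{\Sigma}^X_{\mathcal{D}}\mathcal{U}\subseteq\mathcal{U}$. So for every $u_j$ with $j\le r$ we have $\boldsymbol{\Sigma}^X_{\mathcal{D}}u_j\in\mathcal{U}$. This vector is orthogonal to every $u_i$ with $i>r$, so $\mathbf{U}_2^{\top}\boldsymbol{\Sigma}^X_{\mathcal{D}}\mathbf{U}_1=0$. Transposing and using symmetry gives $\mathbf{U}_1^{\top}\boldsymbol{\Sigma}^X_{\mathcal{D}}\mathbf{U}_2=0$ as well.

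Equivalently, $\mathcal{U}^{\perp}$ is also invariant: for $v\in\mathcal{U}^{\perp}$ and $u\in\mathcal{U}$, $\langle\boldsymbol{\Sigma}^X_{\mathcal{D}}v,u\rangle=\langle v,\boldsymbol{\Sigma}^X_{\mathcal{D}}u\rangle=0$. This is the only place where any real content enters. Without symmetry, invariance of $\mathcal{U}$ alone would only give block upper-triangular form. I expect this to be the main (if mild) obstacle, and I would flag the role of symmetry explicitly.

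\emph{Step 3 (conclusion).} Set $\boldsymbol{\Lambda}_{\mathcal{D}}=\mathbf{U}_1^{\top}\boldsymbol{\Sigma}^X_{\mathcal{D}}\mathbf{U}_1\in\mathbb{R}^{r\times r}$ and $\boldsymbol{\Phi}_{\mathcal{D}}=\mathbf{U}_2^{\top}\boldsymbol{\Sigma}^X_{\mathcal{D}}\mathbf{U}_2\in\mathbb{R}^{(T-r)\times(T-r)}$. Both blocks are symmetric positive semidefinite. They are the matrix representations of $\boldsymbol{\Sigma}^X_{\mathcal{D}}$ restricted to $\mathcal{U}$ and to $\mathcal{U}^{\perp}$ respectively, which justifies interpreting them as the shared-subspace part and the complement part.

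Finally, $\mathbf{Q}$ depends only on $\mathcal{U}$ and not on $\mathcal{D}$. Hence the same $\mathbf{Q}$ block-diagonalizes every $\boldsymbol{\Sigma}^X_{\mathcal{D}}$ simultaneously, which completes the argument.
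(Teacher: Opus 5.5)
Your argument follows the paper's proof. You take an orthonormal basis adapted to $\mathbb{R}^T=\mathcal{U}\oplus\mathcal{U}^{\perp}$. You use symmetry of $\boldsymbol{\Sigma}^X_{\mathcal{D}}$ to make $\mathcal{U}^{\perp}$ invariant and kill the cross blocks. And $\mathbf{Q}$ is shared because the basis depends only on $\mathcal{U}$.

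Your attention to the transpose is warranted, and it is exactly where the paper's own proof slips. The paper puts the adapted basis in the \emph{columns} of $\mathbf{Q}$. It then writes out the block form of $\mathbf{Q}^{\top}\boldsymbol{\Sigma}^X_{\mathcal{D}}\mathbf{Q}$ while calling it $\mathbf{Q}\boldsymbol{\Sigma}^X_{\mathcal{D}}\mathbf{Q}^{\top}$. Here is a counterexample to the paper's construction. Take $T=2$, $\mathbf{Q}$ the rotation by $\pi/8$, and $\boldsymbol{\Sigma}^X_{\mathcal{D}}=\mathbf{Q}\,\mathrm{diag}(a,b)\,\mathbf{Q}^{\top}$ with $a>b\geq 0$. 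The line $\mathcal{U}=\mathrm{span}(\mathbf{Q}e_1)$ is invariant and is spanned by the first column of $\mathbf{Q}$. Yet $\mathbf{Q}\boldsymbol{\Sigma}^X_{\mathcal{D}}\mathbf{Q}^{\top}$ is the conjugate of $\mathrm{diag}(a,b)$ by the rotation by $\pi/4$, whose off-diagonal entry is $(a-b)/2\neq 0$.

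Your choice $\mathbf{Q}=\mathbf{V}^{\top}$ makes the computation of $\mathbf{Q}\boldsymbol{\Sigma}^X_{\mathcal{D}}\mathbf{Q}^{\top}$ correct. The cost, which you already flagged, is that the first $r$ \emph{rows} of $\mathbf{Q}$ span $\mathcal{U}$, not the first $r$ columns.

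If you want the statement literally, choose $\mathbf{Q}$ symmetric orthogonal with $\mathbf{Q}\,\mathrm{span}(e_1,\dots,e_r)=\mathcal{U}$. Then rows and columns coincide, and your Step 2 applies verbatim. Such a $\mathbf{Q}$ exists. Take principal-vector pairs $a_i\in\mathrm{span}(e_1,\dots,e_r)$ and $b_i\in\mathcal{U}$ with $a_i^{\top}b_j=\cos\theta_i\,\delta_{ij}$. Set $\mathbf{Q}=\mathbf{I}-2\sum_i w_iw_i^{\top}$ with $w_i\propto a_i-b_i$, summing over the indices with $a_i\neq b_i$. The $w_i$ are mutually orthogonal because $a_i^{\top}a_j=b_i^{\top}b_j=a_i^{\top}b_j=0$ for $i\neq j$. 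Each reflection sends $a_i$ to $b_i$ and fixes every other $a_j$, so $\mathbf{Q}$ maps $\mathrm{span}(e_1,\dots,e_r)$ onto $\mathcal{U}$.
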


\begin{proof}
We first note that for each dataset $\mathcal{D}$, the matrix $\boldsymbol{\Sigma}_{\mathcal{D}}^{X}=\mathbb{E}\left[\left(X_{\mathcal{D}}\right)^{\top} X_{\mathcal{D}}\right] \in \mathbb{R}^{T \times T}$ is symmetric and positive semidefinite. Symmetry follows from $\left(\left(X_{\mathcal{D}}\right)^{\top} X_{\mathcal{D}}\right)^{\top}=\left(X_{\mathcal{D}}\right)^{\top} X_{\mathcal{D}}$, and for any $v \in \mathbb{R}^{T}$, $v^{\top} \boldsymbol{\Sigma}_{\mathcal{D}}^{X} v=\mathbb{E}\left[v^{\top}\left(X_{\mathcal{D}}\right)^{\top} X_{\mathcal{D}} v\right]=\mathbb{E}\left[\left\|X_{\mathcal{D}} v\right\|_{2}^{2}\right] \geq 0$.

\paragraph{Step 1: Invariance of the orthogonal complement}

Let $\mathcal{U}^{\perp}$ denote the orthogonal complement of $\mathcal{U}$ in $\mathbb{R}^{T}$.
We claim that for every $\mathcal{D} \in \mathcal{T}_{\text {pretrain }}$, 
\begin{equation}
    \boldsymbol{\Sigma}_{\mathcal{D}}^{X} \mathcal{U}^{\perp} \subseteq \mathcal{U}^{\perp}.
\end{equation}
To prove this, take any $y \in \mathcal{U}^{\perp}$ and any $x \in \mathcal{U}$.
Using symmetry of $\boldsymbol{\Sigma}_{\mathcal{D}}^{X}$, we have
\begin{equation}
    \left\langle\boldsymbol{\Sigma}_{\mathcal{D}}^{X} y, x\right\rangle=y^{\top} \boldsymbol{\Sigma}_{\mathcal{D}}^{X} x.
\end{equation}
Since $\mathcal{U}$ is invariant, $\Sigma_{\mathcal{D}}^{X} x \in \mathcal{U}$.
Because $y \in \mathcal{U}^{\perp}$, it holds that $y^{\top} u=0$ for all $u \in \mathcal{U}$, hence $y^{\top} \boldsymbol{\Sigma}_{\mathcal{D}}^{X} x=0$.
Therefore $\left\langle\boldsymbol{\Sigma}_{\mathcal{D}}^{X} y, x\right\rangle=0$ for all $x \in \mathcal{U}$, which implies $\Sigma_{\mathcal{D}}^{X} y \in \mathcal{U}^{\perp}$.
This proves $\boldsymbol{\Sigma}_{\mathcal{D}}^{X} \mathcal{U}^{\perp} \subseteq \mathcal{U}^{\perp}$.

Consequently, for every $\mathcal{D}$, the space $\mathbb{R}^{T}$ decomposes as an orthogonal direct sum 
\begin{equation}
    \mathbb{R}^{T}=\mathcal{U} \oplus \mathcal{U}^{\perp},
\end{equation}
and both $\mathcal{U}$ and $\mathcal{U}^{\perp}$ are invariant under $\boldsymbol{\Sigma}_{\mathcal{D}}^{X}$.

\paragraph{Step 2: Constructing a shared orthogonal basis}

Choose an orthonormal basis $\left\{q_{1}, \ldots, q_{r}\right\}$ of $\mathcal{U}$, and an orthonormal basis $\left\{q_{r+1}, \ldots, q_{T}\right\}$ of $\mathcal{U}^{\perp}$.
Define
\begin{equation}
    \mathbf{Q} \triangleq\left[\begin{array}{llllll}q_{1} & \cdots & q_{r} & q_{r+1} & \cdots & q_{T}\end{array}\right] \in \mathbb{R}^{T \times T} .
\end{equation}
Then $\mathbf{Q}^{\top} \mathbf{Q}=\mathbf{I}$, i.e., $\mathbf{Q}$ is orthogonal, and by construction its first $r$ columns span $\mathcal{U}$.
Importantly, $\mathbf{Q}$ is determined solely by the shared subspace $\mathcal{U}$ (and $\mathcal{U}^{\perp}$), hence it is shared across all datasets $\mathcal{D} \in \mathcal{T}_{\text {pretrain }}$.

\paragraph{Step 3: Block-diagonalization}

Partition $\mathbf{Q}$ as $\mathbf{Q}=\left[\mathbf{Q}_{\mathcal{U}} \mathbf{Q}_{\perp}\right]$, where $\mathbf{Q}_{\mathcal{U}} \in \mathbb{R}^{T \times r}$ spans $\mathcal{U}$ and $\mathbf{Q}_{\perp} \in \mathbb{R}^{T \times(T-r)}$ spans $\mathcal{U}^{\perp}$. 
Consider
\begin{equation}
    \mathbf{Q} \boldsymbol{\Sigma}_{\mathcal{D}}^{X} \mathbf{Q}^{\top}=\left(\begin{array}{ll}\mathbf{Q}_{\mathcal{U}}^{\top} \boldsymbol{\Sigma}_{\mathcal{D}}^{X} \mathbf{Q}_{\mathcal{U}} & \mathbf{Q}_{\mathcal{U}}^{\top} \boldsymbol{\Sigma}_{\mathcal{D}}^{X} \mathbf{Q}_{\perp} \\\mathbf{Q}_{\perp}^{\top} \boldsymbol{\Sigma}_{\mathcal{D}}^{X} \mathbf{Q}_{\mathcal{U}} & \mathbf{Q}_{\perp}^{\top} \boldsymbol{\Sigma}_{\mathcal{D}}^{X} \mathbf{Q}_{\perp}\end{array}\right) .
\end{equation}
We show the off-diagonal blocks are zero. Since $\boldsymbol{\Sigma}_{\mathcal{D}}^{X} \mathcal{U} \subseteq \mathcal{U}$, we have $\boldsymbol{\Sigma}_{\mathcal{D}}^{X} \mathbf{Q}_{\mathcal{U}} \subseteq \mathcal{U}$.
Each column of $\mathbf{Q}_{\perp}$ lies in $\mathcal{U}^{\perp}$, hence
\begin{equation}
    \mathbf{Q}_{\perp}^{\top} \boldsymbol{\Sigma}_{\mathcal{D}}^{X} \mathbf{Q}_{\mathcal{U}}=\mathbf{0} .
\end{equation}
By symmetry of $\boldsymbol{\Sigma}_{\mathcal{D}}^{X}$, the other off-diagonal block is the transpose:
\begin{equation}
    \mathbf{Q}_{\mathcal{U}}^{\top} \boldsymbol{\Sigma}_{\mathcal{D}}^{X} \mathbf{Q}_{\perp}=\left(\mathbf{Q}_{\perp}^{\top} \boldsymbol{\Sigma}_{\mathcal{D}}^{X} \mathbf{Q}_{\mathcal{U}}\right)^{\top}=\mathbf{0} .
\end{equation}
Therefore,
\begin{equation}
    \mathbf{Q} \boldsymbol{\Sigma}_{\mathcal{D}}^{X} \mathbf{Q}^{\top}=\left(\begin{array}{cc}\boldsymbol{\Lambda}_{\mathcal{D}} & \mathbf{0} \\\mathbf{0} & \boldsymbol{\Phi}_{\mathcal{D}}\end{array}\right)=\operatorname{diag}\left(\boldsymbol{\Lambda}_{\mathcal{D}}, \boldsymbol{\Phi}_{\mathcal{D}}\right),
\end{equation}
Finally, define the transformed signal $\mathcal{X}_{\mathcal{D}} \triangleq X_{\mathcal{D}} \mathbf{Q}^{\top}$.
By direct expansion,
\begin{equation}
    \boldsymbol{\Sigma}_{\mathcal{D}}^{\mathcal{X}} \triangleq \mathbb{E}\left[\left(\mathcal{X}_{\mathcal{D}}\right)^{\top} \mathcal{X}_{\mathcal{D}}\right]=\mathbb{E}\left[\left(\mathbf{Q} X_{\mathcal{D}}^{\top}\right)\left(X_{\mathcal{D}} \mathbf{Q}^{\top}\right)\right]=\mathbf{Q} \mathbb{E}\left[X_{\mathcal{D}}^{\top} X_{\mathcal{D}}\right] \mathbf{Q}^{\top}=\mathbf{Q} \boldsymbol{\Sigma}_{\mathcal{D}}^{X} \mathbf{Q}^{\top} .
\end{equation}
Combining with the block-diagonalization above yields 
\begin{equation}
    \boldsymbol{\Sigma}_{\mathcal{D}}^{\mathcal{X}}=\operatorname{diag}\left(\boldsymbol{\Lambda}_{\mathcal{D}}, \boldsymbol{\Phi}_{\mathcal{D}}\right),
\end{equation}
which proves the claim.
\end{proof}

\subsection{Proof of Proposition~\ref{prop:token_low_rank}}~\label{proof_proposition_2}
\begin{appendixPro}
Let $\mathcal{X} \in \mathbb{R}^{1 \times T}$ denote an aligned temporal signal with finite second moments, and let
$\boldsymbol{\Sigma}_{\mathcal{X}} \triangleq \mathbb{E}\!\left[\mathcal{X}^{\top} \mathcal{X}\right] \in \mathbb{R}^{T \times T}$ denote its second-order moment matrix.
Assume that $\boldsymbol{\Sigma}_{\mathcal{X}}$ admits a block-diagonal decomposition
$\boldsymbol{\Sigma}_{\mathcal{X}}
=
\operatorname{diag}\!\left( \boldsymbol{\Lambda}, \boldsymbol{\Phi} \right),
\boldsymbol{\Lambda} \in \mathbb{R}^{r \times r},\;
\boldsymbol{\Phi} \in \mathbb{R}^{(T-r)\times (T-r)},$
with $r \ll T$.
Let token representations be generated by linear temporal operators
$z_\ell = M_\ell \mathcal{X}^{\top},
M_\ell \in \mathbb{R}^{d \times T},
\ell = 1,\dots,L,$
and stack them as
$Z = [z_1^{\top}; \dots; z_L^{\top}] \in \mathbb{R}^{L \times d}.$
Define the token Gram matrix
$
\mathbf{C}_Z \triangleq \mathbb{E}[Z Z^{\top}] \in \mathbb{R}^{L \times L},
[\mathbf{C}_Z]_{\ell,\ell'} = \mathbb{E}\!\left[z_\ell^{\top} z_{\ell'}\right].
$
Partition each temporal operator $M_\ell$ conformably with the block structure of $\boldsymbol{\Sigma}_{\mathcal{X}}$ as
$
M_\ell = \begin{bmatrix} A_\ell & B_\ell \end{bmatrix},
A_\ell \in \mathbb{R}^{d \times r},\;
B_\ell \in \mathbb{R}^{d \times (T-r)}.
$
Then, for all $\ell,\ell'$, the token Gram entries admit the decomposition
$
[\mathbf{C}_Z]_{\ell,\ell'}
=
\operatorname{tr}\!\left(A_\ell \boldsymbol{\Lambda} A_{\ell'}^{\top}\right)
+
\operatorname{tr}\!\left(B_\ell \boldsymbol{\Phi} B_{\ell'}^{\top}\right).
$
Moreover, the truncated Gram matrix defined by
$
[\mathbf{C}_Z^{(r)}]_{\ell,\ell'}
\triangleq
\operatorname{tr}\!\left(A_\ell \boldsymbol{\Lambda} A_{\ell'}^{\top}\right)
$
is positive semidefinite and satisfies
\[
\operatorname{rank}(\mathbf{C}_Z^{(r)}) \;\le\; dr.
\]
In addition, the approximation error is bounded as
\begin{equation}
\big| [\mathbf{C}_Z]_{\ell,\ell'} - [\mathbf{C}_Z^{(r)}]_{\ell,\ell'} \big|
\;\le\;
\|\boldsymbol{\Phi}\|_2 \, \|B_\ell\|_F \, \|B_{\ell'}\|_F.
\end{equation}
\end{appendixPro}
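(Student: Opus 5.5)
The plan is a direct computation: first express each Gram entry as a trace against $\boldsymbol{\Sigma}_{\mathcal{X}}$, then use the block-diagonal structure to split it. The low-rank and error claims then follow from a factorization of $\boldsymbol{\Lambda}$ and from Cauchy--Schwarz.

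\textbf{Step 1 (trace form and splitting).} Since $z_\ell = M_\ell \mathcal{X}^{\top}$, we have $z_\ell^{\top} z_{\ell'} = \mathcal{X} M_\ell^{\top} M_{\ell'} \mathcal{X}^{\top}$. This is a scalar, so
\[
z_\ell^{\top} z_{\ell'} = \operatorname{tr}\!\left(M_\ell^{\top} M_{\ell'} \mathcal{X}^{\top}\mathcal{X}\right).
\]
Taking expectations, which is legitimate by finite second moments and linearity of trace, gives $[\mathbf{C}_Z]_{\ell,\ell'} = \operatorname{tr}(M_{\ell'} \boldsymbol{\Sigma}_{\mathcal{X}} M_\ell^{\top})$. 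Substituting $M_\ell = [A_\ell\; B_\ell]$ and $\boldsymbol{\Sigma}_{\mathcal{X}} = \operatorname{diag}(\boldsymbol{\Lambda},\boldsymbol{\Phi})$ kills the cross terms. It leaves
\[
\operatorname{tr}(A_{\ell'}\boldsymbol{\Lambda}A_\ell^{\top}) + \operatorname{tr}(B_{\ell'}\boldsymbol{\Phi}B_\ell^{\top}).
\]
To match the stated index order, I transpose inside each trace. This uses the symmetry of $\boldsymbol{\Lambda}$ and $\boldsymbol{\Phi}$, which are diagonal blocks of the symmetric matrix $\boldsymbol{\Sigma}_{\mathcal{X}}$.

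\textbf{Step 2 (PSD and rank).} $\boldsymbol{\Sigma}_{\mathcal{X}}$ is PSD, since $v^{\top}\boldsymbol{\Sigma}_{\mathcal{X}}v=\mathbb{E}[(\mathcal{X}v)^2]\ge 0$, as in the proof of Proposition~\ref{proposition_1}. Its principal block $\boldsymbol{\Lambda}$ is therefore PSD and factors as $\boldsymbol{\Lambda}=SS^{\top}$ with $S\in\mathbb{R}^{r\times r}$. Then
\[
\operatorname{tr}(A_\ell SS^{\top}A_{\ell'}^{\top}) = \langle A_\ell S, A_{\ell'}S\rangle_F = g_\ell^{\top} g_{\ell'}, \qquad g_\ell \triangleq \operatorname{vec}(A_\ell S)\in\mathbb{R}^{dr}.
\]
Stacking the $g_\ell^{\top}$ as the rows of $G\in\mathbb{R}^{L\times dr}$ gives $\mathbf{C}_Z^{(r)} = GG^{\top}$. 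This is PSD, and $\operatorname{rank}(\mathbf{C}_Z^{(r)})=\operatorname{rank}(G)\le dr$.

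\textbf{Step 3 (error bound).} The difference is exactly $\operatorname{tr}(B_\ell\boldsymbol{\Phi}B_{\ell'}^{\top}) = \langle B_\ell\boldsymbol{\Phi}, B_{\ell'}\rangle_F$. Cauchy--Schwarz in the Frobenius inner product bounds its absolute value by $\|B_\ell\boldsymbol{\Phi}\|_F\,\|B_{\ell'}\|_F$. The inequality $\|B\boldsymbol{\Phi}\|_F\le\|B\|_F\|\boldsymbol{\Phi}\|_2$ holds because each row $b$ of $B$ satisfies $\|b\boldsymbol{\Phi}\|_2\le\|\boldsymbol{\Phi}\|_2\|b\|_2$; applying it gives the claimed bound.

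None of the steps is deep. The only points needing care are bookkeeping ones. The first is the index and transpose order in the trace identity of Step 1, which is resolved by the symmetry of the blocks. The second is justifying that $\boldsymbol{\Lambda}$ is PSD so that the factorization $\boldsymbol{\Lambda}=SS^{\top}$ exists; this is what yields the $dr$ rank bound.
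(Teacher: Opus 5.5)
Your proof is correct and follows the paper's argument essentially step for step. Both write the Gram entries in trace form, split them with the block structure, show $\mathbf{C}_Z^{(r)}=GG^{\top}$ through a square root of $\boldsymbol{\Lambda}$ (your $S$ plays the role of the paper's $\boldsymbol{\Lambda}^{1/2}$), and bound the residual by Cauchy--Schwarz. The only variations are in your favor. In Step 3 you pair $B_\ell\boldsymbol{\Phi}$ with $B_{\ell'}$ instead of splitting $\boldsymbol{\Phi}^{1/2}$ symmetrically, which gives the same bound without needing $\boldsymbol{\Phi}$ to be positive semidefinite. In Step 1 you explicitly invoke symmetry of the blocks to fix the index order, whereas the paper attributes that step to cyclicity of the trace alone.
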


\begin{proof}
We first rewrite each Gram entry in a form that explicitly involves the second-order moment matrix
$\boldsymbol{\Sigma}_{\mathcal{X}}$.
Recall that
$z_\ell = M_\ell \mathcal{X}^{\top} \in \mathbb{R}^{d}$ and
$[\mathbf{C}_Z]_{\ell,\ell'} = \mathbb{E}\!\left[z_\ell^{\top} z_{\ell'}\right]$.
For any $\ell,\ell'$, we have
\begin{align}
z_\ell^{\top} z_{\ell'}
&= (M_\ell \mathcal{X}^{\top})^{\top} (M_{\ell'} \mathcal{X}^{\top})
= \mathcal{X} M_\ell^{\top} M_{\ell'} \mathcal{X}^{\top}. \label{eq:zlzl}
\end{align}
Using the identity $u^{\top}Ku = \operatorname{tr}(Kuu^{\top})$ with $u=\mathcal{X}^{\top}$, and taking expectation, we obtain
\begin{align}
[\mathbf{C}_Z]_{\ell,\ell'}
&= \mathbb{E}\!\left[\mathcal{X} M_\ell^{\top} M_{\ell'} \mathcal{X}^{\top}\right]
= \operatorname{tr}\!\left(M_\ell^{\top} M_{\ell'} \, \mathbb{E}[\mathcal{X}^{\top}\mathcal{X}]\right) \notag\\
&= \operatorname{tr}\!\left(M_\ell^{\top} M_{\ell'} \boldsymbol{\Sigma}_{\mathcal{X}}\right)
= \operatorname{tr}\!\left(M_\ell \boldsymbol{\Sigma}_{\mathcal{X}} M_{\ell'}^{\top}\right),
\label{eq:gram-trace}
\end{align}
where the last equality follows from cyclicity of trace.

\paragraph{Step 1: Decomposition of Gram entries.}

Assume $\boldsymbol{\Sigma}_{\mathcal{X}} = \operatorname{diag}(\boldsymbol{\Lambda},\boldsymbol{\Phi})$ with
$\boldsymbol{\Lambda}\in\mathbb{R}^{r\times r}$ and
$\boldsymbol{\Phi}\in\mathbb{R}^{(T-r)\times(T-r)}$.
Partition each $M_\ell\in\mathbb{R}^{d\times T}$ conformably as
$M_\ell = [A_\ell\; B_\ell]$ with
$A_\ell\in\mathbb{R}^{d\times r}$ and
$B_\ell\in\mathbb{R}^{d\times (T-r)}$.
Then
\begin{align}
M_\ell \boldsymbol{\Sigma}_{\mathcal{X}} M_{\ell'}^{\top}
&=
\begin{bmatrix} A_\ell & B_\ell \end{bmatrix}
\begin{bmatrix} \boldsymbol{\Lambda} & 0 \\ 0 & \boldsymbol{\Phi} \end{bmatrix}
\begin{bmatrix} A_{\ell'}^{\top} \\ B_{\ell'}^{\top} \end{bmatrix}
=
A_\ell \boldsymbol{\Lambda} A_{\ell'}^{\top}
+
B_\ell \boldsymbol{\Phi} B_{\ell'}^{\top}.
\label{eq:block-prod}
\end{align}
Combining \eqref{eq:gram-trace} and \eqref{eq:block-prod} yields, for all $\ell,\ell'$,
\begin{align}
[\mathbf{C}_Z]_{\ell,\ell'}
&=
\operatorname{tr}\!\left(A_\ell \boldsymbol{\Lambda} A_{\ell'}^{\top}\right)
+
\operatorname{tr}\!\left(B_\ell \boldsymbol{\Phi} B_{\ell'}^{\top}\right),
\end{align}
which proves the stated decomposition.

\paragraph{Step 2: Positive semidefiniteness and rank bound of the truncated Gram matrix.}

Define the truncated Gram matrix $\mathbf{C}_Z^{(r)}\in\mathbb{R}^{L\times L}$ by
\[
[\mathbf{C}_Z^{(r)}]_{\ell,\ell'} \triangleq
\operatorname{tr}\!\left(A_\ell \boldsymbol{\Lambda} A_{\ell'}^{\top}\right).
\]
Since $\boldsymbol{\Sigma}_{\mathcal{X}}$ is a second-order moment matrix, it is symmetric positive semidefinite,
and hence its principal block $\boldsymbol{\Lambda}$ is also symmetric positive semidefinite.
Let $\boldsymbol{\Lambda}^{1/2}$ denote its symmetric positive semidefinite square root.
Using cyclicity of trace, we can rewrite each entry as
\begin{align}
[\mathbf{C}_Z^{(r)}]_{\ell,\ell'}
&=
\operatorname{tr}\!\left(A_\ell \boldsymbol{\Lambda} A_{\ell'}^{\top}\right)
=
\operatorname{tr}\!\left(A_\ell \boldsymbol{\Lambda}^{1/2} (\boldsymbol{\Lambda}^{1/2} A_{\ell'}^{\top})\right)
=
\left\langle A_\ell \boldsymbol{\Lambda}^{1/2},\, A_{\ell'} \boldsymbol{\Lambda}^{1/2} \right\rangle_F,
\label{eq:psd-gram}
\end{align}
where $\langle U,V\rangle_F \triangleq \operatorname{tr}(UV^{\top})$ denotes the Frobenius inner product.
Therefore, $\mathbf{C}_Z^{(r)}$ is a Gram matrix of the set
$\{A_\ell \boldsymbol{\Lambda}^{1/2}\}_{\ell=1}^{L}$ under $\langle\cdot,\cdot\rangle_F$,
which implies that $\mathbf{C}_Z^{(r)}$ is positive semidefinite.

Moreover, define vectors $g_\ell \triangleq \operatorname{vec}(A_\ell \boldsymbol{\Lambda}^{1/2}) \in \mathbb{R}^{dr}$.
Then \eqref{eq:psd-gram} becomes
$[\mathbf{C}_Z^{(r)}]_{\ell,\ell'} = g_\ell^{\top} g_{\ell'}$.
Let $G \triangleq [g_1,\dots,g_L]^{\top} \in \mathbb{R}^{L\times dr}$.
We can write $\mathbf{C}_Z^{(r)} = GG^{\top}$, hence
\[
\operatorname{rank}(\mathbf{C}_Z^{(r)}) \le \operatorname{rank}(G) \le dr.
\]

\paragraph{Step 3: Entrywise approximation error bound.}

By the decomposition in Step 1, we have
\[
[\mathbf{C}_Z]_{\ell,\ell'} - [\mathbf{C}_Z^{(r)}]_{\ell,\ell'}
=
\operatorname{tr}\!\left(B_\ell \boldsymbol{\Phi} B_{\ell'}^{\top}\right).
\]
Since $\boldsymbol{\Phi}$ is symmetric positive semidefinite, let $\boldsymbol{\Phi}^{1/2}$ denote its square root.
Then
\begin{align}
\operatorname{tr}\!\left(B_\ell \boldsymbol{\Phi} B_{\ell'}^{\top}\right)
&=
\operatorname{tr}\!\left(B_\ell \boldsymbol{\Phi}^{1/2} (B_{\ell'} \boldsymbol{\Phi}^{1/2})^{\top}\right)
=
\left\langle B_\ell \boldsymbol{\Phi}^{1/2},\, B_{\ell'} \boldsymbol{\Phi}^{1/2}\right\rangle_F.
\end{align}
Applying Cauchy--Schwarz under the Frobenius inner product yields
\begin{align}
\left|
\operatorname{tr}\!\left(B_\ell \boldsymbol{\Phi} B_{\ell'}^{\top}\right)
\right|
&\le
\|B_\ell \boldsymbol{\Phi}^{1/2}\|_F \, \|B_{\ell'} \boldsymbol{\Phi}^{1/2}\|_F \notag\\
&\le
\|\boldsymbol{\Phi}^{1/2}\|_2^2 \, \|B_\ell\|_F \, \|B_{\ell'}\|_F
=
\|\boldsymbol{\Phi}\|_2 \, \|B_\ell\|_F \, \|B_{\ell'}\|_F,
\end{align}
where we used the submultiplicativity $\|UV\|_F \le \|U\|_F \|V\|_2$ and $\|\boldsymbol{\Phi}^{1/2}\|_2^2=\|\boldsymbol{\Phi}\|_2$.
This proves the claimed entrywise error bound:
\[
\big| [\mathbf{C}_Z]_{\ell,\ell'} - [\mathbf{C}_Z^{(r)}]_{\ell,\ell'} \big|
\;\le\;
\|\boldsymbol{\Phi}\|_2 \, \|B_\ell\|_F \, \|B_{\ell'}\|_F.
\]
\end{proof}

\begin{figure*}[h]
    \centering
    \includegraphics[width=0.80\linewidth]{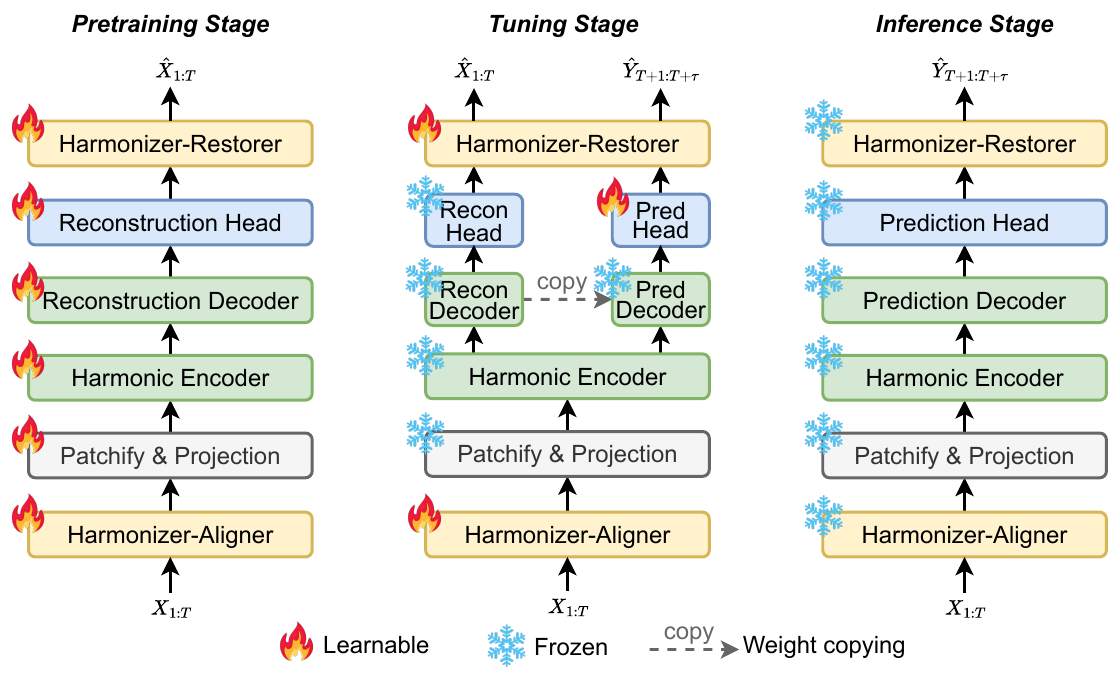}
    \caption{Overview of Olivia’s training and inference pipeline across the pretraining, tuning, and inference stages.}
    \label{fig:three_stages}
\end{figure*}

\section{Details of Pretraining, Tuning, and Inference Stages}\label{appendix_two_stages}
\paragraph{Pretraining stage.}
As illustrated in Figure~\ref{fig:three_stages}, Olivia is pretrained under a reconstruction objective, enabling it to learn domain-agnostic temporal representations from large-scale heterogeneous datasets.
Given an input time series $X_{1: T}$, the Aligner in Harmonizer first performs a learnable orthogonal temporal reparameterization, projecting the input into a PSD-harmonized temporal space. 
The aligned signal is then patchified and projected into token representations, which are processed by the Harmonic Encoder. A reconstruction decoder and head are applied to recover the aligned temporal signal, which is subsequently mapped back to the original temporal domain by the Restorer in Harmonizer. 
All components in this stage, including the Harmonizer, encoder, decoder, and reconstruction head, are jointly optimized, enabling the model to learn a shared temporal organization that reduces cross-domain spectral discrepancies.
The pretraining objective is defined as the mean squared reconstruction error:
\begin{equation}
    \mathcal{L}_{\text {pretraining }}=\frac{1}{T}\left\|X_{1: T}-\hat{X}_{1: T}\right\|_{2}^{2}.
\end{equation}

\paragraph{Tuning stage.}
During the tuning stage, Olivia adapts the pretrained representations to task-oriented forecasting while retaining the PSD-consistent temporal organization learned during pretraining. 
As illustrated in Figure~\ref{fig:three_stages}, the Aligner in Harmonizer remains learnable, allowing the temporal reparameterization to be further refined for downstream forecasting tasks. 
In contrast, the patch embedding layers and the Harmonic Encoder are frozen to preserve the shared temporal representations learned from large-scale pretraining.
To enable a smooth transition from reconstruction-based pretraining to forecasting, the reconstruction decoder learned during pretraining is retained, and its weights are copied to the prediction decoder. 
A prediction head is attached to generate future forecasts, while the reconstruction head is preserved to provide an auxiliary reconstruction signal. 
During tuning, only the Aligner and Restorer in Harmonizer, together with the prediction head, are updated, whereas the remaining components are frozen to prevent disruption of the learned representation space.
To enhance model generalization across different forecasting horizons, the tuning objective jointly optimizes forecasting accuracy and representation stability.
Let $\mathcal{T}=\left\{\tau_{1}, \tau_{2}, \ldots, \tau_{H}\right\}$ denote a collection of forecasting horizons, where each $\tau_{h}$ specifies the number of future steps to be predicted at the $h$-th horizon.
Given the corresponding ground-truth sequences $Y_{T+1: T+\tau_{h}}$ for $h=1,2, \ldots, H$, the tuning loss is defined as
\begin{equation}
    \mathcal{L}_{\text {tuning }}=\sum_{\tau \in \mathcal{T}}\left\|Y_{T+1:T+\tau}-\hat{Y}_{T+1:T+\tau}\right\|_{2}^{2}+\left\|X_{1: T}-\hat{X}_{1: T}\right\|_{2}^{2}.
\end{equation}

\paragraph{Inference Stage.}
During inference, Olivia performs forecasting using the fully trained model without updating any parameters.
As illustrated in Figure~\ref{fig:three_stages}, all components remain frozen during inference, ensuring stable and efficient prediction while fully leveraging the temporal organization learned during pretraining and tuning.

\begin{figure*}[t]
    \centering
    \includegraphics[width=0.95\linewidth]{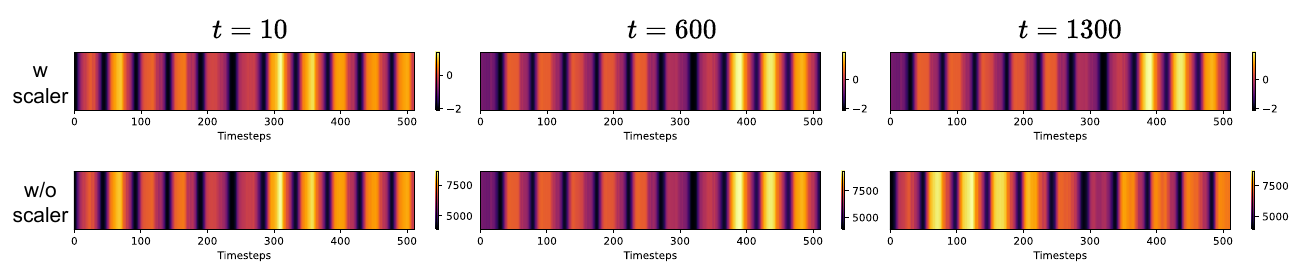}
    \caption{Visualization of temporal patterns with and without StandardScaler\footnote{https://scikit-learn.org/stable/modules/generated/sklearn.preprocessing.StandardScaler.html} on sequences sampled from the same dataset in the Energy domain.
    Time series segments of length $T=512$ are extracted starting from different time indices $t_{i} \in \{10,600,1300\}$. 
    The top row shows sequences after applying StandardScaler, while the bottom row shows the corresponding raw sequences without scaling. Although StandardScaler rescales the numerical values, the underlying temporal patterns and temporal correlations remain unchanged, indicating that normalization does not alter the intrinsic temporal structure of the time series.}
    \label{fig:temporal_pattern_scaler}
    \vspace{-4mm}
\end{figure*}

\section{Empirical Analysis of Temporal Patterns and Power Spectral Density Discrepancies Across Domains}
\label{appendix_empirical_analysis}
\subsection{Temporal Pattern Visualization}
To visualize temporal patterns across domains, we randomly sample time series segments of length $T=512$ from the pretraining datasets and represent each sequence using a one-dimensional heatmap. This visualization enables clear inspection of domain-specific temporal structures, such as periodicity and rhythm.
Following the data preprocessing pipeline used during model training~\cite{he2025sempo}, we apply a StandardScaler to each sampled sequence before visualization. 
This step ensures consistency with the inputs seen by the model and avoids misleading differences caused purely by scale or offset variations. The resulting visualizations are shown in Figure~\ref{fig:motivation}(a).
It is important to note that applying the StandardScaler does not alter the underlying temporal patterns or temporal correlations of the sequence, but only rescales the numerical values. 
As illustrated in Figure~\ref{fig:temporal_pattern_scaler}, temporal structures remain visually consistent before and after scaling, even at different time indices. Therefore, the observed differences in Figure~\ref{fig:motivation}(a) reflect genuine domain-specific temporal patterns rather than artifacts introduced by normalization.

\subsection{Empirical Estimation of Dataset-Level Dataset-Level Power Spectral Density}\label{appendix_empirical_estimation}
Given a domain dataset $\mathcal{D}$, we empirically estimate its dataset-level power spectral density (PSD) to characterize how spectral energy is distributed across frequencies at the dataset level.

\paragraph{Window Sampling and Preprocessing.}
From the raw long time series in each domain dataset, we first construct a collection of fixed-length windows. Specifically, we randomly sample $N$ windows $\left\{x^{(n)}\right\}_{n=1}^{N}, x^{(n)} \in \mathbb{R}^{T}$, where $T=512$ in all experiments and $N=2000$ windows are sampled per domain dataset.
Each window is treated as a univariate time series.
To mitigate instance-level distributional variations within a domain, we apply instance-wise normalization to each window:
\begin{equation}
    \tilde{x}^{(n)}=\frac{x^{(n)}-\mu^{(n)}}{\sigma^{(n)}+\varepsilon},
\end{equation}
where $\mu^{(n)}$ and $\sigma^{(n)}$ denote the mean and standard deviation of the $n$-th window along the temporal dimension, $\varepsilon$ is a small constant to prevent division by zero. 
This normalization ensures that each window is approximately zero-mean and unit-variance, while preserving its temporal correlation structure.

\paragraph{Spectral Estimation via Periodogram.}
For each normalized window $\tilde{x}^{(n)}$, we estimate its power spectral density using the discrete Fourier transform~\cite{brigham1988fast}.
The periodogram-based PSD~\cite{welch2003use} of $\tilde{x}^{(n)}$ is computed as:
\begin{equation}
    S_{\tilde{x}^{(n)}}\left(\omega_{k}\right)=\frac{1}{T}\left|\sum_{t=0}^{T-1} \tilde{x}_{t}^{(n)} e^{-j \omega_{k} t}\right|^{2},
\end{equation}
where $\omega_{k}=\frac{2 \pi k}{T}, k=0,1, \ldots,\lfloor T / 2\rfloor$.


\paragraph{Dataset-Level PSD Aggregation.}
The normalized dataset-level PSD for domain dataset $\mathcal{D}$ is defined as the average of individual periodograms over all sampled windows, normalized to form a probability distribution over frequencies:
\begin{equation}
\mathcal{P}_{\mathcal{D}}(\omega_k)
=
\frac{\frac{1}{N}\sum_{n=1}^{N} S_{\tilde{x}^{(n)}}(\omega_k)}
{\sum_{k}\frac{1}{N}\sum_{n=1}^{N} S_{\tilde{x}^{(n)}}(\omega_k)}.
\end{equation}
This normalized dataset-level PSD $\mathcal{P}_{\mathcal{D}}$ forms a probability distribution over frequencies, capturing the relative allocation of spectral energy and serving as the basis for cross-domain spectral discrepancy measurement in Appendix~\ref{appendix:JS_divergence}.

\subsection{Jensen–Shannon Divergence for Measuring Cross-Domain PSD Discrepancies}\label{appendix:JS_divergence}
To quantify spectral discrepancies between different domains, we compute the Jensen–Shannon (JS) divergence~\cite{lin2002divergence,endres2003new} between pairs of dataset-level normalized PSD distributions.
Given two domain datasets $\mathcal{D}_{i}$ and $\mathcal{D}_{j}$, let ${\mathcal{P}}_{\mathcal{D}_{i}}\left(\omega_{k}\right)$ and ${\mathcal{P}}_{\mathcal{D}_{j}}\left(\omega_{k}\right)$ denote their normalized dataset-level PSDs estimated as described in Appendix~\ref{appendix_empirical_estimation}, where $\omega_{k}=2 \pi k / T$ and $k=0,1, \ldots,\lfloor T / 2\rfloor$.
The pairwise JS divergence is defined as
\begin{equation}
    \mathrm{JS}\left({\mathcal{P}}_{\mathcal{D}_{i}}, {\mathcal{P}}_{\mathcal{D}_{j}}\right)=\frac{1}{2} \mathrm{KL}\left({\mathcal{P}}_{\mathcal{D}_{i}} \| \mathcal{M}_{i, j}\right)+\frac{1}{2} \mathrm{KL}\left({\mathcal{P}}_{\mathcal{D}_{j}} \| \mathcal{M}_{i, j}\right),
\end{equation}
where $\mathcal{M}_{i, j}=\frac{1}{2}\left({\mathcal{P}}_{\mathcal{D}_{i}}+{\mathcal{P}}_{\mathcal{D}_{j}}\right)$ denotes the mixture distribution, and $\mathrm{KL}(\cdot \| \cdot)$ is the Kullback–Leibler (KL) divergence~\cite{kullback1951information}.
The KL divergence between two discrete distributions $P$ and $Q$, defined over the discrete frequency bins $\left\{\omega_{k}\right\}_{k=0}^{\lfloor T / 2\rfloor}$, is given by
\begin{equation}
    \mathrm{KL}(P \| Q)=\sum_{k=0}^{\lfloor T / 2\rfloor} P\left(\omega_{k}\right) \log \frac{P\left(\omega_{k}\right)}{Q\left(\omega_{k}\right)}.
\end{equation}
Compared to the KL divergence, the JS divergence enjoys several favorable properties in our setting: it is symmetric, always finite, and bounded within $[0, \log 2]$, which makes it more stable and interpretable for measuring discrepancies between empirical PSD distributions across domains.
In Figure~\ref{fig:motivation}(b) and Table~\ref{tab:harmonizer_revin}, we report this pairwise JS divergence for domain pairs before and after applying the Harmonizer, providing a quantitative measure of cross-domain spectral discrepancy and its reduction through the Aligner in Harmonizer.

\section{Further Discussion}
\subsection{Distribution Shift vs Domain Shift}
In time series forecasting, distribution shift typically refers to temporal changes in instance-level statistical properties, such as mean and variance, between the training and test splits of the same dataset, which are usually separated by time~\cite{kim2021reversible,fan2023dish}. Such shifts can degrade forecasting performance even when the underlying temporal structure remains unchanged, and are commonly addressed by instance-wise normalization techniques, such as Reversible Instance Normalization (RevIN)~\cite{kim2021reversible}.

However, domain shift arises in a fundamentally different setting, where training and evaluation data originate from different datasets or domains with heterogeneous temporal dynamics. In this case, discrepancies are not limited to first-order statistics, but are often manifested in second-order temporal structures, such as correlation patterns and power spectral density (PSD) distributions. To explicitly disentangle these two types of shift and assess whether instance-level normalization is sufficient to mitigate cross-domain discrepancies, we conduct the analysis reported in Table~\ref{tab:harmonizer_revin}.

Table~\ref{tab:harmonizer_revin} compares cross-domain Jensen–Shannon (JS) divergence of normalized PSD distributions before and after applying the Harmonizer, with and without RevIN. Without the Harmonizer, substantial JS divergences are observed across all domain pairs, indicating pronounced domain-level heterogeneity in spectral structure. Applying RevIN alone does not consistently reduce these divergences and, in several cases, even increases cross-domain JS divergence, suggesting that alleviating instance-level distribution shift is insufficient to reconcile structural discrepancies across datasets.

In contrast, the Harmonizer consistently and substantially reduces cross-domain JS divergence across all domain pairs, regardless of whether RevIN is applied. This empirical evidence highlights a clear distinction between distribution shift and domain shift: while RevIN effectively addresses within-dataset, instance-level distribution variations, it does not resolve cross-dataset discrepancies rooted in heterogeneous temporal correlation and spectral structures. By explicitly reorganizing temporal dynamics into a PSD-consistent representation space, the Harmonizer directly targets domain shift, achieving robust cross-domain alignment beyond the scope of instance-wise normalization.

\begin{table}[h]
\centering
\caption{Cross-domain JS divergence before and after Harmonizer, with and without RevIN.}
\label{tab:cross_domain_js}
\begin{tabular}{lcccc}
\toprule[1.5pt]
\multirow{2}{*}{Cross-domain} 
& \multicolumn{2}{c}{Without RevIN} 
& \multicolumn{2}{c}{With RevIN} \\
\cmidrule(lr){2-3} \cmidrule(lr){4-5}
& Original & After Harmonizer & Original & After Harmonizer \\
\midrule
Energy \& Nature        & 0.501872 & 0.074023 & 0.508711 & 0.081131 \\
Nature \& Health        & 0.518192 & 0.059645 & 0.533421 & 0.091080 \\
Health \& Transport     & 0.593384 & 0.035786 & 0.590491 & 0.077249 \\
Transport \& Environment& 0.420261 & 0.026662 & 0.290352 & 0.051821 \\
Environment \& Energy   & 0.238952 & 0.013364 & 0.221595 & 0.026951 \\
\bottomrule[1.5pt]
\end{tabular}
\label{tab:harmonizer_revin}
\end{table}


\section{Experimental Details}~\label{appendix_experimental_set}
\subsection{Baselines}
We evaluate our model against a diverse and representative set of baselines spanning multiple modeling paradigms in time series forecasting. Specifically, the comparison includes three major categories.
(i) Pretrained time series foundation models, which aim to learn transferable temporal representations from large-scale corpora and have recently demonstrated strong generalization capabilities across domains and tasks. This category includes SEMPO~\cite{he2025sempo}, Time-MoE~\cite{shi2024time}, Timer~\cite{liu2024timer}, Moirai~\cite{woo2024unified}, Chronos~\cite{ansari2024chronos}, TimesFM~\cite{das2024decoder}, Moment~\cite{goswami2024moment}, Sundial~\cite{liu2025sundial}, ROSE~\cite{wangtowards}, and TTM~\cite{ekambaram2024tiny}.
Table~\ref{tab:baseline_details} provides a comparison of representative time-series foundation models in terms of their architectural designs, model scales, training data sizes, and context lengths.
(ii) LLM-based models, including Time-LLM~\cite{jin2023time}, GPT4TS~\cite{zhou2023one}, and {$\text{S}^{2}$IP-LLM}~\cite{pan2024s}, which leverage large language models as sequence encoders or reasoning backbones for time series forecasting. These methods represent an emerging line of research that explores the transferability of linguistic priors to temporal data and serve as strong baselines for evaluating the effectiveness of foundation-model-based approaches.
(iii) Task-specific forecasting models, such as iTransformer~\cite{liu2023itransformer}, DLinear~\cite{zeng2023transformers}, PatchTST~\cite{nie2022time}, TimesNet~\cite{wu2022timesnet}, Stationary~\cite{liu2022non}, and FEDformer~\cite{zhou2022fedformer}, which are explicitly designed and optimized for supervised forecasting tasks. These models capture diverse inductive biases, including linear decomposition, patch-based temporal modeling, frequency-domain attention, and stationarity-aware transformations, and remain strong competitors in controlled evaluation settings.








\begin{table}[h]
\centering
\caption{Comparison of representative time-series foundation models.}
\label{tab:baseline_details}
\scalebox{0.76}{
\begin{tabular}{lccccccc}
\toprule[1.5pt]
\textbf{Model} 
& \textbf{Architecture} 
& \textbf{(Min) Model Size} 
& \textbf{(Max) Model Size} 
& \textbf{Token Level} 
& \textbf{Dataset Scale} 
& \textbf{Context Length} 
& \textbf{Source} \\
\midrule
Olivia 
& Enc-Dec 
& 5.1M 
& 9.6M 
& Patch 
& 67M 
& 512 / 1024 / 1536 
& Ours \\

SEMPO 
& Enc-Dec 
& 6.5M 
& 9.9M 
& Patch 
& 83M 
& 512 / 1024 / 1536 
& \cite{he2025sempo} \\

Sundial 
& Decoder 
& 32M 
& 444M 
& Patch 
& 1032B 
& $\leq$2880 
& \cite{liu2025sundial} \\

TTM 
& Mixer-style 
& 1M 
& 5M 
& Patch 
& 1B 
& 512 / 1024 / 1536 
& \cite{ekambaram2024tiny} \\

Time-MoE 
& Decoder 
& 113M 
& 2.4B 
& Point 
& 309B 
& $\leq$4096 
& \cite{shi2024time} \\

Timer 
& Decoder 
& 67M 
& 67M 
& Patch 
& 28B 
& $\leq$1440 
& \cite{liu2024timer} \\

Moirai 
& Encoder 
& 14M 
& 311M 
& Patch 
& 27B / 231B 
& $\leq$5000 
& \cite{woo2024unified} \\

Chronos 
& Enc-Dec 
& 46M 
& 710M 
& Point 
& 84B 
& $\leq$512 
& \cite{ansari2024chronos} \\

TimesFM 
& Decoder 
& 200M 
& 200M 
& Patch 
& 100B 
& $\leq$512 
& \cite{das2024decoder} \\

Moment 
& Encoder 
& 40M 
& 385M 
& Patch 
& 1.13B 
& 512 
& \cite{goswami2024moment} \\

ROSE 
& Enc-Dec 
& 7.4M 
& 7.4M 
& Patch 
& 0.89B 
& 512 
& \cite{wangtowards} \\

\bottomrule[1.5pt]
\end{tabular}
}
\end{table}
\label{appendix_baselines}

\subsection{Datesets}\label{appendix_datasets}
\paragraph{Pretraining Datasets.}
Unified Time Series Dataset (UTSD)~\cite{liu2024timer} is a curated collection of time series data assembled from a combination of publicly available online repositories and empirical data obtained from real-world machine operations. 
To ensure data quality and consistency, missing values are systematically handled using linear interpolation. 
All datasets follow a unified data storage format based on the Parquet/Apache Arrow standard, consistent with the data organization adopted in prior work~\cite{woo2024unified}.
Based on UTSD, we construct a diverse pretraining subset spanning multiple application domains, comprising approximately 67 million time points in total. Following the protocol in~\cite{liu2024timer,he2025sempo}, the data are split into training and validation sets with a ratio of 9:1.
The key characteristics of the included datasets are reported in Table~\ref{tab:datasets}, including domain categories, temporal resolution, data scale, storage size, Augmented Dickey–Fuller (ADF) statistics, forecastability, and data sources.
\begin{table}[h]
\centering
\caption{List of pretraining datasets.
All datasets are selected from the UTSD data repositories~\cite{liu2024timer}.
Dataset attributes follow the original UTSD specifications.}
\label{tab:datasets}
\scalebox{0.85}{
\begin{tabular}{l c c c c c c c}
\toprule[1.5pt]
Dataset & Domain & Resolution & Time Points & File Size & ADF. & Forecast. & Source \\
\midrule
PEMS04 & Transport & 5 Min & 15.65M & 60M & -15.192 & 0.494 &~\cite{liu2022scinet} \\
PEMS08 & Transport &5 Min  & 9.11M &35M  &-14.918  &0.551  &~\cite{liu2022scinet}  \\
Pedestrian Counts & Transport & Hourly & 3.13M & 12M & -23.462 & 0.297 &~\cite{godahewa2021monash}  \\
Beijing PM2.5 Quality & Environment & Hourly & 3.66M & 14M & -31.415 & 0.404 &~\cite{tan2021time} \\
PIGCVP & Health & -- & 0.62M & 3M & -4.855 & 0.577 &~\cite{dau2019ucr}  \\
Australian Electricity Demand & Energy & 30 Min & 1.16M & 5M & -27.554 & 0.730 &~\cite{godahewa2021monash}  \\
Wind & Energy & 4 Sec & 7.40M & 29M & -29.174 & 0.811  &~\cite{godahewa2021monash}  \\
KDD Cup 2018 & Nature & Hourly & 2.94M & 12M & -10.107 & 0.362 &~\cite{godahewa2021monash}  \\
Temperature Rain & Nature & Daily & 23.25M & 93M & -10.952 & 0.133 &~\cite{godahewa2021monash}  \\
Saugeen River Flow & Nature & Daily & 0.02M & 1M & -19.305 & 0.300 &~\cite{godahewa2021monash}  \\
Sunspot & Nature & Daily & 0.07M & 1M & -7.866 & 0.287 &~\cite{godahewa2021monash}  \\

\bottomrule[1.5pt]
\end{tabular}
}
\end{table}

\paragraph{Evaluation Datasets.} 
For zero-, few-, and full-shot forecasting scenarios, we evaluate Olivia on the widely used Time-Series-Library (TSLib) benchmark~\cite{wu2022timesnet}, which includes representative datasets such as ETTh1, ETTh2, ETTm2, Weather, Electricity, and Traffic. These datasets span diverse application domains and exhibit varying temporal dynamics, seasonal patterns, and prediction difficulties, providing a standardized testbed for assessing forecasting performance under different data availability settings.
We also include GIFT-Eval~\cite{aksu2024gift} for evaluation, a large-scale and comprehensive benchmark for assessing general time series forecasting models, particularly in zero-shot settings.
It comprises 23 datasets with over 144,000 time series and 177 million data points, covering seven application domains and 10 temporal frequencies, and supports multivariate inputs with forecasting horizons ranging from short- to long-term. By unifying diverse datasets under a consistent evaluation protocol, GIFT-Eval facilitates systematic assessment of model generalization and promotes the development of foundation models for time series forecasting.
Following Sundial, we also conduct zero-shot performance evaluation on the datasets provided by GluonTS~\cite{gluonts_jmlr}.

\subsection{Implementation Details}\label{appendix_implement_detials}

All experiments are conducted on a workstation equipped with two NVIDIA RTX 5090 GPUs, each with 32GB of memory. Model pretraining takes approximately 5 hours, while the subsequent tuning stage requires around 2.5 hours.
Unless otherwise specified, we adopt a consistent default configuration across all experiments, using a batch size of 2048, 10 epochs for pretraining, and 2 epochs for tuning. 
The model is instantiated with 6 transformer layers and 16 attention heads, a patch length of 64, and an embedding dimension of 256, while the number of learnable Householder reflections is set to 256, balancing model capacity and computational efficiency.
For optimization, we use the AdamW optimizer with a learning rate of $1\times10^{-3}$ for pretraining and $8\times10^{-4}$ for tuning, respectively.

\section{Visualization Analysis}
\subsection{Visualizations of Temporal Correlation}
Figure~\ref{fig:temporal_correlation_seen_full} provides a comparative visualization of temporal correlation structures across multiple domains before and after processing with the Aligner in the Harmonizer. In the first row, the raw correlation matrices exhibit strong domain-specific characteristics, such as pronounced periodic stripes, smooth diagonal dominance, or irregular textures, reflecting substantial differences in spectral composition and temporal dynamics across domains.
After applying the Harmonizer, the correlation patterns in the second row become more comparable in their overall structure, indicating that the Aligner reorganizes second-order temporal dependencies into a shared correlation space. Importantly, however, the resulting matrices do not collapse into identical or trivial patterns. Although their global appearances are more unified, noticeable differences remain among domains, manifested in variations of local textures, diagonal decay behaviors, and fine-grained correlation structures. This observation suggests that the Harmonizer does not eliminate domain-specific information, but instead transforms how such information is expressed.

Functionally, the Harmonizer maps domain-specific correlation structures into a common coordinate system, in which transferable temporal regularities are expressed more consistently while domain-relevant characteristics are preserved.
By reducing superficial discrepancies induced by factors such as sampling rates, dominant periodicities, or energy distributions, while preserving intrinsic temporal dynamics, the Aligner achieves a balance between cross-domain comparability and domain-level fidelity.
The coexistence of increased structural consistency and preserved inter-domain differences in the second row thus provides qualitative evidence that the Harmonizer promotes transferable representations without collapsing heterogeneous temporal patterns into a degenerate template.

\begin{figure*}[t]
    \centering
    \includegraphics[width=0.95\linewidth]{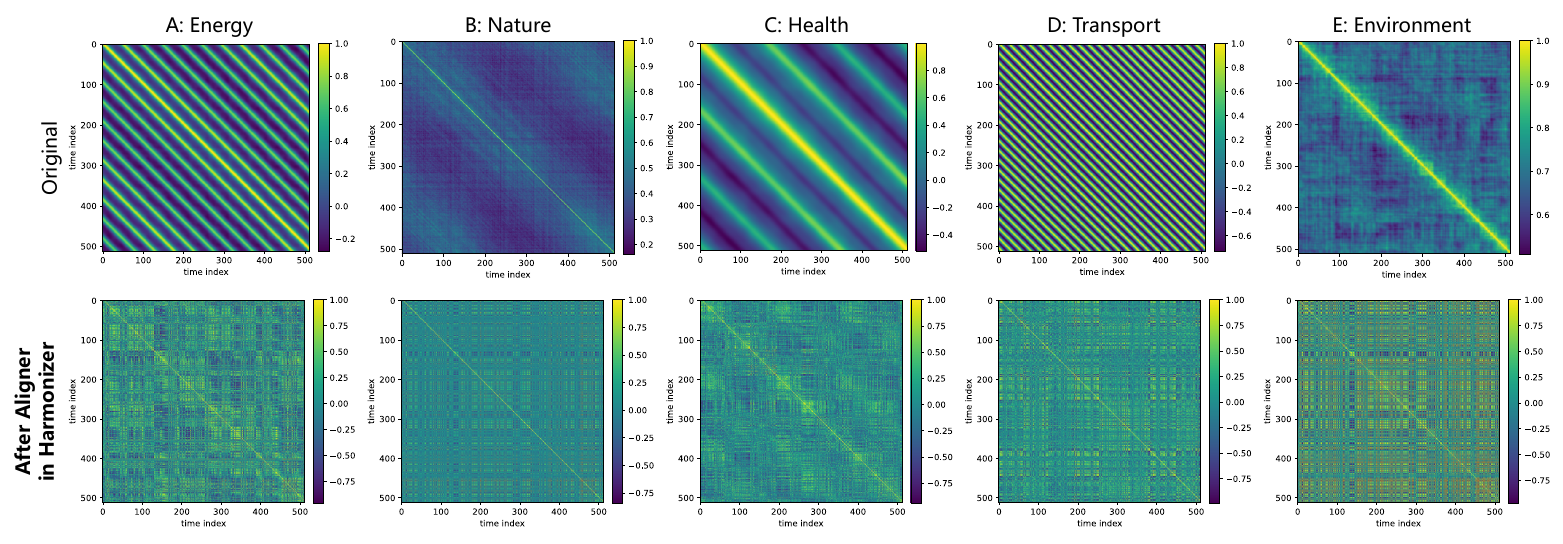}
    \caption{Visualization of temporal correlation.
This figure illustrates the temporal correlation of raw time series from different domains in the pretraining datasets, as well as the corresponding correlation patterns after processing with the Aligner in the Harmonizer.}
    \label{fig:temporal_correlation_seen_full}
    \vspace{-4mm}
\end{figure*}

\subsection{Weight Visualization of Learnable Orthogonal Matrix $\mathbf{Q}$}
Figure~\ref{fig:ortho_matrix} visualizes the learned orthogonal matrix $\mathbf{Q}$, where the structured patterns can be observed beyond a trivial identity-like form. While the dominant diagonal reflects the orthogonality constraint, the presence of non-negligible off-diagonal components indicates that $\mathbf{Q}$ performs non-trivial temporal reorganization rather than simple time-wise preservation.
To further examine how such structure is constructed, Figure~\ref{fig:H_matrix} presents visualizations of the individual learnable Householder reflections $\left\{\mathbf{H}_{k}\right\}$ at different reflection indices $k$.
The progressive variation in their patterns suggests that successive reflections contribute complementary transformations, which are explicitly composed to form the overall orthogonal matrix $\mathbf{Q}$.

From an optimization perspective, parameterizing $\mathbf{Q}$ as a product of multiple Householder reflections provides a gradual and stable mechanism for learning orthogonal transformations. Instead of enforcing orthogonality through hard constraints or penalty terms in the optimization objective, this formulation incrementally builds $\mathbf{Q}$ via a sequence of simple, well-conditioned reflections. Such a design avoids common optimization bottlenecks associated with constrained orthogonal learning, while still allowing sufficient flexibility to capture structured temporal reorganization. The observed patterns across different $\mathbf{H}_{k}$ further suggest that the orthogonal transformation is learned in a progressive and distributed manner, rather than being dominated by a single abrupt rotation.

\begin{figure*}[h]
    \centering
    \includegraphics[width=0.90\linewidth]{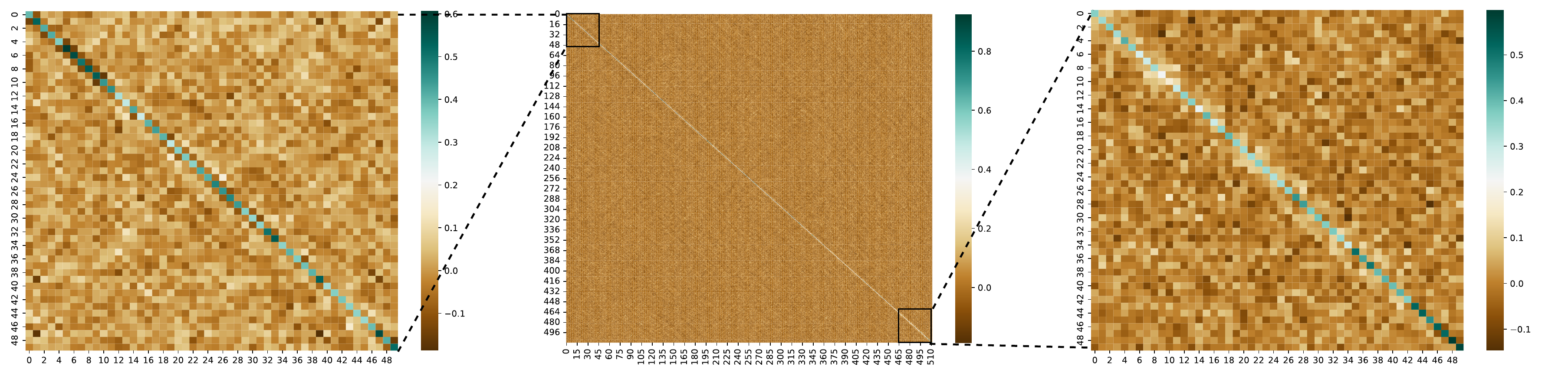}
    \caption{Visualization of the learnable orthogonal matrix $\mathbf{Q}$}
    \label{fig:ortho_matrix}
\end{figure*}

\begin{figure*}[h]
    \centering
    \includegraphics[width=0.90\linewidth]{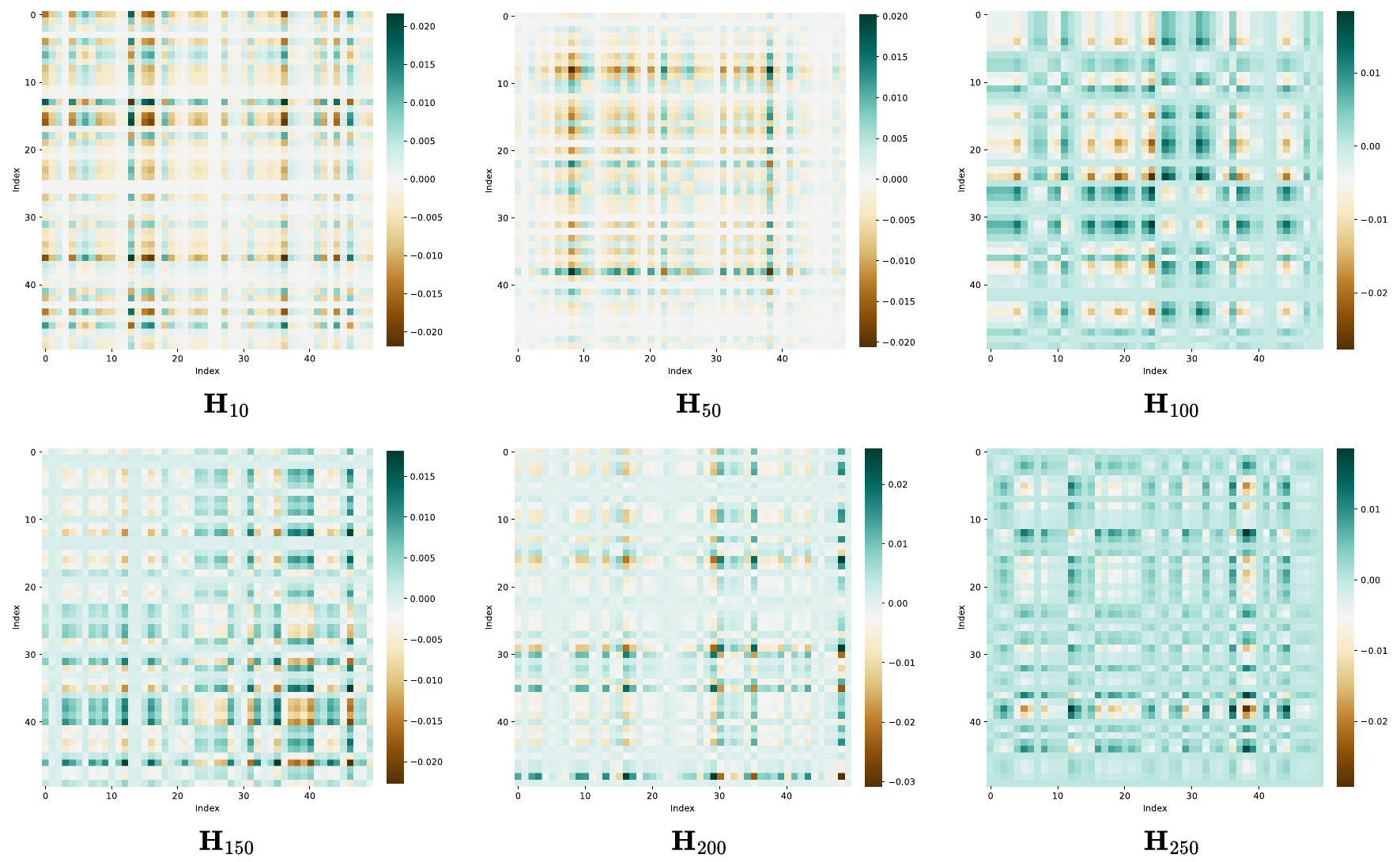}
    \caption{Visualization of learnable Householder Reflections $\left\{\mathbf{H}_{k}\right\}$ at different reflection indices $k$. The structured patterns across reflections illustrate how successive learnable Householder transformations contribute to the construction of the overall orthogonal matrix $\mathbf{Q}$. To better reveal the learned transformation patterns, each reflection is shown relative to the identity matrix.}
    \label{fig:H_matrix}
    \vspace{-4mm}
\end{figure*}

\subsection{Visualizations of Predictions}
Figure~\ref{fig:visual_pred_ETTm2} and Figure~\ref{fig:visual_pred_weather} provide qualitative comparisons of zero-shot forecasting behaviors on the ETTm2 and Weather datasets under the zero-shot setting.
The predictions produced by Olivia exhibit smoother trajectories and more stable oscillatory patterns over extended horizons, whereas several baseline methods display noticeable deviations such as phase offsets or attenuated fluctuations. These visual differences suggest that Olivia maintains more reliable temporal evolution when extrapolating beyond the observed context.

\begin{figure}[h]
    \centering
    \begin{subfigure}[b]{0.46\textwidth}
        \centering
        \includegraphics[width=\textwidth]{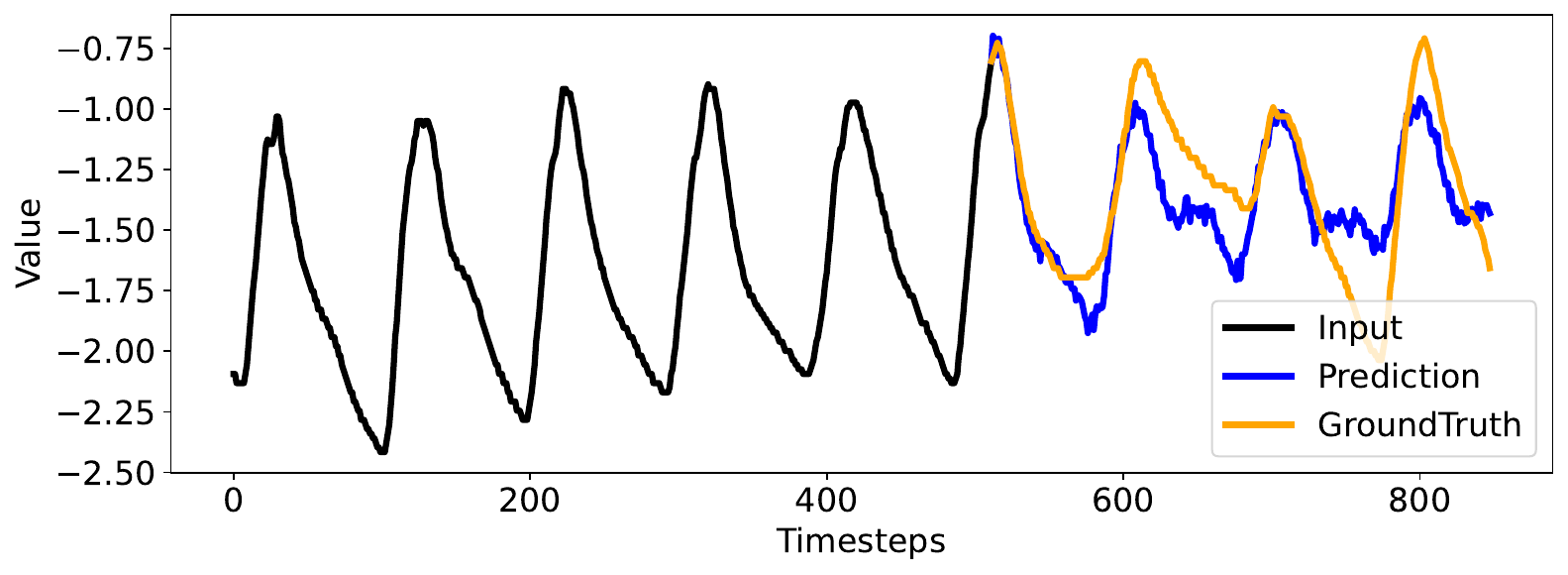}
        \caption{Olivia}
    \end{subfigure}
    \begin{subfigure}[b]{0.46\textwidth}
        \centering
        \includegraphics[width=\textwidth]{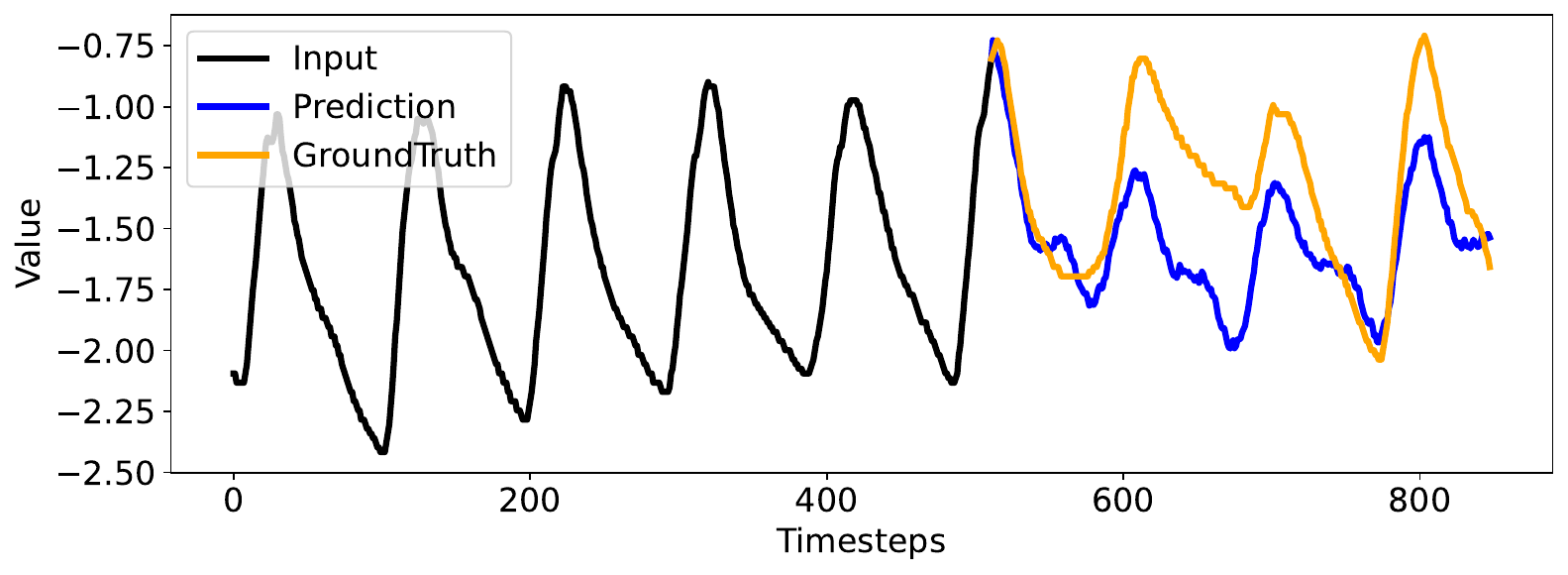}
        \caption{SEMPO}
    \end{subfigure}\\
    \begin{subfigure}[b]{0.46\textwidth}
        \centering
        \includegraphics[width=\textwidth]{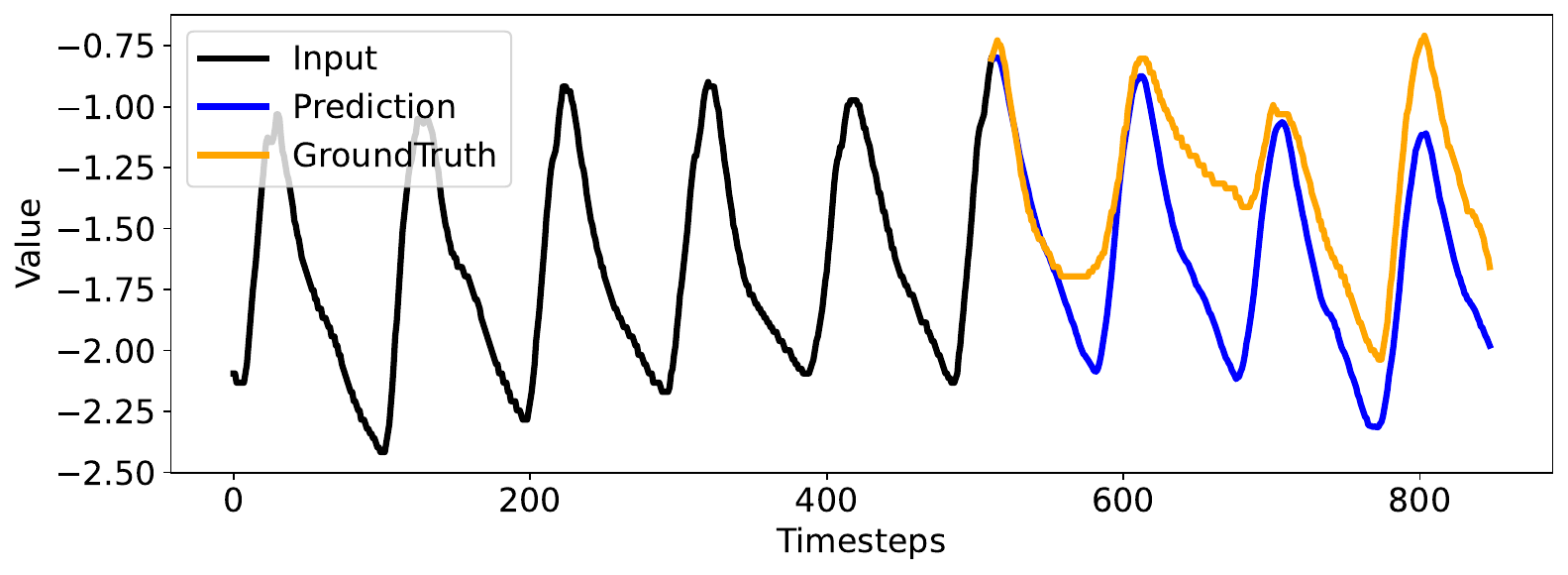}
        \caption{Sundial}
    \end{subfigure}
    \begin{subfigure}[b]{0.46\textwidth}
        \centering
        \includegraphics[width=\textwidth]{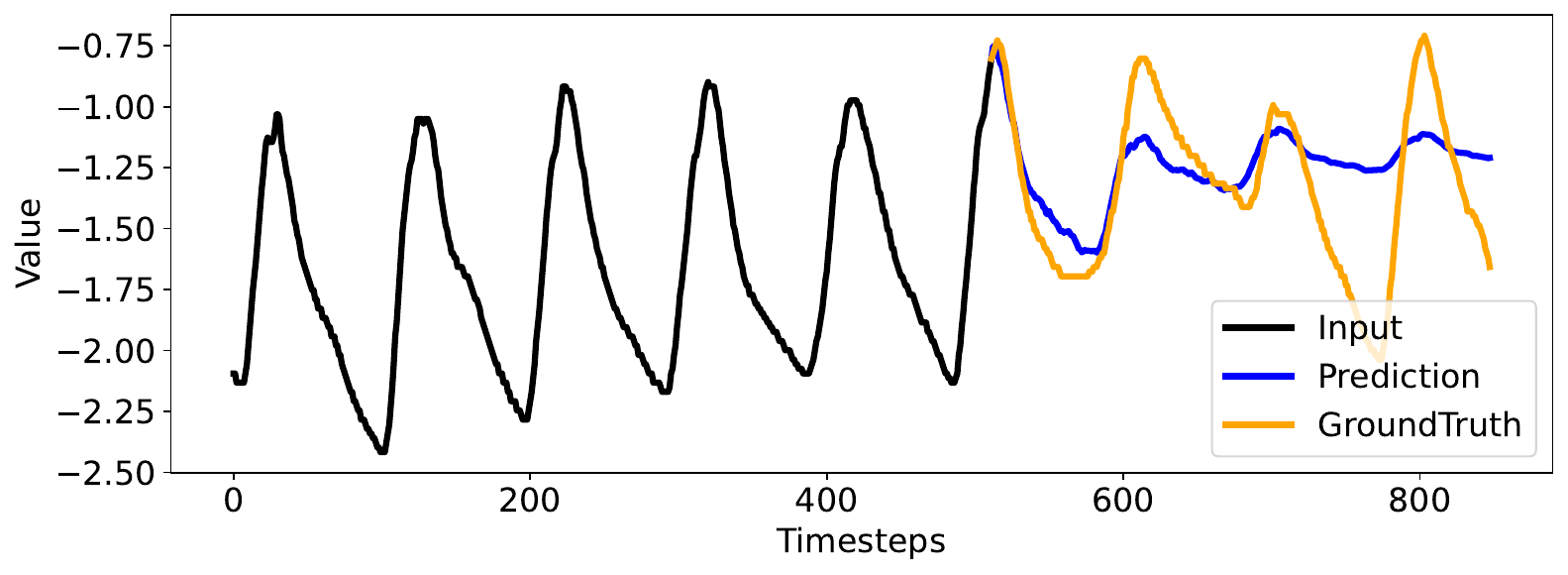}
        \caption{Timer}
    \end{subfigure}\\
        \begin{subfigure}[b]{0.46\textwidth}
        \centering
        \includegraphics[width=\textwidth]{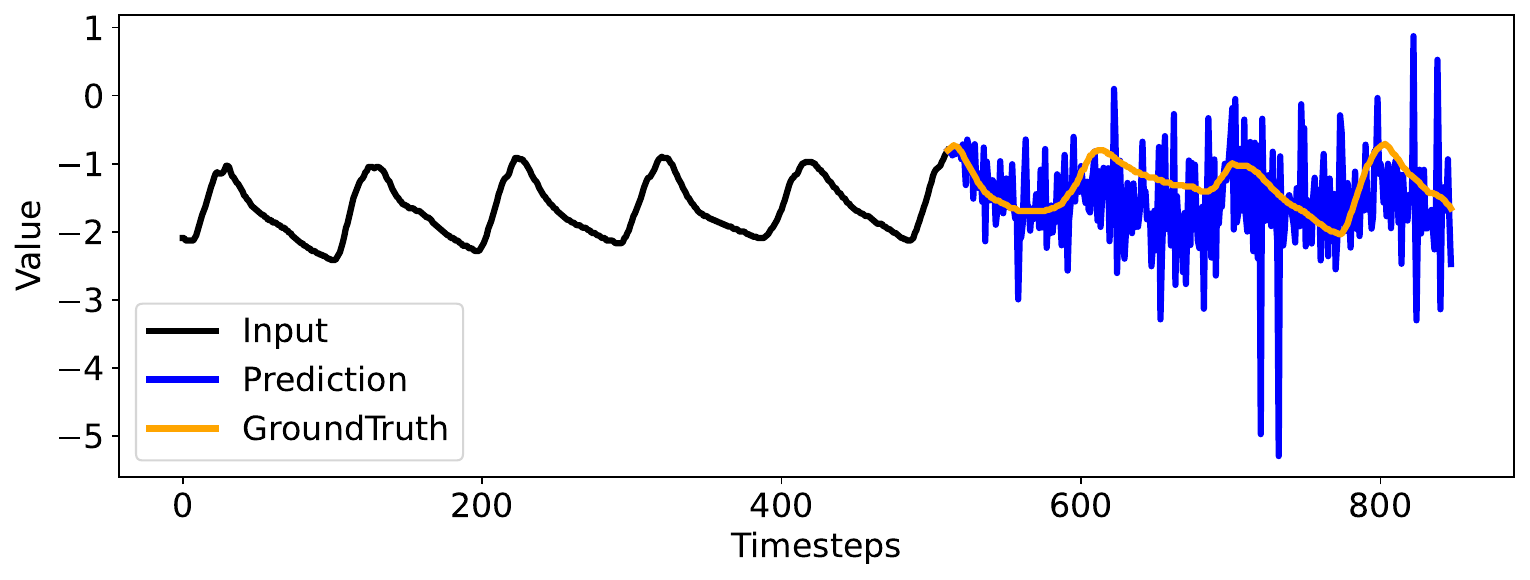}
        \caption{$\text{Moirai}_{B}$}
    \end{subfigure}
    \begin{subfigure}[b]{0.46\textwidth}
        \centering
        \includegraphics[width=\textwidth]{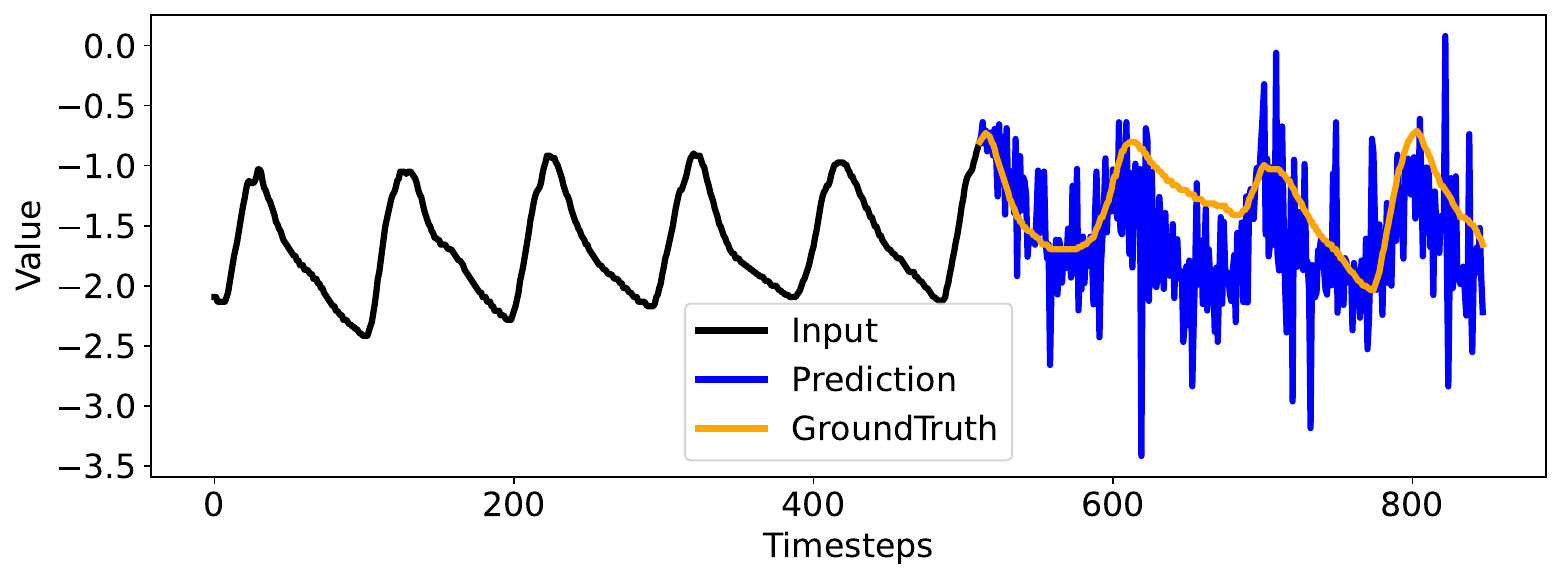}
        \caption{$\text{Moirai}_{S}$}
    \end{subfigure}
    \caption{Visualization of prediction results on the ETTm2 dataset under the zero-shot setting, with an input length of 512 and a prediction horizon of 336.}
    \label{fig:visual_pred_ETTm2}
    \vspace{-4mm}
\end{figure}

\begin{figure}[h]
    \centering
    \begin{subfigure}[b]{0.46\textwidth}
        \centering
        \includegraphics[width=\textwidth]{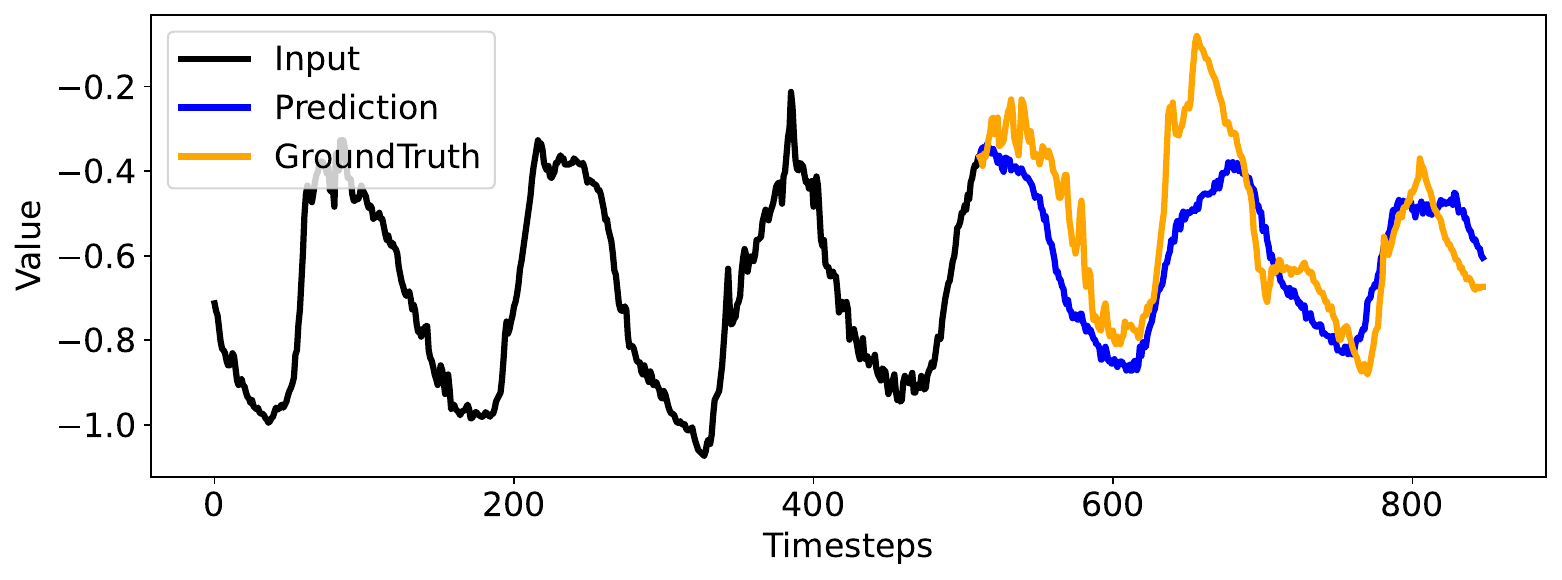}
        \caption{Olivia}
    \end{subfigure}
    \begin{subfigure}[b]{0.46\textwidth}
        \centering
        \includegraphics[width=\textwidth]{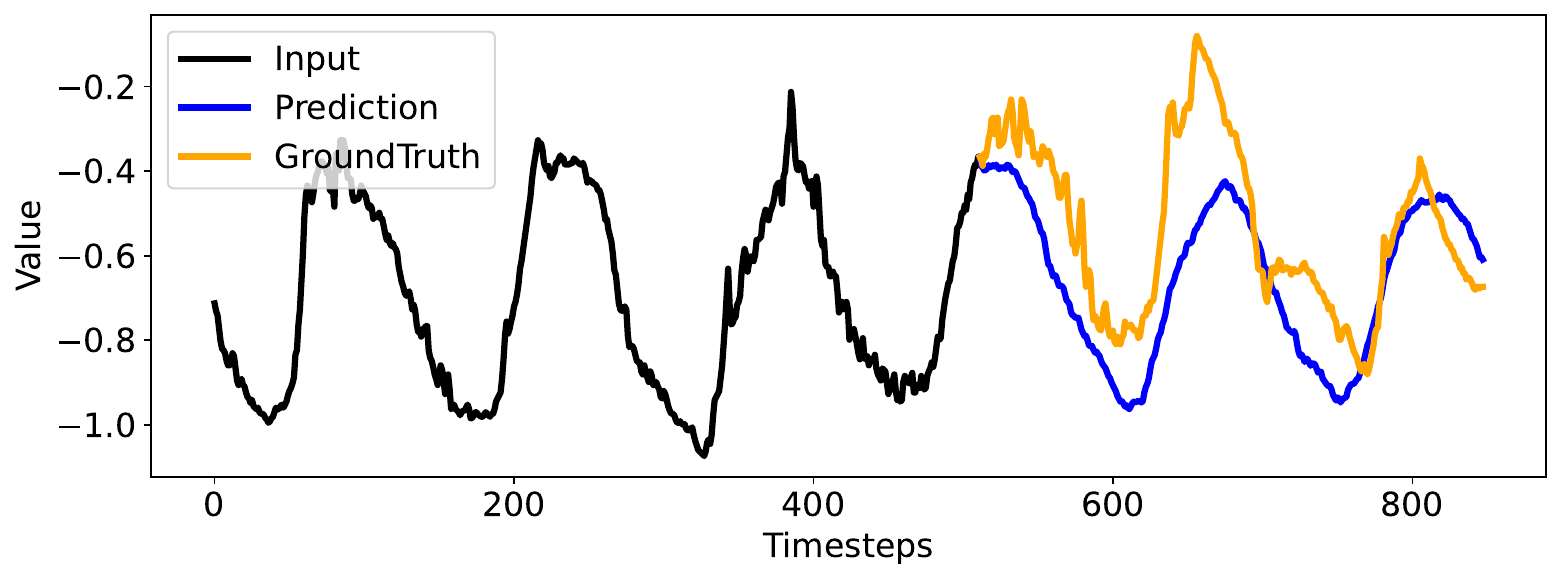}
        \caption{SEMPO}
    \end{subfigure}\\
    \begin{subfigure}[b]{0.46\textwidth}
        \centering
        \includegraphics[width=\textwidth]{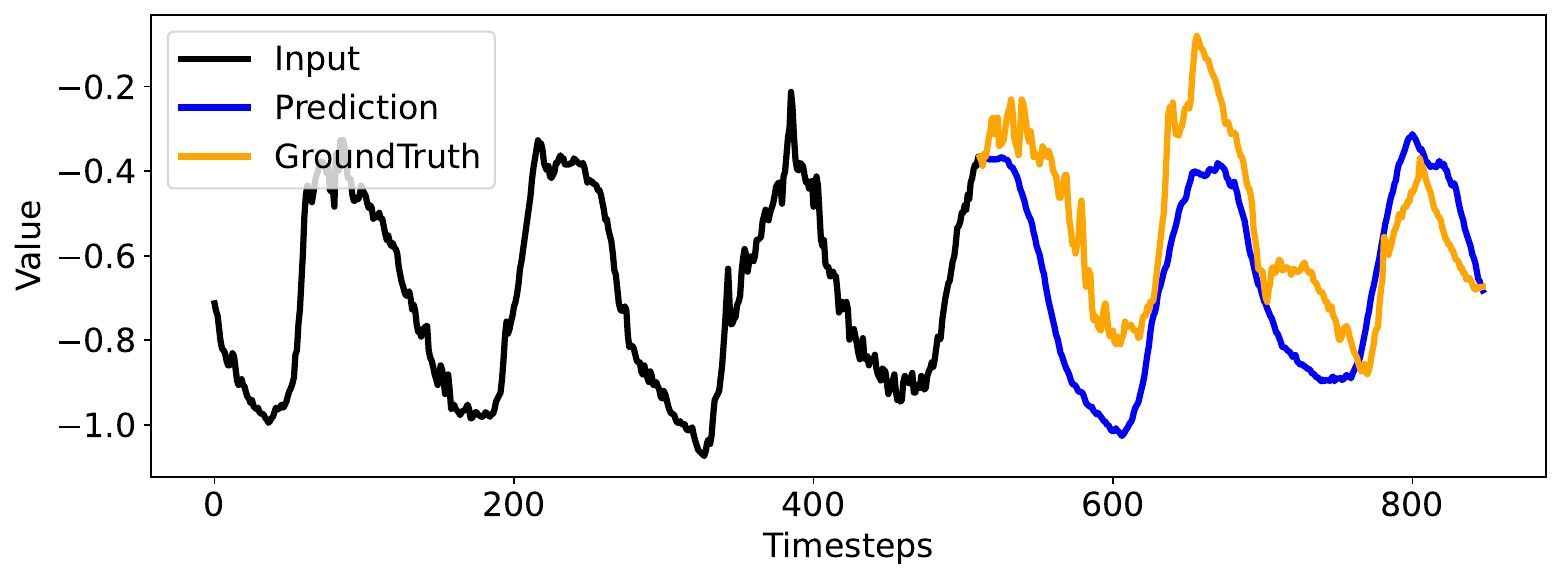}
        \caption{Sundial}
    \end{subfigure}
    \begin{subfigure}[b]{0.46\textwidth}
        \centering
        \includegraphics[width=\textwidth]{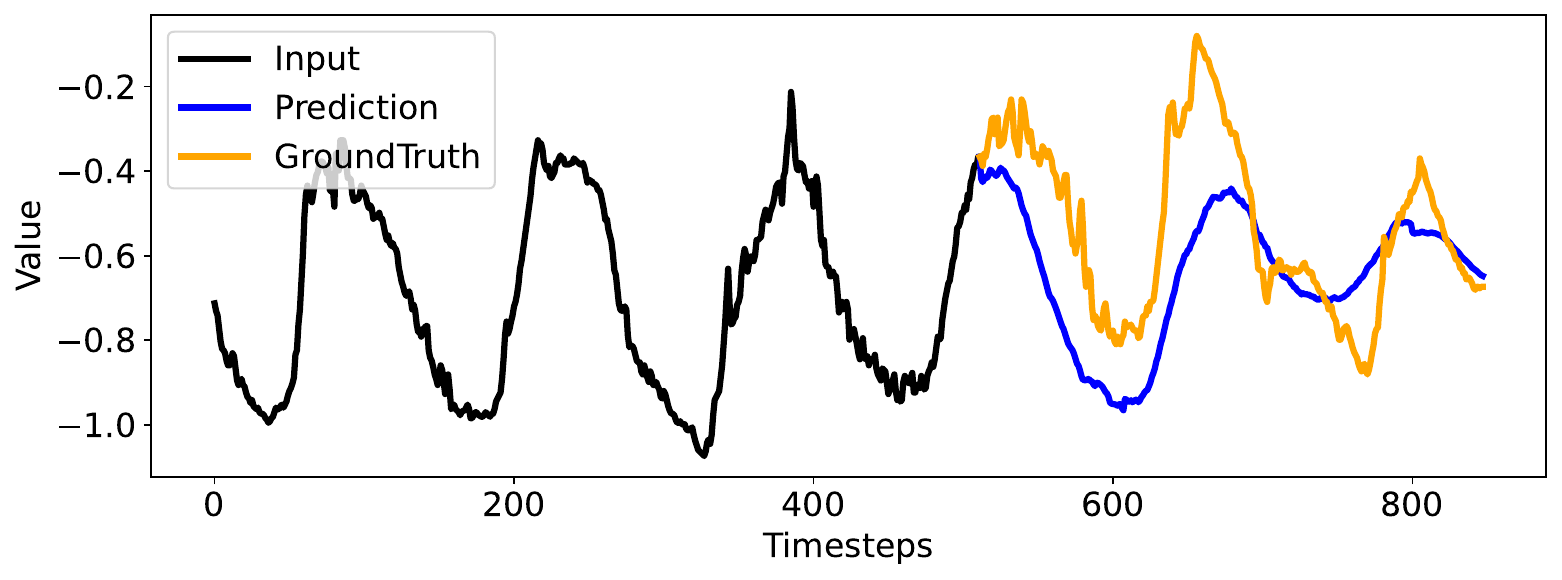}
        \caption{Timer}
    \end{subfigure}\\
        \begin{subfigure}[b]{0.46\textwidth}
        \centering
        \includegraphics[width=\textwidth]{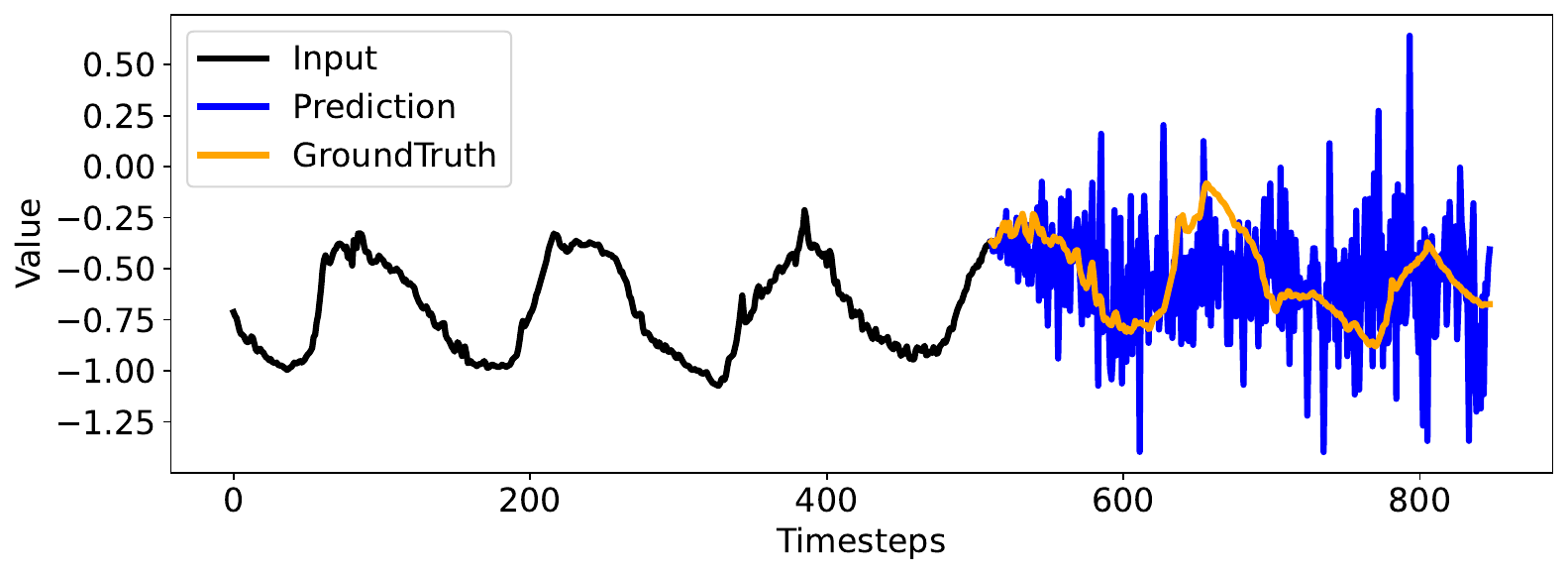}
        \caption{$\text{Moirai}_{B}$}
    \end{subfigure}
    \begin{subfigure}[b]{0.46\textwidth}
        \centering
        \includegraphics[width=\textwidth]{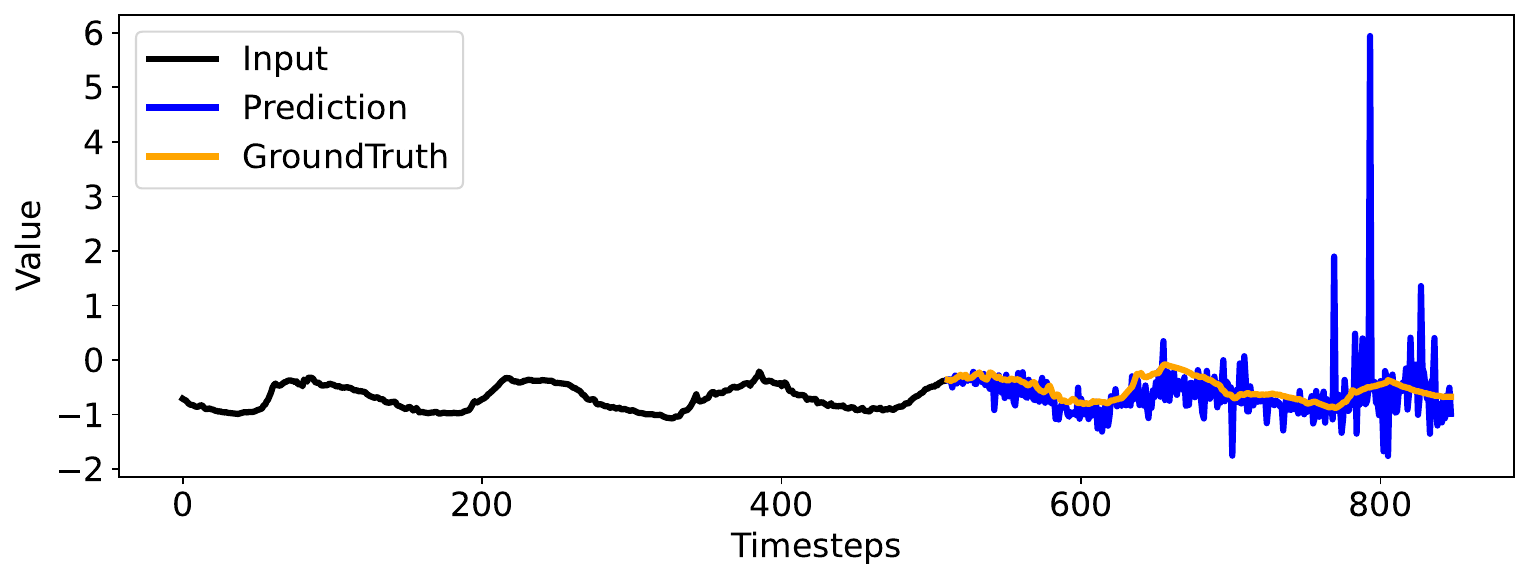}
        \caption{$\text{Moirai}_{S}$}
    \end{subfigure}
    \caption{Visualization of prediction results on the Weather dataset under the zero-shot setting, with an input length of 512 and a prediction horizon of 336.}
    \label{fig:visual_pred_weather}
    \vspace{-4mm}
\end{figure}

\section{Supplementary Results}\label{appendix_experiment_results}
\subsection{Zero-shot Forecasting}\label{appendix_exper_zero_shot}

Table~\ref{tab:gift_eval} reports zero-shot results on the GIFT-Eval benchmark, showing that Olivia consistently outperforms SEMPO and the Moirai variants on most datasets.
In particular, Olivia exhibits the most pronounced improvements over SEMPO on bizitobs\_service and bizitobs\_application in terms of NRMSE, achieving relative reductions of over 23\%, while also delivering consistent gains on M\_DENSE (SMAPE) and SZ\_TAXI (SMAPE).

\begin{table}[t]
\centering
\caption{Zero-shot performance evaluation on the GIFT-Eval benchmark~\cite{aksu2024gift}.
A lower NRMSE/SMAPE indicates a better result.
The best results are in \textcolor{red}{\textbf{red}} and the second best are \textcolor{blue}{\textbf{blue}}.
'\textit{S}': Small, '\textit{B}': Base, '\textit{L}': Large.
'/' denotes degenerate results with abnormally large errors under the zero-shot setting, which are omitted for clarity.
}
\label{tab:gift_eval}

\scalebox{0.8}{
\begin{tabular}{c|c |c c c c c}
\toprule[2pt]
Datasets & Metrics & \textbf{Olivia} & SEMPO &Moirai$_S$ &Moirai$_B$ &Moirai$_L$ \\
\midrule
\multirow{2}{*}{M\_DENSE}
& NRMSE  &\textcolor{red}{\textbf{0.356}}   &\textcolor{blue}{\textbf{0.363}}   &0.679   &0.491   &0.528 \\
& SMAPE  &\textcolor{red}{\textbf{0.259}}   &\textcolor{blue}{\textbf{0.297}}   &0.356   &0.299   &0.298 \\
\midrule

\multirow{2}{*}{LOOP\_SEATTLE}
& NRMSE  &\textcolor{blue}{\textbf{0.194}}   &\textcolor{red}{\textbf{0.191}}   &/   &/   &/  \\
& SMAPE  &\textcolor{blue}{\textbf{0.157}}   &\textcolor{red}{\textbf{0.155}}   &0.203   &0.188   &0.184 \\
\midrule

\multirow{2}{*}{SZ\_TAXI}
& NRMSE  &\textcolor{red}{\textbf{0.348}}   &\textcolor{blue}{\textbf{0.369}}   &/   &/   &/  \\
& SMAPE  &\textcolor{red}{\textbf{0.394}}   &\textcolor{blue}{\textbf{0.422}}   &0.624   &0.644   &0.644 \\
\midrule

\multirow{2}{*}{bizitobs\_application}
& NRMSE  &\textcolor{red}{\textbf{0.175}}   &\textcolor{blue}{\textbf{0.229}}   &0.274   &0.277   &0.307  \\
& SMAPE  &0.120   &0.160   &0.128   &\textcolor{red}{\textbf{0.114}}   &\textcolor{blue}{\textbf{0.118}} \\
\midrule

\multirow{2}{*}{bizitobs\_l2c}
& NRMSE  &\textcolor{red}{\textbf{0.879}}   &\textcolor{blue}{\textbf{0.928}}   &/   &2.896   &10.855 \\
& SMAPE  &\textcolor{red}{\textbf{0.969}}   &\textcolor{blue}{\textbf{1.078}}   &1.164   &1.180   &1.222 \\
\midrule

\multirow{2}{*}{bizitobs\_service}
& NRMSE  &\textcolor{red}{\textbf{0.220}}   &\textcolor{blue}{\textbf{0.290}}   &0.630   &/   &0.411 \\
& SMAPE  &\textcolor{red}{\textbf{0.201}}   &0.270   &0.245   &\textcolor{blue}{\textbf{0.231}}   &0.234 \\
\midrule

\multirow{2}{*}{jena\_weather}
& NRMSE  &\textcolor{red}{\textbf{0.220}}   &\textcolor{blue}{\textbf{0.241}}   &0.675   &0.791   &0.972 \\
& SMAPE  &\textcolor{red}{\textbf{0.661}}   &\textcolor{blue}{\textbf{0.699}}   &0.802   &0.771   &0.778 \\

\bottomrule[2pt]
\end{tabular}
}
\vspace{-5mm}
\end{table}

Table~\ref{tab:zero_shot_full} reports the detailed experimental results corresponding to those summarized in Table~\ref{tab:zero_shot} of the main text, further confirming that Olivia consistently achieves superior forecasting accuracy across a wide range of datasets and prediction horizons, while maintaining stable performance advantages over both lightweight and large-scale foundation model baselines.
Table~\ref{tab:zero_shot_ortho_baseline} presents a comprehensive zero-shot comparison with representative time series foundation models on the TSLib benchmark.
Overall, the results indicate that Olivia delivers more stable and robust zero-shot forecasting across diverse temporal domains and horizons, suggesting a stronger ability to generalize under heterogeneous temporal distributions.
\begin{table*}[!b]
    \centering
    \caption{Full zero-shot results on the TSLib benchmark, with the best results are in \textcolor{red}{\textbf{red}} and the second best are \textcolor{blue}{\textbf{blue}}. 
    '-' denotes dataset used in pre-training and excluded in the evaluation.
    '\textit{S}': Small, '\textit{B}': Base, '\textit{L}': Large.
    Avg means the average results from all four prediction lengths.
    }
    \scalebox{0.55}{
    \begin{tabular}{l | c|c c|c c|c c| c c|c c |c c |c c |c c |c c |c c |c c |c c}
    \toprule[2pt]
       \multicolumn{2}{c|}{Models}  &\multicolumn{2}{c|}{\textbf{Olivia}} &\multicolumn{2}{c|}{SEMPO} &\multicolumn{2}{c|}{$\text{Time-MoE}_{B}$} &\multicolumn{2}{c|}{$\text{Time-MoE}_{L}$} &\multicolumn{2}{c|}{Timer}  &\multicolumn{2}{c|}{$\text{Moirai}_{S}$} &\multicolumn{2}{c|}{$\text{Moirai}_{B}$} &\multicolumn{2}{c|}{$\text{Moirai}_{L}$} &\multicolumn{2}{c|}{$\text{Chronos}_{S}$}   &\multicolumn{2}{c|}{$\text{Chronos}_{B}$} &\multicolumn{2}{c|}{$\text{Chronos}_{L}$} &\multicolumn{2}{c}{TimesFM} \\

       \multicolumn{2}{c|}{}  &\multicolumn{2}{c|}{\textbf{Ours}} &\multicolumn{2}{c|}{[NeurIPS'25]} &\multicolumn{2}{c|}{[ICLR'25]} &\multicolumn{2}{c|}{[ICLR'25]} &\multicolumn{2}{c|}{[ICML'24]}  &\multicolumn{2}{c|}{[ICML'24]} &\multicolumn{2}{c|}{[ICML'24]} &\multicolumn{2}{c|}{[ICML'24]} &\multicolumn{2}{c|}{[TMLR'24]}   &\multicolumn{2}{c|}{[TMLR'24]} &\multicolumn{2}{c|}{[TMLR'24]} &\multicolumn{2}{c}{[ICML'24]}  \\
              
       \cmidrule(r){1-2}\cmidrule(lr){3-4}\cmidrule(lr){5-6}\cmidrule(lr){7-8}\cmidrule(lr){9-10}\cmidrule(lr){11-12}\cmidrule(lr){13-14}\cmidrule(lr){15-16}\cmidrule(lr){17-18}\cmidrule(lr){19-20} \cmidrule(lr){21-22} \cmidrule(lr){23-24} \cmidrule(lr){25-26} 
       
       \multicolumn{2}{c|}{Metrics} &MSE &MAE &MSE &MAE &MSE &MAE &MSE &MAE &MSE &MAE &MSE &MAE &MSE &MAE &MSE &MAE &MSE &MAE  &MSE &MAE &MSE &MAE &MSE &MAE   \\
       \midrule[1pt]
         \multirow{5}*{\rotatebox{90}{ETTh1}}
         & 96  &0.370  &0.397   &0.384  &0.408  &\textcolor{blue}{\textbf{0.358}}  &\textcolor{red}{\textbf{0.382}}  &\textcolor{red}{\textbf{0.350}}  &\textcolor{red}{\textbf{0.382}}  &0.414  &0.439  &0.407  &0.405  &0.394  &0.399  &0.410  &0.403  &0.476  &0.411  &0.452  &\textcolor{blue}{\textbf{0.396}}  &0.453  &0.401  &0.432  &0.405    \\
         
         & 192  &\textcolor{red}{\textbf{0.396}}  &\textcolor{red}{\textbf{0.415}}   &0.409  &0.426  &0.404  &\textcolor{blue}{\textbf{0.417}}  &\textcolor{blue}{\textbf{0.402}}  &0.423  &0.440  &0.455  &0.443  &0.425  &0.431  &0.430  &0.459  &0.434  &0.547  &0.452  &0.513  &0.429  &0.525  &0.428  &0.492  &0.438   \\
         
         & 336  &\textcolor{red}{\textbf{0.407}}  &\textcolor{red}{\textbf{0.424}}   &\textcolor{blue}{\textbf{0.417}}  &\textcolor{blue}{\textbf{0.433}}  &0.449  &0.454  &0.447  &0.459  &0.455  &0.463  &0.465  &0.438  &0.450  &0.437  &0.486  &0.453  &0.589  &0.481  &0.553  &0.450  &0.570  &0.452  &0.519  &0.458  \\
         
         & 720  &\textcolor{red}{\textbf{0.424}}  &\textcolor{red}{\textbf{0.447}}   &\textcolor{blue}{\textbf{0.432}}  &\textcolor{blue}{\textbf{0.454}}  &0.570  &0.543  &0.543  &0.534  &0.496  &0.496  &0.475  &0.461  &0.456  &0.457  &0.511  &0.483  &0.592  &0.509  &0.578  &0.484  &0.619  &0.493  &0.512  &0.477  \\
         
         \cmidrule(lr){2-26}
         & Avg  &\textcolor{red}{\textbf{0.399}}  &\textcolor{red}{\textbf{0.421}}   &\textcolor{blue}{\textbf{0.410}}  &\textcolor{blue}{\textbf{0.430}}  &0.445  &0.449  &0.435  &0.449  &0.451  &0.463  &0.448  &0.432  &0.433  &0.431  &0.466  &0.443  & 0.551 &0.463  & 0.524 &0.439  &0.541  &0.443  &0.489  &0.444  \\
         \midrule[1pt]

        \multirow{5}*{\rotatebox{90}{ETTh2}}
         & 96  &\textcolor{red}{\textbf{0.274}}  &\textcolor{red}{\textbf{0.340}}   &\textcolor{blue}{\textbf{0.282}}  &\textcolor{blue}{\textbf{0.342}}  &0.302  &0.356  &0.301  &0.354  &0.305  &0.355  &0.288  &0.345  &0.285  &\textcolor{blue}{\textbf{0.342}}  &0.294  &0.344  &0.309  &0.354  &0.307  &0.352  &0.297  &0.344  &0.311  &0.345    \\
         & 192  &\textcolor{red}{\textbf{0.328}}  &\textcolor{red}{\textbf{0.378}}   &\textcolor{blue}{\textbf{0.334}}  &\textcolor{blue}{\textbf{0.384}}  &0.438  &0.431  &0.407  &0.417  &0.365  &0.406  &0.352  &0.396  &0.352  &0.391  &0.368  &0.387  &0.386  &0.399  &0.388  &0.388  &0.379  &0.392  &0.401  &0.397   \\
         & 336  &\textcolor{blue}{\textbf{0.359}}  &\textcolor{red}{\textbf{0.402}}   &\textcolor{red}{\textbf{0.355}}  &\textcolor{blue}{\textbf{0.403}}  &0.617  &0.509  &0.514  &0.476  &0.378  &0.413  &0.375  &0.420  &0.384  & 0.418  &0.403  &0.411  &0.428  &0.426  &0.432  &0.422  &0.414  &0.412  &0.436  &0.430  \\
         & 720  &\textcolor{red}{\textbf{0.393}}  &\textcolor{red}{\textbf{0.432}}   &\textcolor{blue}{\textbf{0.395}}  &\textcolor{blue}{\textbf{0.435}}  &0.907  &0.622  &0.689  &0.561  &0.414  &0.457  &0.405  &0.442  &0.418  &0.446  &0.465  &0.453  &0.454  &0.455  &0.442  &0.443  &0.451  &0.452  &0.437  &0.450  \\
         \cmidrule(lr){2-26}
         & Avg  &\textcolor{red}{\textbf{0.339}}  &\textcolor{red}{\textbf{0.388}}   &\textcolor{blue}{\textbf{0.341}}  &\textcolor{blue}{\textbf{0.391}}  &0.566  &0.479  &0.477  &0.452  &0.366  &0.408  &0.355  &0.401  &0.360  &0.399  &0.382  &0.397  &0.394  &0.409  &0.392  &0.401  &0.385  &0.400  &0.396  &0.405  \\
         \midrule[1pt]

         \multirow{5}*{\rotatebox{90}{ETTm2}}
         & 96  &\textcolor{blue}{\textbf{0.193}}  &0.286   &0.199  &0.289  &0.197  &0.287  &0.197  &0.285  &0.203  &0.285  &0.228  &0.296  &0.227  &0.290  &0.223  &0.288  &0.209  &0.287  &0.204  &\textcolor{red}{\textbf{0.280}}  &0.206  &\textcolor{blue}{\textbf{0.283}}  &\textcolor{red}{\textbf{0.189}}  &0.257    \\
         & 192  &\textcolor{red}{\textbf{0.256}}  &\textcolor{blue}{\textbf{0.323}}   &\textcolor{red}{\textbf{0.256}}  &0.325  &0.338  &0.381  &0.329  &0.376  &\textcolor{blue}{\textbf{0.265}}  &0.327  &0.275  &0.324  &0.306  &0.334  &0.303  &0.331  &0.280  &0.332  &0.271  &\textcolor{red}{\textbf{0.321}}  &0.279  &0.330  &0.277  &0.325   \\
         & 336  &\textcolor{blue}{\textbf{0.314}}  &\textcolor{blue}{\textbf{0.358}}   &\textcolor{red}{\textbf{0.308}}  &\textcolor{red}{\textbf{0.355}}  &0.586  &0.501  &0.534  &0.481  &0.319  &0.361  &0.335  &0.360  &0.366  &0.373  &0.354  &0.364  &0.344  &0.371  &0.327  &0.362  &0.339  &0.365  &0.350  &0.381  \\
         & 720  &\textcolor{blue}{\textbf{0.400}}  &\textcolor{blue}{\textbf{0.409}}   &\textcolor{red}{\textbf{0.392}}  &\textcolor{red}{\textbf{0.404}}  &1.034  &0.683  &0.978  &0.668  &0.405  &0.410  &0.453  &0.425  &0.456  &0.429  &0.456  &0.423  &0.448  &0.430  &0.432  &0.415  &0.437  &0.420  &0.464  &0.448  \\
         \cmidrule(lr){2-26}
         & Avg  &\textcolor{blue}{\textbf{0.291}}  &\textcolor{blue}{\textbf{0.344}}   &\textcolor{red}{\textbf{0.289}}  &\textcolor{red}{\textbf{0.343}}  &0.538  &0.463  &0.509  &0.452  &0.298  &0.346  &0.323  &0.351  &0.339  &0.356  &0.334  &0.352  &0.320  &0.355  &0.308  &\textcolor{blue}{\textbf{0.344}}  &0.315  &0.350  &0.320  &0.353  \\
         \midrule[1pt]

        \multirow{5}*{\rotatebox{90}{Weather}}
         & 96  &\textcolor{blue}{\textbf{0.169}}  &0.228   &0.171  &0.228  &\textcolor{red}{\textbf{0.159}}  &\textcolor{red}{\textbf{0.213}}  &\textcolor{red}{\textbf{0.159}}  &\textcolor{blue}{\textbf{0.214}}  &0.190  &0.236  &0.180  &0.229  &0.208  &0.221  &0.213  &\textcolor{red}{\textbf{0.213}}  &0.212  &0.238  &0.198  &0.231  &0.207  &0.232  &-  &-    \\
         & 192  &\textcolor{blue}{\textbf{0.216}}  &\textcolor{blue}{\textbf{0.266}}   &0.218  &0.269  &\textcolor{red}{\textbf{0.214}}  &0.267  &0.217  &0.271  &0.261  &0.293  &0.231  &0.279  &0.281  &0.270  &0.342  &\textcolor{red}{\textbf{0.262}}  &0.266  &0.283  &0.248  &0.276  &0.260  &0.277  &-  &-   \\
         & 336   &\textcolor{red}{\textbf{0.266}}   &\textcolor{red}{\textbf{0.303}}   &\textcolor{blue}{\textbf{0.267}}  &\textcolor{blue}{\textbf{0.304}}  &0.292  &0.326  &0.312  &0.343  &0.332  &0.340  &0.289  &0.330  &0.340  &0.313  &0.527  &0.312  &0.321  &0.321  &0.301  &0.314  &0.310  &0.313  &-  &-  \\
         & 720  &\textcolor{red}{\textbf{0.335}}  &\textcolor{blue}{\textbf{0.352}}   &\textcolor{blue}{\textbf{0.336}}  &\textcolor{red}{\textbf{0.350}}  &0.453  &0.430  &0.586  &0.508  &0.385  &0.381  &0.367  &0.385  &0.420  &0.376  &0.826  &0.369  &0.393  &0.367  &0.386  &0.361  &0.390  &0.365  &-  &-  \\
         \cmidrule(lr){2-26}
         & Avg  &\textcolor{red}{\textbf{0.247}}  &\textcolor{red}{\textbf{0.287}}   &\textcolor{blue}{\textbf{0.248}}  &\textcolor{red}{\textbf{0.287}}  &0.279  &0.309  &0.318  &0.334  &0.292  &0.312  &0.267  &0.306  &0.312  &0.295  &0.477  &\textcolor{blue}{\textbf{0.289}} &0.298  &0.302  &0.283  &0.295  &0.292  &0.297  &-  &-  \\
         \midrule[1pt]

        \multirow{5}*{\rotatebox{90}{Electricity}}
         & 96  &\textcolor{red}{\textbf{0.158}}  &\textcolor{red}{\textbf{0.260}}   &\textcolor{blue}{\textbf{0.168}}  &0.271  &-  &-  &-  &-  &0.210  &0.312  &0.212  &0.304  &0.169  &0.269  &0.193  &0.275  &0.199  &\textcolor{blue}{\textbf{0.267}}  &0.196  &0.273  &0.198  &0.278  &-  &-    \\
         
         & 192  &\textcolor{red}{\textbf{0.173}}  &\textcolor{red}{\textbf{0.273}}   &\textcolor{blue}{\textbf{0.183}}  &\textcolor{blue}{\textbf{0.283}}  &-  &-  &-  &-  &0.239  &0.337  &0.224  &0.315  &0.186  &0.285  &0.186  &0.296  &0.218  &0.289  &0.207  &0.285  &0.214  &0.294  &-  &-   \\
         
         & 336  &\textcolor{red}{\textbf{0.190}}  &\textcolor{red}{\textbf{0.288}}   &\textcolor{blue}{\textbf{0.198}}  &\textcolor{blue}{\textbf{0.297}}  &-  &-  &-  &-  &0.284  &0.372  &0.244  &0.331  &0.215  &0.299  &0.221  &0.311  &0.244  &0.321  &0.238  &0.314  &0.239  &0.314  &-  &-  \\
         
         & 720  &\textcolor{red}{\textbf{0.230}}  &\textcolor{red}{\textbf{0.320}}   &\textcolor{blue}{\textbf{0.238}}  &\textcolor{blue}{\textbf{0.329}}  &-  &-  &-  &-  &0.456  &0.479  &0.291  &0.365  &0.257  &0.332  &0.296  &0.355  &0.324  &0.371  &0.310  &0.355  &0.316  &0.362  &-  &-  \\
         
         \cmidrule(lr){2-26}
         & Avg  &\textcolor{red}{\textbf{0.188}}  &\textcolor{red}{\textbf{0.285}}   &\textcolor{blue}{\textbf{0.196}}  &\textcolor{blue}{\textbf{0.295}}  &-  &-  &-  &-  &0.297  &0.375  &0.243  &0.329  &0.207  &0.296  &0.224  &0.309  &0.246  &0.312  &0.336  &0.329  &0.326  &0.328  &-  &-  \\
         \midrule[1pt]

         \multirow{5}*{\rotatebox{90}{Traffic}}
         & 96  &\textcolor{red}{\textbf{0.432}}  &\textcolor{red}{\textbf{0.318}}   &\textcolor{blue}{\textbf{0.441}}  &\textcolor{blue}{\textbf{0.333}}  &-  &-  &-  &-  &0.526  &0.368  &-  &-  &-  &-  &-  &-  &0.562  &0.378  &0.558  &0.375  &0.541  &0.364  &-  &-    \\
         
         & 192  &\textcolor{red}{\textbf{0.448}}  &\textcolor{red}{\textbf{0.325}}   &\textcolor{blue}{\textbf{0.456}}  &\textcolor{blue}{\textbf{0.339}}  &-  &-  &-  &-  &0.561  &0.385  &-  &-  &-  &-  &-  &-  &0.579  &0.412  &0.560  &0.399  &0.570  &0.406  &-  &-  \\
         
         & 336  &\textcolor{red}{\textbf{0.460}}  &\textcolor{red}{\textbf{0.331}}   &\textcolor{blue}{\textbf{0.467}}  &\textcolor{blue}{\textbf{0.344}}  &-  &-  &-  &-  &0.614  &0.412  &-  &-  &-  &-  &-  &-  &0.594  &0.420  &0.584  &0.413  &0.571  &0.404  &-  &-  \\
         
         & 720  &\textcolor{red}{\textbf{0.493}}  &\textcolor{red}{\textbf{0.347}}   &\textcolor{blue}{\textbf{0.503}}  &\textcolor{blue}{\textbf{0.360}}  &-  &-  &-  &-  &0.749  &0.464  &-  &-  &-  &-  &-  &-  &0.723  &0.472  &0.711  &0.464  &0.719  &0.469  &-  &-  \\
         
         \cmidrule(lr){2-26}
         & Avg  &\textcolor{red}{\textbf{0.458}}  &\textcolor{red}{\textbf{0.330}}   &\textcolor{blue}{\textbf{0.466}}  &\textcolor{blue}{\textbf{0.344}}  &-  &-  &-  &-  &0.613  &0.407  &-  &-  &-  &-  &-  &-  &0.614  &0.420  &0.603  &0.413  &0.600  &0.411  &-  &-  \\
         \midrule[1pt]

        \multicolumn{2}{c|}{$1^{st}$ Count} &\multicolumn{2}{c|}{\textcolor{red}{\textbf{43}}}  &\multicolumn{2}{c|}{\textcolor{blue}{\textbf{10}}}  &\multicolumn{2}{c|}{4}  &\multicolumn{2}{c|}{3}  &\multicolumn{2}{c|}{0}  &\multicolumn{2}{c|}{0} &\multicolumn{2}{c|}{0} &\multicolumn{2}{c|}{2} &\multicolumn{2}{c|}{1} &\multicolumn{2}{c|}{2} &\multicolumn{2}{c|}{0} &\multicolumn{2}{c}{1} \\
         \bottomrule[2pt]
    \end{tabular}
    }
    \label{tab:zero_shot_full}
\end{table*}

\begin{table*}[t]
    \centering
    \caption{Zero-shot comparison with other representative time series foundation models on the TSLib benchmark, with the best results are in \textcolor{red}{\textbf{red}}. 
    Avg means the average results from all four prediction lengths.
    }
    \scalebox{0.8}{
    \begin{tabular}{c |c|c c|c c|c c| c c|c c |c c |c}
    \toprule[2pt]
       \multicolumn{2}{c|}{Datasets}  &\multicolumn{2}{c|}{ETTh1} &\multicolumn{2}{c|}{ETTh2} &\multicolumn{2}{c|}{ETTm2} &\multicolumn{2}{c|}{Weather} &\multicolumn{2}{c|}{Electricity} &\multicolumn{2}{c|}{Traffic}  & \multirow{2}{*}{1$^{st}$ Count} \\
       
       \multicolumn{2}{c|}{Metrics} &MSE &MAE &MSE &MAE &MSE &MAE &MSE &MAE &MSE &MAE  &MSE &MAE \\
       \midrule[1pt]
         \multirow{5}*{\rotatebox{90}{Moment}}
         & 96  &0.706  &0.561  &0.373  &0.416  &0.230  &0.308  &0.216  &0.271  &0.844  &0.761   &1.390  &0.800  \\
         & 192  &0.716  &0.579  &0.384  &0.422  &0.285  &0.338  &0.264  &0.306  &0.850  &0.762   &1.403  &0.802  \\
         & 336  &0.705  &0.583  &0.386  &0.426  &0.339  &0.369  &0.313  &0.336  &0.862  &0.766    &1.415  &0.804 &0 \\
         & 720  &0.705  &0.597  &0.425  &0.454  &0.423  &0.424  &0.369  &0.380  &0.888  &0.774   &1.437  &0.808  \\
         \cmidrule(lr){2-14}
         & Avg  &0.708  &0.580  &0.392  &0.430  &0.319  &0.360  &0.291  &0.323  &0.861  &0.766   &1.411  &0.804  \\
         \midrule[1pt]

        \multirow{5}*{\rotatebox{90}{ROSE}}
         & 96  &0.382  &0.408  &0.298  &0.362  &0.224  &0.309  &0.200  &0.260  &0.209  &0.307   &0.572  &0.407  \\
         & 192  &0.400  &0.420  &0.336  &0.385  &0.266  &0.333  &0.239  &0.288  &0.219  &0.315  &0.575  &0.406   \\
         & 336  &\textcolor{red}{\textbf{0.404}}  &0.426  &\textcolor{red}{\textbf{0.353}}  &\textcolor{red}{\textbf{0.399}}  &\textcolor{red}{\textbf{0.310}}  &\textcolor{red}{\textbf{0.358}}  &0.279  &0.315  &0.236  &0.330   &0.588  &0.411 &10 \\
         & 720  &\textcolor{red}{\textbf{0.420}}  &\textcolor{red}{\textbf{0.447}}  &0.395  &\textcolor{red}{\textbf{0.432}}  &\textcolor{red}{\textbf{0.395}}  &\textcolor{red}{\textbf{0.407}}  &0.340  &0.357  &0.273  &0.328   &0.618  &0.411  \\
         \cmidrule(lr){2-14}
         & Avg  &0.401  &0.425  &0.346  &0.394  &0.299  &0.352  &0.265  &0.305  &0.234  &0.320   &0.618  &0.422  \\
         \midrule[1pt]

        \multirow{5}*{\rotatebox{90}{Sundial}}
         & 96  &0.420  &0.423  &0.336  &0.369  &0.215  &\textcolor{red}{\textbf{0.281}}  &0.202  &\textcolor{red}{\textbf{0.228}}  &\textcolor{red}{\textbf{0.135}}  &\textcolor{red}{\textbf{0.230}}   &0.509  &\textcolor{red}{\textbf{0.303}}  \\
         & 192  &0.458  &0.449  &0.405  &0.413  &0.288  &0.328  &0.266  &0.280  &\textcolor{red}{\textbf{0.156}}  &\textcolor{red}{\textbf{0.250}}  &0.541  &\textcolor{red}{\textbf{0.322}}   \\
         & 336  &0.472  &0.460  &0.421  &0.430  &0.350  &0.366  &0.322  &0.319  &\textcolor{red}{\textbf{0.174}}  &\textcolor{red}{\textbf{0.269}}   &0.564  &0.334 &15 \\
         & 720  &0.507  &0.488  &0.448  &0.456  &0.442  &0.421  &0.392  &0.368  &\textcolor{red}{\textbf{0.217}}  &\textcolor{red}{\textbf{0.305}}   &0.611  &0.362  \\
         \cmidrule(lr){2-14}
         & Avg  &0.464  &0.455  &0.403  &0.417  &0.324  &0.349  &0.296  &0.299  &\textcolor{red}{\textbf{0.171}}  &\textcolor{red}{\textbf{0.264}}   &0.556  &\textcolor{red}{\textbf{0.330}}  \\
         \midrule[1pt]

        \multirow{5}*{\rotatebox{90}{Olivia}}
         & 96  & \textcolor{red}{\textbf{0.370}}  & \textcolor{red}{\textbf{ 0.397}}  &\textcolor{red}{\textbf{0.274}}  &\textcolor{red}{\textbf{0.340}}  & \textcolor{red}{\textbf{0.193}}  & 0.286  & \textcolor{red}{\textbf{0.169}}  & \textcolor{red}{\textbf{0.228}}  & 0.158  & 0.260   & \textcolor{red}{\textbf{0.432}}  & 0.318  \\
         & 192  &\textcolor{red}{\textbf{ 0.396}}  &\textcolor{red}{\textbf{0.415}}  & \textcolor{red}{\textbf{0.328}}  &\textcolor{red}{\textbf{0.378}}  &\textcolor{red}{\textbf{0.256}}  & \textcolor{red}{\textbf{0.323}}  & \textcolor{red}{\textbf{0.216}}  & \textcolor{red}{\textbf{0.266}}  & 0.173  & 0.273   & \textcolor{red}{\textbf{0.448}}  & 0.325  \\
         & 336  &0.407  & \textcolor{red}{\textbf{0.424}}  &0.359  &0.402  & 0.314  &\textcolor{red}{\textbf{0.358}}  & \textcolor{red}{\textbf{0.266}}  & \textcolor{red}{\textbf{0.303}}  & 0.190  & 0.288   & \textcolor{red}{\textbf{0.460}}  & \textcolor{red}{\textbf{0.331}}  &\textcolor{red}{\textbf{39}} \\
         
         & 720  &0.424  & \textcolor{red}{\textbf{0.447}}  &\textcolor{red}{\textbf{0.393}}  & \textcolor{red}{\textbf{0.432}}  & 0.400  & 0.409  & \textcolor{red}{\textbf{0.335}}  &\textcolor{red}{\textbf{0.352}}  & 0.230  & 0.320   & \textcolor{red}{\textbf{ 0.493}}  &\textcolor{red}{\textbf{ 0.347}}  \\
         
         \cmidrule(lr){2-14}
         & Avg  & \textcolor{red}{\textbf{0.399}}  &\textcolor{red}{\textbf{0.421}}  & \textcolor{red}{\textbf{0.339}}  & \textcolor{red}{\textbf{0.388}}  & \textcolor{red}{\textbf{0.291}}  &\textcolor{red}{\textbf{0.344}}  &\textcolor{red}{\textbf{0.247}}  &\textcolor{red}{\textbf{0.287}}  &0.188  & 0.285   & \textcolor{red}{\textbf{0.458}}  &0.330  \\
         \midrule[1pt]

    \end{tabular}
    }
    \label{tab:zero_shot_ortho_baseline}
\end{table*}

Table~\ref{tab:zero_shot_lightweight} reports a zero-shot comparison with representative lightweight time series foundation models across multiple datasets and prediction horizons. Despite using significantly fewer parameters, Olivia achieves the highest number of first-place results (28), outperforming TTM (18) and SEMPO (4), indicating consistently strong zero-shot generalization. Notably, Olivia shows clear advantages on large-scale datasets such as Electricity and Traffic, where it attains the best average MSE and MAE across all horizons. These results suggest that PSD-guided temporal harmonization enables an effective balance between model efficiency and generalization capability.
\begin{table*}[h]
    \centering
    \caption{Zero-shot comparison with current lightweight time series foundation models, with the best results are in \textcolor{red}{\textbf{red}}. Values in ( , ) denote model size and pre-training data size, respectively. Avg means the average results from all four prediction lengths.}
    \scalebox{0.8}{
    \begin{tabular}{l l| c|c c|c c|c c| c c|c c |c}
    \toprule[2pt]
       \multicolumn{3}{c|}{Datasets}  &\multicolumn{2}{c|}{ETTh2} &\multicolumn{2}{c|}{ETTm2} &\multicolumn{2}{c|}{Weather} &\multicolumn{2}{c|}{Electricity} &\multicolumn{2}{c|}{Traffic}  & \multirow{2}{*}{1$^{st}$ Count} \\
              
       
       \multicolumn{3}{c|}{Metrics} &MSE &MAE &MSE &MAE &MSE &MAE &MSE &MAE &MSE &MAE  \\
       \midrule[1pt]
         \multirow{5}*{\rotatebox{90}{TTM}}
         &\multirow{5}*{\rotatebox{90}{(1M, 1B)}}
         & 96  &0.276  &\textcolor{red}{\textbf{0.335}}  &\textcolor{red}{\textbf{0.177}}  &\textcolor{red}{\textbf{0.259}}  &\textcolor{red}{\textbf{0.151}}  &\textcolor{red}{\textbf{0.196}}  &0.182  &0.274  &0.508  &0.342    \\
         && 192  &0.346  &0.383  &\textcolor{red}{\textbf{0.246}}  &\textcolor{red}{\textbf{0.307}}  &\textcolor{red}{\textbf{0.195}}  &\textcolor{red}{\textbf{0.241}}  &0.196  &0.287  &0.538  &0.355    \\
         && 336  &0.385  &0.417  &0.324  &\textcolor{red}{\textbf{0.350}}  &\textcolor{red}{\textbf{0.256}}  &\textcolor{red}{\textbf{0.285}}  &0.219  &0.307  & 0.574 &0.373   &18 \\
         && 720  &0.420  &0.450  &0.406  &0.405  &\textcolor{red}{\textbf{0.319}}  &\textcolor{red}{\textbf{0.333}}  &0.261  &0.342  &0.617  &0.390    \\
         \cmidrule(lr){3-13}
         && Avg  &0.357  &0.396  &\textcolor{red}{\textbf{0.288}}  &\textcolor{red}{\textbf{0.330}}  &\textcolor{red}{\textbf{0.230}}  &\textcolor{red}{\textbf{0.264}}  &0.215  &0.303  &0.559  &0.365    \\
         \midrule[1pt]

        \multirow{5}*{\rotatebox{90}{SEMPO}}
        &\multirow{5}*{\rotatebox{90}{(6.5M, 83M)}}
         & 96  &0.282  &0.342  & 0.199  & 0.289  &0.171  &0.228  &0.168  &0.271  &0.447  &0.334    \\
         && 192  &0.334  &0.384  &0.256  & 0.325  &0.218  &0.269  &0.183  &0.283  & 0.464  &0.341    \\
         && 336  &\textcolor{red}{\textbf{0.355}}  &0.403  &\textcolor{red}{\textbf{0.308}}  &0.355  &0.267  &0.304  &0.198  &0.297  &0.474  &0.344  &4  \\
         && 720  &0.395  &0.435  &\textcolor{red}{\textbf{0.392}}  &\textcolor{red}{\textbf{0.404}}  &0.336  &0.350  &0.238  &0.329  &0.509  &0.359    \\
         \cmidrule(lr){3-13}
         && Avg  &0.341  &0.391  &0.289  &0.343  &0.248  &0.287  &0.196  &0.295  &0.474  & 0.345    \\
         \midrule[1pt]

        \multirow{5}*{\rotatebox{90}{Olivia}}
        &\multirow{5}*{\rotatebox{90}{(5.1M, 67M)}}
         & 96  & \textcolor{red}{\textbf{0.274}}  & 0.340  & 0.193  & 0.286  & 0.169  & 0.228  &\textcolor{red}{\textbf{ 0.158}}  & \textcolor{red}{\textbf{ 0.260}}  &\textcolor{red}{\textbf{ 0.432}}  & \textcolor{red}{\textbf{ 0.318}}    \\
         && 192  & \textcolor{red}{\textbf{0.328}}  & \textcolor{red}{\textbf{0.378}}  & 0.256  & 0.323  &0.216  & 0.266  &\textcolor{red}{\textbf{ 0.173}}  & \textcolor{red}{\textbf{ 0.273}}  & \textcolor{red}{\textbf{ 0.448}}  & \textcolor{red}{\textbf{0.325}}    \\
         && 336  &0.359  & \textcolor{red}{\textbf{0.402}}  & 0.314  & 0.358  &0.266  & 0.303  & \textcolor{red}{\textbf{ 0.190}}  & \textcolor{red}{\textbf{ 0.288}}  & \textcolor{red}{\textbf{ 0.460}}  & \textcolor{red}{\textbf{0.331}}  &\textcolor{red}{\textbf{28}}  \\
         && 720  &\textcolor{red}{\textbf{0.393}}  &\textcolor{red}{\textbf{0.432}}  & 0.400  & 0.409  & 0.335  & 0.352  &\textcolor{red}{\textbf{ 0.230}}  &\textcolor{red}{\textbf{0.320}}  &\textcolor{red}{\textbf{ 0.493}}  & \textcolor{red}{\textbf{ 0.347}}    \\
         \cmidrule(lr){3-13}
         && Avg  & \textcolor{red}{\textbf{0.339}}  & \textcolor{red}{\textbf{0.388}}  & 0.291  & 0.344  &0.247  &0.287  & \textcolor{red}{\textbf{0.188}}  & \textcolor{red}{\textbf{ 0.285}}  & \textcolor{red}{\textbf{0.458}}  & \textcolor{red}{\textbf{ 0.330}}    \\

         \bottomrule[2pt]
    \end{tabular}
    }
    \label{tab:zero_shot_lightweight}
\end{table*}

\subsection{Few-shot and Full-shot Forecasting}\label{appendix_few_full}
\paragraph{Few-shot Forecasting.}
Table~\ref{tab:few_shot_10} reports the few-shot forecasting results on the TSLib benchmark with only 10\% training data. Overall, Olivia consistently achieves the best or second-best performance across most datasets and prediction horizons, demonstrating strong data efficiency under limited supervision.
When compared with pretrained time-series foundation models such as SEMPO and TTM, Olivia shows clear advantages in both MSE and MAE on the majority of datasets. This indicates that, beyond large-scale pretraining alone, explicitly aligning cross-domain temporal structures is crucial for effective few-shot adaptation.
Relative to LLM-based models, including Time-LLM, GPT4TS, and {$\text{S}^{2}$IP-LLM}, Olivia delivers substantially more stable and competitive performance across all benchmarks. While LLM-based approaches benefit from strong language priors, their forecasting accuracy appears less reliable under limited labeled data, whereas Olivia maintains robust performance without relying on external language supervision.
Finally, compared with task-specific forecasting models such as iTransformer, DLinear, PatchTST, TimesNet, Stationary, and FEDformer, Olivia consistently outperforms these methods across most datasets. This suggests that Olivia effectively bridges the gap between foundation-level generalization and task-specific inductive biases, achieving superior few-shot forecasting performance without sacrificing cross-dataset robustness.

\begin{table*}[!b]
    \centering
    \caption{Full few-shot results on the TSLib benchmark with $10\%$ training data.
    The best results are in \textcolor{red}{\textbf{red}} and the second best are \textcolor{blue}{\textbf{blue}}. 
    Avg means the average results from all four prediction lengths.
    }
    \scalebox{0.55}{
    \begin{tabular}{l | c|c c|c c|c c| c c|c c |c c |c c |c c |c c |c c |c c |c c}
    \toprule[2pt]
       \multicolumn{2}{c|}{Models}  &\multicolumn{2}{c|}{\textbf{Olivia}} &\multicolumn{2}{c|}{SEMPO} &\multicolumn{2}{c|}{TTM} &\multicolumn{2}{c|}{Time-LLM} &\multicolumn{2}{c|}{GPT4TS}  &\multicolumn{2}{c|}{$\text{S}^{2}$IP-LLM} &\multicolumn{2}{c|}{iTransformer} &\multicolumn{2}{c|}{DLinear} &\multicolumn{2}{c|}{PatchTST}   &\multicolumn{2}{c|}{TimesNet} &\multicolumn{2}{c|}{Stationary} &\multicolumn{2}{c}{FEDformer} \\
              
       \cmidrule(r){1-2}\cmidrule(lr){3-4}\cmidrule(lr){5-6}\cmidrule(lr){7-8}\cmidrule(lr){9-10}\cmidrule(lr){11-12}\cmidrule(lr){13-14}\cmidrule(lr){15-16}\cmidrule(lr){17-18}\cmidrule(lr){19-20} \cmidrule(lr){21-22} \cmidrule(lr){23-24} \cmidrule(lr){25-26} 
       
       \multicolumn{2}{c|}{Metrics} &MSE &MAE &MSE &MAE &MSE &MAE &MSE &MAE &MSE &MAE &MSE &MAE &MSE &MAE &MSE &MAE &MSE &MAE  &MSE &MAE &MSE &MAE &MSE &MAE   \\
       \midrule[1pt]
         \multirow{5}*{\rotatebox{90}{ETTh1}}
         & 96  &\textcolor{blue}{\textbf{0.365}}  &\textcolor{blue}{\textbf{0.392}}   &0.378 &0.405 &\textcolor{red}{\textbf{0.362}} &\textcolor{red}{\textbf{0.389}} &0.448 &0.460 &0.458 &0.456 &0.463 &0.459 &0.790 &0.586 &0.492 &0.495 &0.516 &0.485 &0.861 &0.628 &0.918 &0.639 &0.512 &0.499       \\
         & 192  &\textcolor{blue}{\textbf{0.392}}  &\textcolor{red}{\textbf{0.408}}  &0.407 &\textcolor{blue}{\textbf{0.424}} &\textcolor{red}{\textbf{0.386}} &\textcolor{red}{\textbf{0.408}} &0.484 &0.483 &0.570 &0.516 &0.482 &0.487 &0.837 &0.609 &0.565 &0.538 &0.598 &0.524 &0.797 &0.593 &0.915 &0.629 &0.624 &0.555    \\
         & 336  &\textcolor{blue}{\textbf{0.404}}  &\textcolor{red}{\textbf{0.420}}   &0.426 &\textcolor{blue}{\textbf{0.439}} &\textcolor{red}{\textbf{0.399}} &\textcolor{red}{\textbf{0.420}} &0.589 &0.540 &0.608 &0.535 &0.603 &0.543 &0.780 &0.575 &0.721 &0.622 &0.657 &0.550 &0.941 &0.648 &0.939 &0.644 &0.691 &0.574      \\
         & 720  &\textcolor{red}{\textbf{0.432}}  &\textcolor{red}{\textbf{0.452}}   &\textcolor{blue}{\textbf{0.456}} &\textcolor{blue}{\textbf{0.474}} &0.465 &0.477 &0.700 &0.604 &0.725 &0.591 &0.713 &0.588 &1.234 &0.811 &0.986 &0.743 &0.762 &0.610 &0.877 &0.641 &0.887 &0.645 &0.728 &0.614       \\
         \cmidrule(lr){2-26}
         & Avg  &\textcolor{red}{\textbf{0.398}}  &\textcolor{red}{\textbf{0.418}}  &0.417 &0.435 &\textcolor{blue}{\textbf{0.403}} &\textcolor{blue}{\textbf{0.423}} &0.556 &0.522 &0.590 &0.525 &0.565 &0.524 &0.910 &0.860 &0.691 &0.600 &0.633 &0.542 &0.869 &0.628 &0.915 &0.639 &0.639 &0.561       \\
         \midrule[1pt]

         \multirow{5}*{\rotatebox{90}{ETTh2}}
         & 96  &\textcolor{red}{\textbf{0.272}}  &\textcolor{blue}{\textbf{0.337}}  &0.276 &0.340 &\textcolor{blue}{\textbf{0.275}} &\textcolor{red}{\textbf{0.335}} &0.275 &0.326 &0.331 &0.374 &0.300 &0.360 &0.404 &0.435 &0.357 &0.411 &0.353 &0.389 &0.378 &0.409 &0.389 &0.411 &0.382 &0.416      \\
         & 192  &\textcolor{red}{\textbf{0.326}}  &\textcolor{blue}{\textbf{0.375}}  &\textcolor{blue}{\textbf{0.332}} &0.376 &0.345 &0.383 &0.374 &\textcolor{red}{\textbf{0.373}} &0.402 &0.411 &0.372 &0.371 &0.470 &0.474 &0.569 &0.519 &0.403 &0.414 &0.490 &0.467 &0.473 &0.455 &0.478 &0.474       \\
         & 336  &\textcolor{blue}{\textbf{0.357}}  &\textcolor{red}{\textbf{0.399}}   &\textcolor{red}{\textbf{0.354}} &\textcolor{red}{\textbf{0.399}} &0.384 &0.416 &0.406 &0.429 &0.406 &0.433 &0.389 &\textcolor{blue}{\textbf{0.413}} &0.489 &0.485 &0.671 &0.572 &0.426 &0.441 &0.537 &0.494 &0.507 &0.480 &0.504 &0.501    \\
         & 720  &\textcolor{red}{\textbf{0.392}}  &\textcolor{blue}{\textbf{0.430}}  &\textcolor{blue}{\textbf{0.395}} &0.433 &0.419 &0.450 &0.427 &0.449 &0.449 &0.464 &0.403 &\textcolor{red}{\textbf{0.426}} &0.593 &0.538 &0.824 &0.648 &0.477 &0.480 &0.510 &0.491 &0.477 &0.472 &0.499 &0.509       \\
         \cmidrule(lr){2-26}
         & Avg  &\textcolor{red}{\textbf{0.337}}  &\textcolor{red}{\textbf{0.385}}  &\textcolor{blue}{\textbf{0.339}} &\textcolor{blue}{\textbf{0.387}} &0.355 &0.391 &0.370 &0.394 &0.397 &0.421 &0.366 &0.392 &0.489 &0.483 &0.605 &0.538 &0.415 &0.431 &0.479 &0.465 &0.462 &0.455 &0.466 &0.475       \\
         \midrule[1pt]

         \multirow{5}*{\rotatebox{90}{ETTm2}}
         & 96  &\textcolor{blue}{\textbf{0.163}}  &\textcolor{blue}{\textbf{0.251}}   &0.166 &0.256 &0.176 &0.260 &0.177 &0.261 &0.188 &0.269 &\textcolor{red}{\textbf{0.140}} &\textcolor{red}{\textbf{0.242}} &0.245 &0.322 &0.213 &0.303 &0.191 &0.274 &0.212 &0.285 &0.229 &0.308 &0.291 &0.399     \\
         & 192  &\textcolor{blue}{\textbf{0.220}}  &\textcolor{red}{\textbf{0.290}}   &0.223 &0.295 &0.242 &0.304 &0.241 &0.314 &0.251 &0.309 &\textcolor{red}{\textbf{0.207}} &\textcolor{blue}{\textbf{0.293}} &0.274 &0.338 &0.278 &0.345 &0.252 &0.317 &0.270 &0.323 &0.291 &0.343 &0.307 &0.379      \\
         & 336  &0.280  &0.333  &0.276 &\textcolor{red}{\textbf{0.329}} &0.315 &0.345 &\textcolor{blue}{\textbf{0.274}} &0.327 &0.307 &0.346 &\textcolor{red}{\textbf{0.264}} &\textcolor{blue}{\textbf{0.331}} &0.361 &0.394 &0.338 &0.385 &0.306 &0.353 &0.323 &0.353 &0.348 &0.376 &0.543 &0.559   \\
         & 720  &\textcolor{blue}{\textbf{0.361}}  &\textcolor{blue}{\textbf{0.385}}  &\textcolor{red}{\textbf{0.357}} &\textcolor{red}{\textbf{0.381}} &0.394 &0.399 &0.417 &0.390 &0.426 &0.417 &0.381 &0.387 &0.467 &0.442 &0.436 &0.440 &0.433 &0.427 &0.474 &0.449 &0.461 &0.438 &0.712 &0.614      \\
         \cmidrule(lr){2-26}
         & Avg  &0.256  &\textcolor{blue}{\textbf{0.315}}  &\textcolor{blue}{\textbf{0.255}} &\textcolor{blue}{\textbf{0.315}} &0.281 &0.327 &0.277 &0.323 &0.293 &0.335 &\textcolor{red}{\textbf{0.248}} &\textcolor{red}{\textbf{0.313}} &0.336 &0.373 &0.316 &0.368 &0.296 &0.343 &0.320 &0.353 &0.332 &0.366 &0.463 &0.488      \\
         \midrule[1pt]

         \multirow{5}*{\rotatebox{90}{Weather}}
         & 96  &\textcolor{red}{\textbf{0.149}}  &0.202  &\textcolor{blue}{\textbf{0.152}} &0.204 &\textcolor{blue}{\textbf{0.152}} &\textcolor{red}{\textbf{0.199}} &0.161 &0.210 &0.163 &0.215 &0.154 &\textcolor{blue}{\textbf{0.201}} &0.253 &0.307 &0.171 &0.224 &0.165 &0.215 &0.184 &0.230 &0.192 &0.234 &0.188 &0.253     \\
         & 192  &\textcolor{blue}{\textbf{0.194}}  &\textcolor{red}{\textbf{0.240}}   &0.196 &0.243 &\textcolor{red}{\textbf{0.193}} &0.245 &0.204 &0.248 &0.210 &0.254 &0.195 &\textcolor{blue}{\textbf{0.241}} &0.292 &0.328 &0.215 &0.263 &0.210 &0.257 &0.245 &0.283 &0.269 &0.295 &0.250 &0.304   \\
         & 336  &\textcolor{red}{\textbf{0.244}}  &\textcolor{red}{\textbf{0.279}}   &\textcolor{blue}{\textbf{0.246}} &\textcolor{blue}{\textbf{0.282}} &\textcolor{blue}{\textbf{0.246}} &\textcolor{blue}{\textbf{0.282}} &0.261 &0.302 &0.256 &0.292 &0.260 &0.302 &0.322 &0.346 &0.258 &0.299 &0.259 &0.297 &0.305 &0.321 &0.370 &0.357 &0.312 &0.346   \\
         & 720  &0.319  &0.336   &0.319 &0.336 &0.336 &0.346 &\textcolor{blue}{\textbf{0.309}} &\textcolor{blue}{\textbf{0.332}} &0.321 &0.339 &\textcolor{red}{\textbf{0.303}} &\textcolor{red}{\textbf{0.329}} &0.365 &0.374 &0.320 &0.346 &0.332 &0.346 &0.381 &0.371 &0.441 &0.405 &0.387 &0.393      \\
         \cmidrule(lr){2-26}
         & Avg  &\textcolor{red}{\textbf{0.227}}  &\textcolor{red}{\textbf{0.264}}  &\textcolor{blue}{\textbf{0.228}} &\textcolor{blue}{\textbf{0.266}} &0.232 &0.268 &0.234 &0.273 &0.238 &0.275 &\textcolor{blue}{\textbf{0.228}} &0.268 &0.308 &0.338 &0.241 &0.283 &0.242 &0.279 &0.279 &0.301 &0.318 &0.323 &0.284 &0.324       \\
         \midrule[1pt]

        \multirow{5}*{\rotatebox{90}{Electricity}}
         & 96  &\textcolor{red}{\textbf{0.128}}  &\textcolor{red}{\textbf{0.220}}   &\textcolor{blue}{\textbf{0.134}} &\textcolor{blue}{\textbf{0.231}} &0.140 &0.237 &0.139 &0.241 &0.139 &0.237 &0.142 &0.243 &0.154 &0.257 &0.150 &0.253 &0.140 &0.238 &0.299 &0.373 &0.420 &0.466 &0.231 &0.323      \\
         & 192  &\textcolor{red}{\textbf{0.143}}  &\textcolor{red}{\textbf{0.239}}  &\textcolor{blue}{\textbf{0.151}} &\textcolor{blue}{\textbf{0.247}} &0.163 &0.259 &\textcolor{blue}{\textbf{0.151}} &0.248 &0.156 &0.252 &0.163 &0.260 &0.171 &0.272 &0.164 &0.264 &0.160 &0.255 &0.305 &0.379 &0.411 &0.459 &0.261 &0.356      \\
         & 336  &\textcolor{red}{\textbf{0.158}}  &\textcolor{red}{\textbf{0.260}}   &\textcolor{blue}{\textbf{0.167}} &\textcolor{blue}{\textbf{0.267}} &0.180 &0.275 &0.169 &0.270 &0.175 &0.270 &0.173 &0.270 &0.196 &0.295 &0.181 &0.282 &0.180 &0.276 &0.319 &0.391 &0.434 &0.473 &0.360 &0.445      \\
         & 720  &\textcolor{red}{\textbf{0.193}}  &\textcolor{red}{\textbf{0.288}}  &\textcolor{blue}{\textbf{0.201}} &\textcolor{blue}{\textbf{0.298}} &0.241 &0.326 &0.240 &0.322 &0.233 &0.317 &0.237 &0.320 &0.263 &0.348 &0.223 &0.321 &0.241 &0.323 &0.369 &0.426 &0.510 &0.521 &0.530 &0.585      \\
         \cmidrule(lr){2-26}
         & Avg  &\textcolor{red}{\textbf{0.156}}  &\textcolor{red}{\textbf{0.252}}   &\textcolor{blue}{\textbf{0.163}} &\textcolor{blue}{\textbf{0.261}} &0.181 &0.274 &0.175 &0.270 &0.176 &0.269 &0.178 &0.273 &0.196 &0.293 &0.180 &0.280 &0.180 &0.273 &0.323 &0.392 &0.444 &0.480 &0.346 &0.427       \\
         \midrule[1pt]

         \multirow{5}*{\rotatebox{90}{Traffic}}
         & 96  &\textcolor{red}{\textbf{0.383}}  &\textcolor{red}{\textbf{0.266}}  &\textcolor{blue}{\textbf{0.395}} &\textcolor{blue}{\textbf{0.277}} &0.408 &0.292 &0.418 &0.291 &0.414 &0.297 &0.401 &0.285 &0.448 &0.329 &0.419 &0.298 &0.403 &0.289 &0.719 &0.416 &1.412 &0.802 &0.639 &0.400    \\
         & 192  &\textcolor{red}{\textbf{0.402}}  &\textcolor{red}{\textbf{0.274}}   &\textcolor{blue}{\textbf{0.407}} &\textcolor{blue}{\textbf{0.282}} &0.417 &0.295 &0.414 &0.296 &0.426 &0.301 &0.410 &0.293 &0.487 &0.360 &0.434 &0.305 &0.415 &0.296 &0.748 &0.428 &1.419 &0.806 &0.637 &0.416  \\
         & 336  &\textcolor{red}{\textbf{0.405}}  &\textcolor{red}{\textbf{0.278}}  &\textcolor{blue}{\textbf{0.417}} &\textcolor{blue}{\textbf{0.287}} &0.430 &0.302 &0.421 &0.311 &0.434 &0.303 &0.425 &0.314 &0.514 &0.372 &0.449 &0.313 &0.426 &0.304 &0.853 &0.471 &1.443 &0.815 &0.655 &0.427     \\
         & 720  &\textcolor{red}{\textbf{0.445}}  &\textcolor{red}{\textbf{0.299}}  &\textcolor{blue}{\textbf{0.450}} &\textcolor{blue}{\textbf{0.304}} &0.476 &0.331 &0.462 &0.327 &0.487 &0.337 &0.470 &0.330 &0.532 &0.383 &0.484 &0.336 &0.474 &0.331 &1.485 &0.825 &1.539 &0.837 &0.722 &0.456      \\
         \cmidrule(lr){2-26}
         & Avg  &\textcolor{red}{\textbf{0.409}}  &\textcolor{red}{\textbf{0.279}}  &\textcolor{blue}{\textbf{0.417}} &\textcolor{blue}{\textbf{0.287}} &0.432 &0.305 &0.429 &0.306 &0.440 &0.310 &0.426 &0.305 &0.495 &0.361 &0.447 &0.313 &0.430 &0.305 &0.951 &0.535 &1.453 &0.815 &0.663 &0.425  \\
         \midrule[1pt]

        \multicolumn{2}{c|}{$1^{st}$ Count} &\multicolumn{2}{c|}{\textcolor{red}{\textbf{39}}}  &\multicolumn{2}{c|}{5}  &\multicolumn{2}{c|}{\textcolor{blue}{\textbf{9}}}  &\multicolumn{2}{c|}{1}  &\multicolumn{2}{c|}{0}  &\multicolumn{2}{c|}{\textcolor{blue}{\textbf{9}}}  &\multicolumn{2}{c|}{0}  &\multicolumn{2}{c|}{0}  &\multicolumn{2}{c|}{0}  &\multicolumn{2}{c|}{0}  &\multicolumn{2}{c|}{0}  &\multicolumn{2}{c}{0} \\
         \bottomrule[2pt]
    \end{tabular}
    }
    \label{tab:few_shot_10}
\end{table*}

\paragraph{Full-shot Forecasting.}
Table~\ref{tab:full_shot} reports the full-shot forecasting results on ETTh2, ETTm2, Weather, and Traffic datasets, corresponding to the aggregated visualization in Figure~\ref{fig:full_shot}. 
Overall, Olivia consistently achieves strong performance across all four datasets and prediction horizons, yielding the highly competitive average MSE and MAE in most cases. 
Compared with pretrained time series foundation models (e.g., SEMPO) and LLM-based approaches (e.g., GPT4TS and {$\text{S}^{2}$IP-LLM}), Olivia demonstrates more stable accuracy across diverse datasets and horizons. 
In contrast, task-specific forecasting models such as iTransformer, DLinear, and PatchTST generally deliver inferior performance compared with Olivia across most datasets and prediction horizons.
These results indicate that Olivia’s PSD-consistent temporal representations generalize effectively under the full-shot setting, providing robust performance across diverse datasets.

\begin{table*}[!b]
    \centering
    \caption{Full-shot results on the ETTh2, ETTm2, Weather and Traffic datasets.
    The best results are in \textcolor{red}{\textbf{red}}. 
    Avg means the average results from all four prediction lengths.
    }
    \scalebox{0.75}{
    \begin{tabular}{l | c|c c|c c|c c| c c|c c |c c |c c }
    \toprule[2pt]
       \multicolumn{2}{c|}{Models}  &\multicolumn{2}{c|}{\textbf{Olivia}} &\multicolumn{2}{c|}{SEMPO} &\multicolumn{2}{c|}{GPT4TS} &\multicolumn{2}{c|}{$\text{S}^{2}$IP-LLM}  &\multicolumn{2}{c|}{iTransformer} &\multicolumn{2}{c|}{DLinear} &\multicolumn{2}{c}{PatchTST}   \\

       \cmidrule(r){1-2}\cmidrule(lr){3-4}\cmidrule(lr){5-6}\cmidrule(lr){7-8}\cmidrule(lr){9-10}\cmidrule(lr){11-12}\cmidrule(lr){13-14}\cmidrule(lr){15-16}
       
       \multicolumn{2}{c|}{Metrics} &MSE &MAE &MSE &MAE &MSE &MAE &MSE &MAE &MSE &MAE &MSE &MAE &MSE &MAE \\
       \midrule[1pt]
         \multirow{5}*{\rotatebox{90}{ETTh2}}
         & 96   &\textcolor{red}{\textbf{0.269}}   &\textcolor{red}{\textbf{0.330}}    &0.273 &0.334 &0.285 &0.342 &0.278 &0.340 &0.297 &0.348 &0.302 &0.368 &0.274 &0.337  \\
         & 192   &0.324   &\textcolor{red}{\textbf{0.372}}    &0.333 &0.376 &0.354 &0.389 &\textcolor{red}{\textbf{0.246}} &0.385 &0.371 &0.403 &0.404 &0.433 &0.341 &0.382   \\
         & 336   &0.355   &0.401   &0.355 &0.400 &0.373 &0.407 &0.367 &0.406 &0.404 &0.428 &0.511 &0.498 &\textcolor{red}{\textbf{0.329}} &\textcolor{red}{\textbf{0.384}}   \\
         & 720   &0.397   &0.433  &0.399 &0.435 &0.406 &0.441 &0.400 &0.436 &0.424 &0.444 &0.815 &0.640 &\textcolor{red}{\textbf{0.379}} &\textcolor{red}{\textbf{0.422}}  \\
         
         \cmidrule(lr){2-16}
         & Avg   &0.336   &0.384    &0.340 &0.386 &0.355 &0.395 &\textcolor{red}{\textbf{0.323}} &0.392 &0.374 &0.406 &0.508 &0.485 &0.331 &\textcolor{red}{\textbf{0.381}}  \\
         \midrule[1pt]

         \multirow{5}*{\rotatebox{90}{ETTm2}}
         & 96   &\textcolor{red}{\textbf{0.155}}  &0.254   &0.160 &\textcolor{red}{\textbf{0.251}} &0.173 &0.262 &0.165 &0.257 &0.175 &0.266 &0.164 &0.255 &0.166 &0.256  \\
         & 192   &\textcolor{red}{\textbf{0.218}}  &\textcolor{red}{\textbf{0.290}}  &0.221 &0.294 &0.229 &0.301 &0.222 &0.299 &0.242 &0.312 &0.224 &0.304 &0.223 &0.296   \\
         & 336   &\textcolor{red}{\textbf{0.270}}  &0.330  &0.273 &\textcolor{red}{\textbf{0.328}} &0.286 &0.341 &0.277 &0.330 &0.282 &0.340 &0.277 &0.339 &0.274 &0.329 \\
         & 720   &0.353   &0.382  &\textcolor{red}{\textbf{0.349}}  &\textcolor{red}{\textbf{0.380}} &0.378 &0.401 &0.363 &0.390 &0.378 &0.398 &0.371 &0.401 &0.362 &0.385   \\
         
         \cmidrule(lr){2-16}
         & Avg   &\textcolor{red}{\textbf{0.249}}   &0.314   &0.251  &\textcolor{red}{\textbf{0.313}}  &0.266 &0.326 &0.257 &0.319 &0.269 &0.329 &0.259 &0.325 &0.256 &0.317  \\
         \midrule[1pt]

        \multirow{5}*{\rotatebox{90}{Weather}}
         & 96   &\textcolor{red}{\textbf{0.143}}   &\textcolor{red}{\textbf{0.189}}   &\textcolor{red}{\textbf{0.143}} &0.193 &0.162 &0.212 &0.145 &0.195 &0.159 &0.208 &0.170 &0.230 &0.149 &0.198  \\
         & 192   &\textcolor{red}{\textbf{0.185}}   &\textcolor{red}{\textbf{0.229}}   &0.189 &0.234 &0.204 &0.248 &0.190 &0.235 &0.200 &0.248 &0.212 &0.267 &0.194 &0.241  \\
         & 336   &\textcolor{red}{\textbf{0.237}}   &\textcolor{red}{\textbf{0.276}}   &0.240 &0.280 &0.254 &0.286 &0.243 &0.280 &0.253 &0.289 &0.257 &0.305 &0.245 &0.282  \\
         & 720   &0.320   &0.335   &0.325 &0.338 &0.326 &0.337 &\textcolor{red}{\textbf{0.312}} &\textcolor{red}{\textbf{0.326}} &0.321 &0.338 &0.318 &0.356 &0.314 &0.334   \\
         
         \cmidrule(lr){2-16}
         & Avg   &\textcolor{red}{\textbf{0.221}}   &\textcolor{red}{\textbf{0.257}}  &0.224 &0.261 &0.236 &0.271 &0.222 &0.259 &0.233 &0.271 &0.239 &0.289 &0.225 &0.264 \\
         \midrule[1pt]

        \multirow{5}*{\rotatebox{90}{Traffic}}
         & 96   &\textcolor{red}{\textbf{0.350}}   &0.250   &0.355 &\textcolor{red}{\textbf{0.246}} &0.388 &0.282 &0.379 &0.274 &0.363 &0.265 &0.411 &0.294 &0.360 &0.249 \\
         & 192   &\textcolor{red}{\textbf{0.369}}   &\textcolor{red}{\textbf{0.250}}  &0.373 &0.253 &0.407 &0.290 &0.397 &0.282 &0.385 &0.273 &0.421 &0.298 &0.379 &0.256  \\
         & 336   &\textcolor{red}{\textbf{0.383}}   &\textcolor{red}{\textbf{0.256}}   &0.384 &0.260 &0.412 &0.294 &0.407 &0.289 &0.396 &0.277 &0.431 &0.304 &0.392 &0.264 \\
         & 720   &\textcolor{red}{\textbf{0.420}}   &\textcolor{red}{\textbf{0.280}}   &0.427 &0.286 &0.450 &0.312 &0.440 &0.301 &0.445 &0.312 &0.468 &0.325 &0.432 &0.286  \\
         
         \cmidrule(lr){2-16}
         & Avg   &\textcolor{red}{\textbf{0.381}}   &\textcolor{red}{\textbf{0.260}}   &0.385 &0.261 &0.414 &0.294 &0.406 &0.286 &0.397 &0.282 &0.433 &0.305 &0.391 &0.264   \\
      
         \bottomrule[2pt]
    \end{tabular}
    }
    \label{tab:full_shot}
\end{table*}

\subsection{Comparison of Different Model variants}
Table~\ref{tab:model_versions} compares different model variants under the zero-shot setting by varying input context length for both Olivia and SEMPO.
Across all four datasets, $\text{Olivia}_{B}$ ($T=512$, 5.1M parameters) consistently achieves the best or highly competitive average performance among all Olivia variants, despite being the smallest configuration.
Increasing the context length from $\text{Olivia}_{B}$ to $\text{Olivia}_{E}$ ($T=1024$, 6.3M parameters) and $\text{Olivia}_{A}$ ($T=1536$, 9.6M parameters) does not lead to systematic improvements, and in several cases results in slightly degraded accuracy, indicating diminishing returns from extended temporal context under the same pretraining setup.

A similar pattern is observed for SEMPO, where $\text{SEMPO}_{B}$ ($T=512$, 6.5M parameters) often matches or outperforms its larger counterparts $\text{SEMPO}_{E}$ ($T=1024$, 7.3M parameters) and $\text{SEMPO}_{A}$ ($T=1536$, 9.9M parameters).
Notably, despite using fewer parameters, Olivia variants achieve forecasting performance comparable to, and in many cases better than, SEMPO variants across multiple datasets and prediction horizons. 
These results indicate that increasing model size or input context length alone does not necessarily improve zero-shot generalization.
Meanwhile, Olivia consistently attains comparable performance with fewer parameters, demonstrating favorable parameter efficiency under this evaluation setting.

\begin{table*}[!b]
    \centering
    \caption{Comparison of different model variants under the zero-shot setting.
The best results are highlighted in \textcolor{red}{\textbf{red}}.
Values in ( , ) denote model size and pretraining data scale, respectively.
    '\textit{B}': Base, '\textit{E}': Enhanced, '\textit{A}': Advanced.
    Avg means the average results from all four prediction lengths.
    }
    \scalebox{0.67}{
    \begin{tabular}{l | c|c c|c c|c c| c c|c c |c c  }
    \toprule[2pt]
       \multicolumn{2}{c|}{Models}  &\multicolumn{2}{c|}{$\text{Olivia}_{B}$} &\multicolumn{2}{c|}{$\text{Olivia}_{E}$} &\multicolumn{2}{c|}{$\text{Olivia}_{A}$} &\multicolumn{2}{c|}{$\text{SEMPO}_{B}$} &\multicolumn{2}{c|}{$\text{SEMPO}_{E}$}  &\multicolumn{2}{c}{$\text{SEMPO}_{A}$}  \\

       \multicolumn{2}{c|}{}  &\multicolumn{2}{c|}{(5.1M, 67M)} &\multicolumn{2}{c|}{(6.3M, 67M)} &\multicolumn{2}{c|}{(9.6M, 67M)} &\multicolumn{2}{c|}{(6.5M, 83M)} &\multicolumn{2}{c|}{(7.3M, 83M)}  &\multicolumn{2}{c}{(9.9M, 83M)} \\
              
       \cmidrule(r){1-2}\cmidrule(lr){3-4}\cmidrule(lr){5-6}\cmidrule(lr){7-8}\cmidrule(lr){9-10}\cmidrule(lr){11-12}\cmidrule(lr){13-14}
       
       \multicolumn{2}{c|}{Metrics} &MSE &MAE &MSE &MAE &MSE &MAE &MSE &MAE &MSE &MAE &MSE &MAE  \\
       \midrule[1pt]
         \multirow{5}*{\rotatebox{90}{ETTh1}}
         & 96  &\textcolor{red}{\textbf{0.370}}     &\textcolor{red}{\textbf{0.397}}    &0.375    &0.408    &0.381    &0.414   &0.384 &0.408 &0.392 &0.419 &0.392 &0.423  \\
         
         & 192  &\textcolor{red}{\textbf{0.396}}    &\textcolor{red}{\textbf{0.415}}     &0.402    &0.427    &0.410    &0.434   &0.409 &0.426 &0.415 &0.433 &0.422 &0.443  \\
         
         & 336   &\textcolor{red}{\textbf{0.407}}    &\textcolor{red}{\textbf{0.424}}    &0.415    &0.437    &0.426    &0.445     & 0.417 & 0.433 & 0.423 & 0.440 & 0.435 & 0.452 \\
         
         & 720   &\textcolor{red}{\textbf{0.424}}    &\textcolor{red}{\textbf{0.447}}    &0.442    &0.465    &0.448    &0.469   &0.432 &0.454 &0.440 &0.465 &0.447 &0.468 \\
         
         \cmidrule(lr){2-14}
         & Avg   &\textcolor{red}{\textbf{0.399}}    &\textcolor{red}{\textbf{0.421}}    &0.409    &0.434    &0.416    &0.441   &0.410 &0.430 &0.417 &0.439 &0.424 &0.446  \\
         \midrule[1pt]

        \multirow{5}*{\rotatebox{90}{ETTh2}}
         & 96  &\textcolor{red}{\textbf{0.274}}    &\textcolor{red}{\textbf{0.340}}    &0.299    &0.353    &0.310    &0.363   &0.282 &0.342 &0.309 &0.365 &0.308 &0.362 \\
         
         & 192  &\textcolor{red}{\textbf{0.328}}     &\textcolor{red}{\textbf{0.378}}     &0.352    &0.390    &0.370    &0.403     &0.334 &0.384 &0.362 &0.400 &0.370 &0.405 \\
         
         & 336   &0.359    &\textcolor{red}{\textbf{0.402}}    &0.381    &0.413    &0.387    &0.418    &\textcolor{red}{\textbf{0.355}}  &0.403 &0.378 &0.415 &0.388 &0.420 \\
         
         & 720   &\textcolor{red}{\textbf{0.393}}    &\textcolor{red}{\textbf{0.432}}    &0.402    &\textcolor{red}{\textbf{0.432}}    &0.412    &0.436   &0.395 &0.435 &0.399 &0.417 &0.409 &0.440   \\
         
         \cmidrule(lr){2-14}
         & Avg   &\textcolor{red}{\textbf{0.339}}    &\textcolor{red}{\textbf{0.388}}    &0.359    &0.397    &0.370    &0.405   &0.341 &0.391 &0.362 &0.399 &0.368 &0.406  \\
         \midrule[1pt]

        \multirow{5}*{\rotatebox{90}{ETTm2}}
         & 96  &0.193    &0.286    &0.196    &0.286    &0.199    &0.289   &0.196 &0.286 &0.194 &0.284 &\textcolor{red}{\textbf{0.191}} &\textcolor{red}{\textbf{0.283}}  \\
         
         & 192  &0.256     &0.323    &0.249    &0.320    &0.252    &0.322    &0.252 &0.323 &\textcolor{red}{\textbf{0.245}} &\textcolor{red}{\textbf{0.318}} &0.247 &0.320  \\
         
         & 336   &0.314    &0.358    &0.298    &0.350    &0.292   &\textcolor{red}{\textbf{0.345}}   &0.306 &0.354 &0.292 &0.347 &\textcolor{red}{\textbf{0.286}} &\textcolor{red}{\textbf{0.345}}   \\
         
         & 720   &0.400    &0.409    &0.383    &0.398    &0.370    &0.392   &0.391 &0.404 &0.372 &0.395 &\textcolor{red}{\textbf{0.360}} &\textcolor{red}{\textbf{0.390}}  \\
         
         \cmidrule(lr){2-14}
         & Avg   &0.291    &0.344    &0.282    & 0.339   &0.278    &0.337   &0.286 &0.341 &0.275 &0.336 &\textcolor{red}{\textbf{0.271}} &\textcolor{red}{\textbf{0.334}}  \\
         \midrule[1pt]
         
        \multirow{5}*{\rotatebox{90}{Weather}}
         & 96  &\textcolor{red}{\textbf{0.169}}    &0.228    &0.172    &\textcolor{red}{\textbf{0.227}}    &0.179    &0.239   &0.171 &0.228 &0.174 &0.231 &0.177 &0.234 \\
         
         & 192  &\textcolor{red}{\textbf{0.216}}     &\textcolor{red}{\textbf{0.266}}     &0.217    &\textcolor{red}{\textbf{0.266}}    &0.226    &0.276   &0.218 &0.269 &0.217 &0.268 &0.231 &0.278  \\
         
         & 336   &0.266     &0.303    &0.267    &0.302    &0.272    &0.307   &0.267 &0.304 &\textcolor{red}{\textbf{0.261}} &\textcolor{red}{\textbf{0.299}} &0.273 &0.307  \\
         
         & 720   &0.335    &0.352    &0.332    &\textcolor{red}{\textbf{0.343}}    &0.331    &0.348  &0.336 &0.350 &\textcolor{red}{\textbf{0.325}} &\textcolor{red}{\textbf{0.343}} &0.334 &0.346  \\
         
         \cmidrule(lr){2-14}
         & Avg   &0.247     &0.287    &0.247    &\textcolor{red}{\textbf{0.285}}    &0.252    &0.293   &0.248 &0.287 &\textcolor{red}{\textbf{0.244}} &\textcolor{red}{\textbf{0.285}} &0.253 &0.291  \\
         \midrule[1pt]

         \multicolumn{2}{c|}{$1^{st}$ Count} &\multicolumn{6}{c|}{\textcolor{red}{\textbf{27}}} &\multicolumn{6}{c}{\textcolor{red}{\textbf{17}}}  \\
 
         \bottomrule[2pt]
    \end{tabular}
    }
    \label{tab:model_versions}
\end{table*}

\subsection{Scaling Analysis}
Figures~\ref{fig:scaling_analysis}(a)-(b) show a consistent non-monotonic trend on both the Weather and Traffic datasets as the scale of pretraining data increases. Forecasting error initially decreases when moving from a small to a moderate data scale, indicating that additional data helps the model capture more representative temporal patterns. However, beyond this range, further data expansion no longer yields improvements and instead leads to gradual performance degradation.
To this end, we discuss several plausible factors that may underlie this observed behavior.

One natural point of reference is the notion of scaling laws~\cite{kaplan2020scaling,hestness2017deep}, which describe empirical relationships between model performance and data or model scale, and have been widely validated in domains such as natural language processing~\cite{chen2025revisiting,isik2024scaling} and computer vision~\cite{wang2025scaling,zhai2022scaling}. In these areas, larger datasets primarily improve coverage of semantic or visual variability. 
Compared with language and image data, time series data are generated by underlying dynamical systems and exhibit stronger temporal dependencies, domain-specific time scales, and structured spectral properties~\cite{wang2024deep}.
Consequently, increasing pretraining data scale not only adds samples but also introduces heterogeneous temporal statistics across datasets, making effective scaling more dependent on how such structures are coordinated.
In addition, model capacity may also play a role. Olivia is a lightweight foundation model with 5.1M parameters, which offers clear efficiency benefits but may limit its ability to fully absorb increasingly large and heterogeneous pretraining corpora. Taken together, these observations suggest that scaling in time series pretraining is influenced by multiple interacting factors, including data composition, temporal structure consistency, and model capacity. Effective scaling therefore requires a careful balance among these factors, rather than relying on data expansion alone.

Figures~\ref{fig:scaling_analysis}(c)–(d) examine the effect of model depth on zero-shot forecasting performance for the Electricity and Traffic datasets. As the number of layers increases from 2 to 4, the forecasting error decreases noticeably on both datasets, suggesting that moderate depth helps the model capture more expressive temporal representations. However, further increasing the depth beyond this point does not lead to continued improvements, and performance instead exhibits mild degradation.
This behavior indicates that the benefits of increasing model depth are subject to diminishing returns in the considered setting. While deeper architectures offer greater representational capacity, they may also introduce additional optimization difficulty or over-parameterization relative to the scale and structure of the pretraining data. Notably, the best-performing configurations remain consistently below the SEMPO baseline across both datasets, highlighting that Olivia achieves a favorable balance between model capacity and efficiency within a compact architectural regime.

\begin{figure}[h]
    \centering
    \begin{subfigure}[b]{0.241\textwidth}
        \centering
        \includegraphics[width=\textwidth]{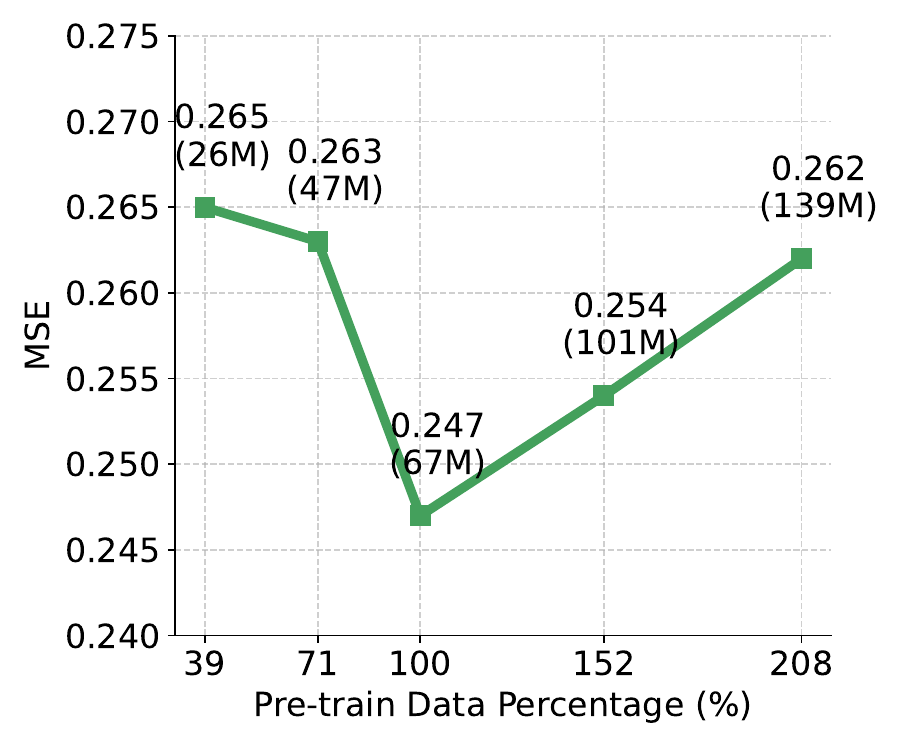}
        \vspace{-6mm}
        \caption{Weather}
    \end{subfigure}
    \hspace{-2mm}
    \begin{subfigure}[b]{0.241\textwidth}
        \centering
        \includegraphics[width=\textwidth]{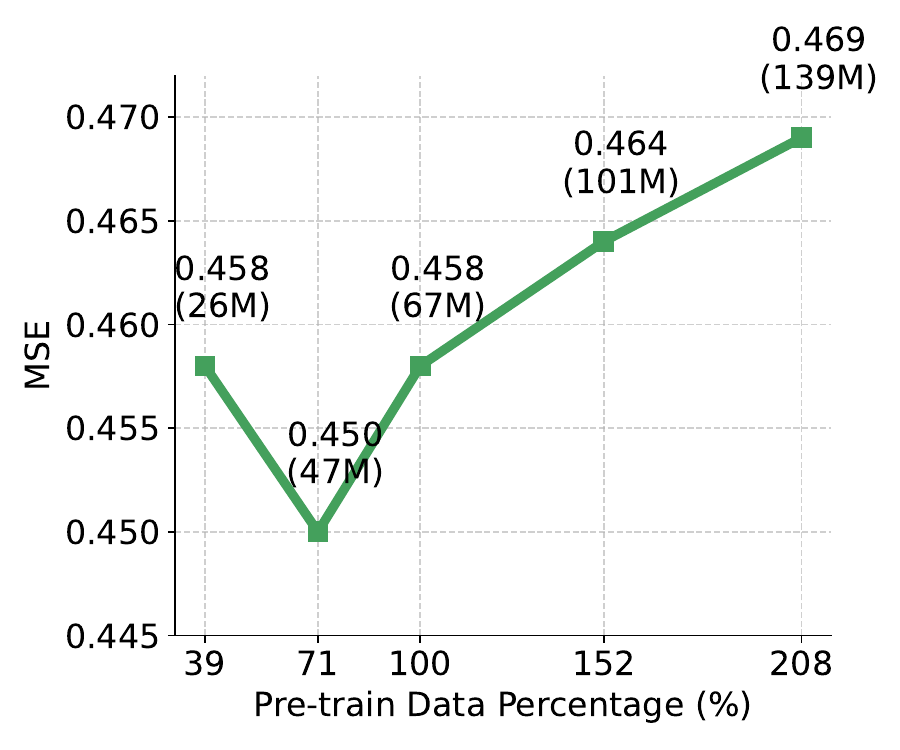}
        \vspace{-6mm}
        \caption{Traffic}
    \end{subfigure}
    \hspace{-2mm}
    \begin{subfigure}[b]{0.241\textwidth}
        \centering
        \includegraphics[width=\textwidth]{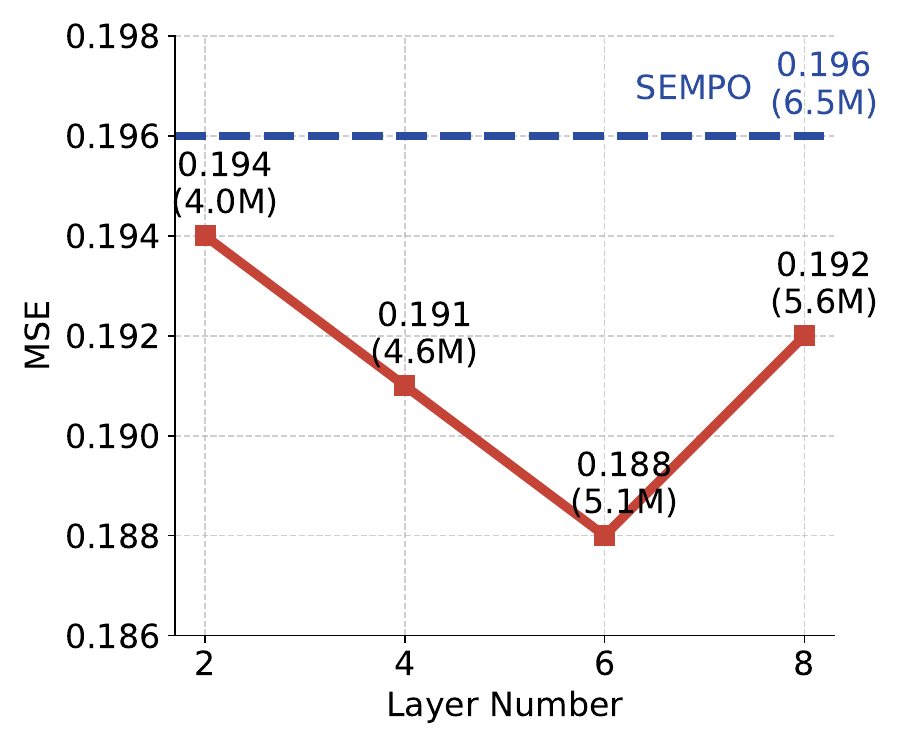}
        \vspace{-6mm}
        \caption{Electricity}
    \end{subfigure}
    \hspace{-2mm}
    \begin{subfigure}[b]{0.241\textwidth}
        \centering
        \includegraphics[width=\textwidth]{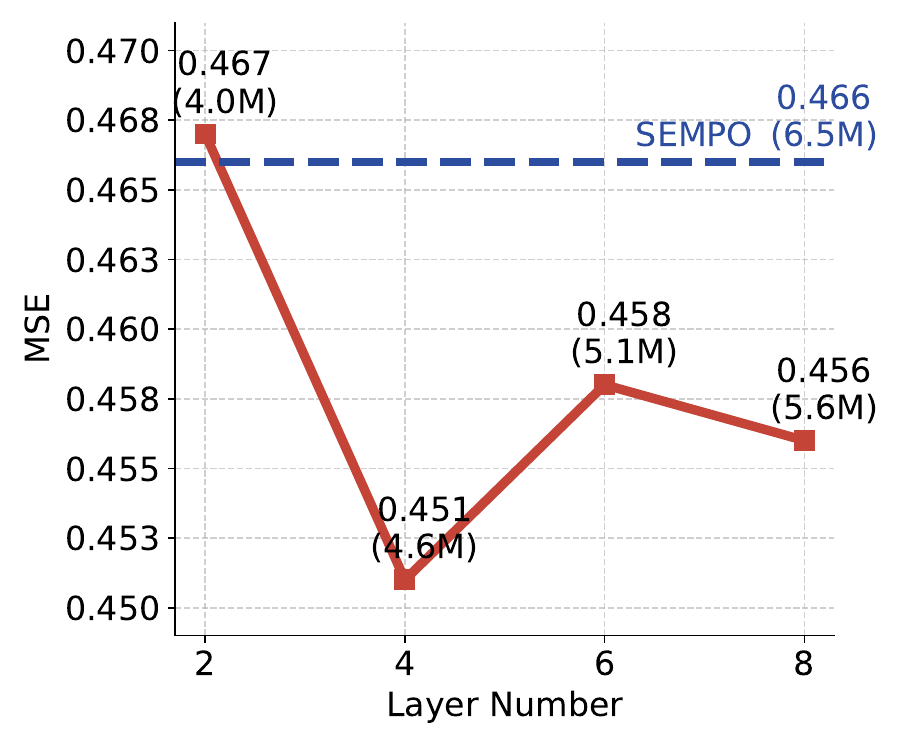}
        \vspace{-6mm}
        \caption{Traffic}
    \end{subfigure}
    \vspace{-2mm}
    \caption{Scaling analysis under the zero-shot setting, considering both model size and the scale of pretraining datasets. The reported results are averaged across all prediction lengths.}
    \label{fig:scaling_analysis}
\end{figure}

\subsection{Parameter Sensitivity Analysis}
\paragraph{The number of learnable Householder reflections $K$.}
Figure~\ref{fig:Q_K_param_sensitivity} presents the sensitivity analysis with respect to the number of learnable Householder reflections $K$ used to construct the orthogonal matrix $\mathbf{Q}$.
Across both ETTh1 and ETTh2, the performance exhibits a clear non-monotonic trend as $K$ varies, with moderate values consistently yielding better results than either very small or excessively large configurations.
In particular, setting $K=256$ achieves the lowest MSE on both datasets, indicating a favorable balance between transformation expressiveness and optimization stability.
When $K$ is too small, the resulting orthogonal transformation lacks sufficient capacity to reorganize temporal correlations effectively, leading to suboptimal forecasting performance.
Conversely, overly large $K$ introduces additional computational overhead and may increase optimization difficulty without providing commensurate gains.
These observations suggest that the proposed Householder-based parameterization benefits from a moderate number of reflections, supporting our design choice of using a fixed, intermediate $K$ in all experiments.

\begin{figure}[h]
    \centering
    \begin{subfigure}[b]{0.3\textwidth}
        \centering
        \includegraphics[width=\textwidth]{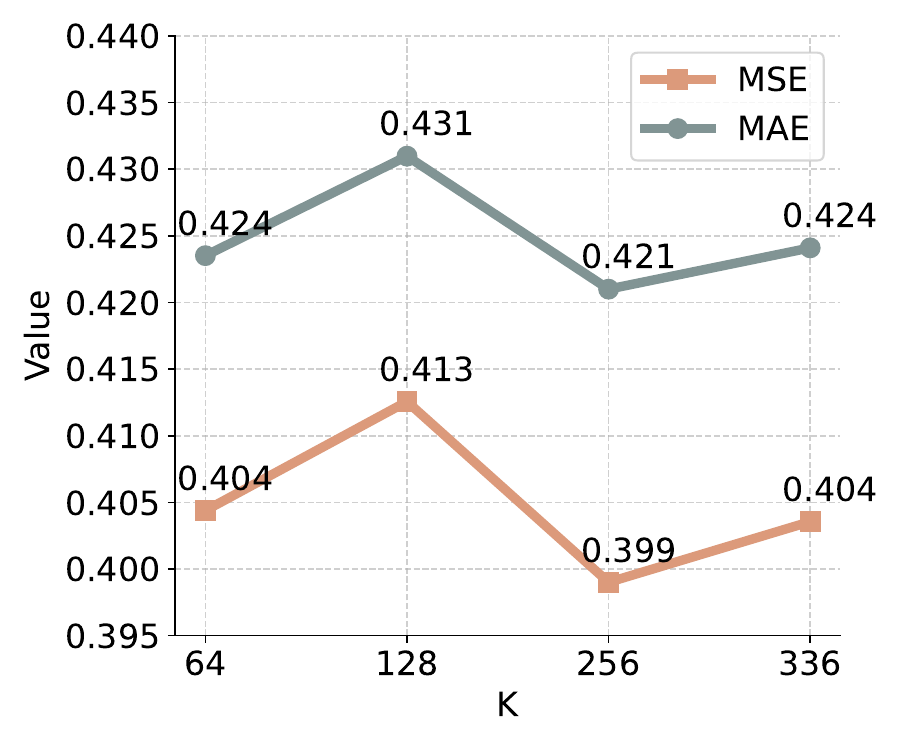}
        \vspace{-6mm}
        \caption{ETTh1}
    \end{subfigure}
    \begin{subfigure}[b]{0.3\textwidth}
        \centering
        \includegraphics[width=\textwidth]{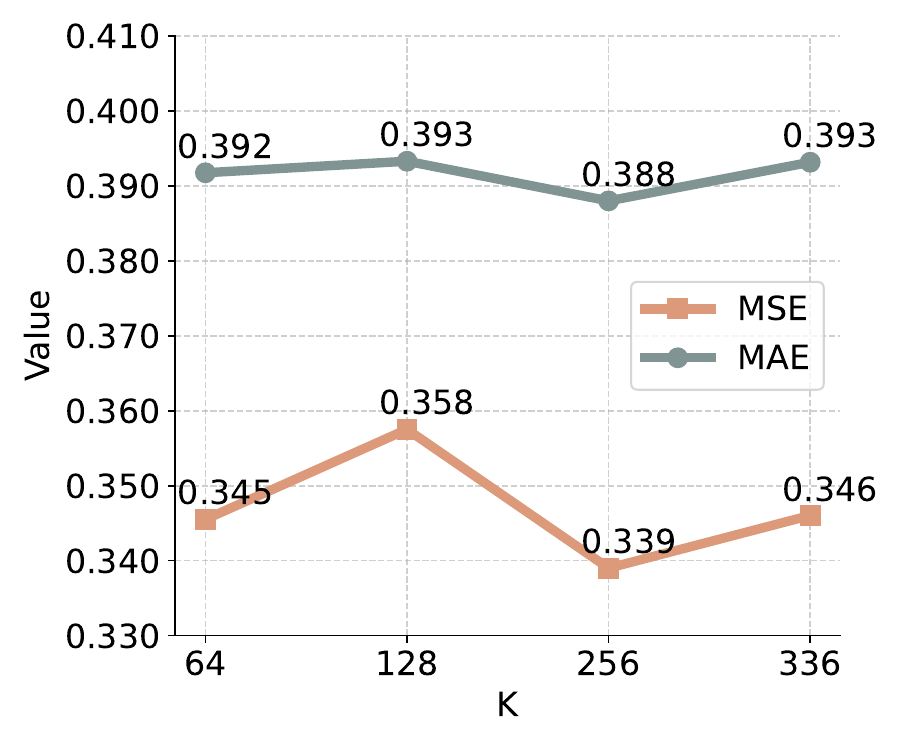}
        \vspace{-6mm}
        \caption{ETTh2}
    \end{subfigure}
    \vspace{-2mm}
    \caption{Parameter sensitivity analysis on the number $K$ of learnable Householder reflections $\mathbf{H}_{k}$ used to construct the orthogonal matrix $\mathbf{Q}$. The reported results are averaged across all prediction lengths.}
    \label{fig:Q_K_param_sensitivity}
\end{figure}

\paragraph{The number of resonators $M$.}
The design of HarmonicAttention is motivated by the low-rank structure of token interactions in the PSD-aligned space (Proposition~\ref{prop:token_low_rank}), where most temporal energy is concentrated in a small number of dominant modes. Accordingly, $M$ serves as an efficiency--capacity trade-off parameter rather than a fragile bottleneck. To empirically validate this, we conduct a sensitivity analysis with $M \in \left\{L/4, L/2, L\right\}$, where the results are averaged over different forecasting horizons. As shown in Table~\ref{tab:sensitivity_resonators_M}, smaller values (e.g., $M=L/4$) already achieve the best performance across nearly all datasets, while increasing $M$ does not bring further improvements and may even lead to slight degradation. These results suggest that the proposed mechanism can effectively capture the dominant interaction structure without requiring a large number of resonators.

\begin{table*}[h]
\centering
\caption{Sensitivity analysis of the number of resonators $M$. Results are averaged over the four prediction horizons.}
\scalebox{0.85}{
\begin{tabular}{l|cc|cc|cc|cc|cc|cc}
\toprule[2pt]
{Datasets} &\multicolumn{2}{c|}{ETTh1} &\multicolumn{2}{c|}{ETTh2} &\multicolumn{2}{c|}{ETTm2} &\multicolumn{2}{c|}{Weather} &\multicolumn{2}{c|}{Electricity} &\multicolumn{2}{c}{Traffic} \\
\cmidrule{1-1}\cmidrule{2-3}\cmidrule{4-5}\cmidrule{6-7}\cmidrule{8-9}\cmidrule{10-11}\cmidrule{12-13}
Metrics &MSE &MAE &MSE &MAE &MSE &MAE &MSE &MAE &MSE &MAE &MSE &MAE \\
\midrule
$M = L/4$
& \textbf{0.399} & \textbf{0.421}
& \textbf{0.339} & \textbf{0.388}
& \textbf{0.291} & \textbf{0.344}
& \textbf{0.247} & \textbf{0.287}
& \textbf{0.188} & \textbf{0.285}
& \textbf{0.458} & \textbf{0.330} \\
    
$M = L/2$
& 0.404 & 0.425
& 0.345 & 0.391
& 0.298 & 0.350
& 0.251 & 0.290
& 0.190 & 0.286
& 0.462 & \textbf{0.330} \\

$M = L$
& 0.404 & 0.424
& 0.350 & 0.393
& 0.305 & 0.354
& 0.255 & 0.292
& 0.191 & 0.288
& 0.459 & 0.332 \\
\bottomrule[2pt]
\end{tabular}
}\label{tab:sensitivity_resonators_M}
\end{table*}

\subsection{Additional Results}\label{appendix_additional_result}
\paragraph{Detailed Ablation Results on HarmonicAttention.}
Table~\ref{tab:abla_attn_detail} reports the detailed ablation results of HarmonicAttention across individual prediction horizons, serving as the full version of the averaged results summarized in Table~\ref{tab:abla_attn} of the main text. The horizon-wise results reveal consistent and systematic trends that are obscured by averaging.
\begin{table}[!h]
    \centering
    \caption{Ablations on HarmonicAttention, where we replace HarmonicAttention with other attention mechanisms.}
    \scalebox{0.85}{
    \begin{tabular}{c | c |c c| c c | c c   }
    \toprule[1.5pt]
       \multicolumn{2}{c|}{Datasets}  &\multicolumn{2}{c|}{ETTh1}  &\multicolumn{2}{c|}{Electricity} &\multicolumn{2}{c}{Weather}\\
       \cmidrule(r){1-2}\cmidrule(r){3-4}\cmidrule(r){5-6}\cmidrule(r){7-8}
       \multicolumn{2}{c|}{Metrics} &MSE &MAE  &MSE &MAE &MSE &MAE \\
       \midrule

       \multirow{4}*{{HarmonicAttention}}
        & 96  &0.370    &0.397     &0.158    &0.260    &0.169     & 0.228  \\
        & 192  & 0.396   &0.415    &0.173    &0.273    &0.216     & 0.266   \\
        & 336  &0.407    &0.424    &0.190    &0.288     &0.266    &0.303    \\
        & 720  &0.424    &0.447    &0.230    &0.320    & 0.335   &0.352   \\
       \midrule

        \multirow{4}*{{Full Attention}}
        & 96  &0.417    &0.434    &0.170    &0.272    &0.183    &0.237   \\
        & 192  &0.448    & 0.457   &0.187    & 0.286   &0.233    &0.281   \\
        & 336  &0.477    &0.476    &0.207    &0.304    &0.294    &0.328   \\
        & 720  &0.545    &0.513    &0.252    &0.339    & 0.361   &0.374   \\
       \midrule

        \multirow{4}*{{Linear Attention}}
        & 96  &0.376    &0.402    &0.163    &0.266    &0.176    &0.233   \\
        & 192  &0.407    &0.422    &0.180    &0.280    &0.224    &0.276   \\
        & 336  &0.425    & 0.434   &0.198    &0.295    &0.278    &0.317   \\
        & 720  &0.438    &0.454    &0.236    &0.327    &0.351    &0.365   \\
       \midrule

        \multirow{4}*{{Nyström Attention}}
        & 96  &0.416    &0.436    &0.178    &0.277    &0.183    &0.237   \\
        & 192  &0.458    &0.465    &0.201   &0.295    &0.231    &0.280   \\
        & 336  &0.499    &0.492    &0.221    &0.313    &0.289    &0.324   \\
        & 720  &0.579    &0.536    &0.264    &0.345    &0.362    &0.373   \\

    \bottomrule[1.5pt]
    \end{tabular}
    }
    \label{tab:abla_attn_detail}
    \vspace{-3mm}
\end{table}

\paragraph{Robustness Analysis.}
To evaluate the robustness of Olivia under the zero-shot forecasting setting, we report the results over five independent training runs with different random seeds. As shown in Table~\ref{tab:robustness}, the standard deviations of both MSE and MAE remain consistently small across all datasets and forecasting horizons, indicating stable optimization behavior and low sensitivity to initialization. In particular, most fluctuations are within the range of $0.000$--$0.003$, demonstrating highly consistent performance across different runs. These results verify the robustness and reliability of Olivia in zero-shot forecasting scenarios.

\begin{table*}[t]
\centering
\caption{Robustness of Olivia under the zero-shot forecasting setting. Results are reported over five independent training runs from scratch using different random seeds.}
\scalebox{0.8}{
\begin{tabular}{c|cc|cc|cc}
\toprule[2pt]
{Dataset} & \multicolumn{2}{c|}{ETTh1} & \multicolumn{2}{c|}{ETTh2} & \multicolumn{2}{c}{ETTm2} \\
\cmidrule{1-1}\cmidrule{2-3}\cmidrule{4-5}\cmidrule{6-7}
{Horizon} & MSE & MAE & MSE & MAE & MSE & MAE \\
\midrule
96  &$0.370 \pm 0.001$ &$0.397 \pm 0.001$  &$0.274 \pm 0.003$ &$0.340 \pm 0.002$   &$0.193 \pm 0.000$ &$0.286 \pm 0.000$  \\
192  &$0.396 \pm 0.001$ &$0.415 \pm 0.002$  &$0.328 \pm 0.002$ &$0.378 \pm 0.002$  &$0.256 \pm 0.000$ &$0.323 \pm 0.001$ \\
336  &$0.407 \pm 0.001$ &$0.424 \pm 0.002$  &$0.359 \pm 0.002$ &$0.402 \pm 0.002$  &$0.314 \pm 0.001$ &$0.358 \pm 0.001$ \\
720  &$0.424 \pm 0.001$ &$0.447 \pm 0.002$  &$0.393 \pm 0.004$ &$0.432 \pm 0.002$  &$0.400 \pm 0.001$ &$0.409 \pm 0.001$  \\
\midrule
Avg   &$0.399 \pm 0.001$  &$0.421 \pm 0.002$  &$0.339 \pm 0.003$ &$0.388 \pm 0.002$  &$0.291 \pm 0.001$ &$0.344 \pm 0.001$ \\

\midrule[1.2pt]
{Dataset} & \multicolumn{2}{c|}{Weather} & \multicolumn{2}{c|}{Electricity} & \multicolumn{2}{c}{Traffic} \\
\cmidrule{1-1}\cmidrule{2-3}\cmidrule{4-5}\cmidrule{6-7}
{Horizon} & MSE & MAE & MSE & MAE & MSE & MAE \\
\midrule
96   &$0.169 \pm 0.001$ &$0.228 \pm 0.001$  &$0.158 \pm 0.001$ &$0.260 \pm 0.002$  &$0.432 \pm 0.000$ &$0.318 \pm 0.000$ \\
192  &$0.216 \pm 0.002$ &$0.266 \pm 0.000$  &$0.173 \pm 0.002$ &$0.273 \pm 0.002$  &$0.448 \pm 0.000$ &$0.325 \pm 0.000$  \\
336  &$0.266 \pm 0.002$ &$0.303 \pm 0.001$  &$0.190 \pm 0.002$ &$0.288 \pm 0.001$  &$0.460 \pm 0.001$ &$0.331 \pm 0.002$ \\
720  &$0.335 \pm 0.001$ &$0.352 \pm 0.001$  &$0.230 \pm 0.003$ &$0.320 \pm 0.002$  &$0.493 \pm 0.000$ &$0.347 \pm 0.001$  \\
\midrule
Avg  &$0.247 \pm 0.002$ &$0.287 \pm 0.001$  &$0.188 \pm 0.002$ &$0.285 \pm 0.002	$  &$0.458 \pm 0.000$ &$0.330 \pm 0.001$ \\
\bottomrule[2pt]
\end{tabular}
}\label{tab:robustness}
\end{table*}

\paragraph{Efficiency Analysis.}
In addition to the efficiency analysis conducted on the ETTh1 dataset in the main text (Table~\ref{tab:efficiency_ETTh1}), we further evaluate the computational efficiency of Olivia on the Weather dataset under the zero-shot setting, with detailed results reported in Table~\ref{tab:efficiency_weather}.
Consistent with the observations on ETTh1, Olivia achieves competitive forecasting accuracy on Weather while maintaining a compact model size (5.1M parameters). Although its inference time is higher than that of SEMPO, this overhead primarily stems from the Harmonizer module, where the orthogonal temporal transformation matrix $\mathbf{Q} \in \mathbb{R}^{T \times T}$ is explicitly constructed via the sequential multiplication of $K$ learnable Householder reflections. In our implementation, we set $K=256$, which introduces additional but moderate computational cost during inference.

Taken together, the results in Table~\ref{tab:efficiency_ETTh1} and Table~\ref{tab:efficiency_weather} demonstrate that Olivia consistently strikes a favorable balance between forecasting accuracy, model capacity, and inference efficiency across datasets. Despite the added cost introduced by the Harmonizer, Olivia remains substantially more efficient than large-scale foundation models such as the Time-MoE family, while achieving superior accuracy compared to lightweight baselines. This confirms that the proposed design offers an effective accuracy–efficiency trade-off in cross-domain zero-shot forecasting scenarios.

Based on these efficiency results, a promising direction for future work is to further investigate more efficient strategies for constructing or approximating the orthogonal transformation matrix $\mathbf{Q}$, which could reduce inference overhead while preserving the benefits of PSD-consistent temporal reorganization.

\begin{table}[h]
\centering
\caption{Efficiency comparison on ETTh1 dataset under the zero-shot scenario. The reported MSE and inference time values are averaged over all prediction horizons.}
\label{tab:efficiency_ETTh1_detail}
\scalebox{0.8}{
\begin{tabular}{lccc}
\toprule[1.5pt]
Model &MSE &Inference Time (s) &Model Size (M) \\
\midrule
Olivia           & \textbf{0.399} & 43.051  & \textbf{5.1}   \\
SEMPO            & 0.410 & \textbf{8.162}   & 6.5   \\
$\text{Time-MoE}_{L}$   & 0.435 & 10231.867 & 453   \\
$\text{Time-MoE}_{B}$    & 0.445 & 4050.209 & 113   \\
Timer            & 0.451 & 103.822 & 67.4  \\
$\text{Moirai}_{L}$     & 0.466 & 67.906  & 311   \\
$\text{Moirai}_{B}$    & 0.433 & 60.455  & 91    \\
$\text{Moirai}_{S}$    & 0.448 & 54.330  & 14    \\
Sundial          & 0.464 & 67.901  & 128   \\
\bottomrule[1.5pt]
\end{tabular}
}
\end{table}

\begin{table}[h]
\centering
\caption{Efficiency comparison on Weather dataset under the zero-shot scenario. The reported MSE and inference time values are averaged over all prediction horizons.}
\label{tab:efficiency_weather}
\scalebox{0.8}{
\begin{tabular}{lccc}
\toprule[1.5pt]
Model &MSE &Inference Time (s) &Model Size (M) \\
\midrule
Olivia          &\textbf{0.247}   &541.937   &\textbf{5.1}  \\
SEMPO            &0.248   &\textbf{163.310}   &6.5  \\
$\text{Time-MoE}_{L}$   &0.318   &86771.882   &453  \\
$\text{Time-MoE}_{B}$     &0.279   &53004.187   &113  \\
Timer          &0.292   &1299.640   &67.4  \\
$\text{Moirai}_{L}$     &0.477   &395.800   &311  \\
$\text{Moirai}_{B}$    &0.312   &255.146   &91  \\
$\text{Moirai}_{S}$     &0.267   &191.978   &14  \\
Sundial          &0.296    &353.997   &128  \\
\bottomrule[1.5pt]
\end{tabular}
}
\end{table}

\end{document}